\def\eqref#1{equation~\ref{#1}}
\def\1{\bm{1}}
\DeclareMathAlphabet{\mathsfit}{\encodingdefault}{\sfdefault}{m}{sl}
\SetMathAlphabet{\mathsfit}{bold}{\encodingdefault}{\sfdefault}{bx}{n}
\theoremstyle{definition}
\newtheorem{theorem}{Theorem}[section]
\newcommand{\td}{{\bf\color{red} FIXME~}}
\title{\textit{First is Not Really Better Than Last}: Evaluating Layer Choice and Aggregation Strategies in Language Model Data Influence Estimation}
\author{%
  Dmytro~Vitel \& Anshuman~Chhabra 
%   \thanks{Use footnote for providing further information
% about author (webpage, alternative address)---\emph{not} for acknowledging
% funding agencies.  Funding acknowledgements go at the end of the paper.}
  \\
  % \thanks{Use footnote for providing further information
  %   about author (webpage, alternative address)---\emph{not} for acknowledging
  %   funding agencies.} \\
  Bellini College of Artificial Intelligence, Cybersecurity, and Computing, \\
  University of South Florida\\
  \texttt{\{dvitel,anshumanc\}@usf.edu}\\
}
\begin{document}

\maketitle
\vspace{-5mm}

\begin{abstract}
\vspace{-2mm}

% Identifying influential training samples allows for model decision explanation and data debugging. To a high degree, the task of detecting data anomalies aligns with the search for the least influential samples according to the present data attribution methods. Because these attributions are done across many model parameters, selecting those parameters that give the best downstream task performance in a feasible computational time is essential. Previous works consider either a classification head or the embeddings for such selection. This work, in contrast, demonstrates that middle attention layers of deep models capture detrimental samples more efficiently than others. We also target a more general question of combining attributions measured around the network with minimal influence score compensation. As an alternative to commonly used influence averaging, we propose ranking and voting that significantly boost model accuracy after anomaly filtering across many configurations. Since the validation of new methods requires expensive model retraining after filtering, it is crucial to have a good proxy measure, the predictive indicator of expected performance. We show that the previously proposed cancellation effect~\citep{Yen-2022} does not serve as a good indicator, while the noise detection rate demonstrates moderate-to-strong correlation.

Identifying how training samples influence/impact Large Language Model (LLM) decision-making is essential for effectively interpreting model decisions and auditing large-scale datasets. Current training sample influence estimation methods (also known as \textit{influence functions}) undertake this goal by utilizing information flow through the model via its first-order and higher-order gradient terms. However, owing to the large model sizes of today consisting of billions of parameters, these influence computations are often restricted to some subset of model layers to ensure computational feasibility. Prior seminal work by \cite{Yen-2022} in assessing which layers are best suited for computing language data influence concluded that the first (\textit{embedding}) layers are the most informative for this purpose, using a hypothesis based on influence scores canceling out (i.e., the \textit{cancellation effect}). In this work, we propose theoretical and empirical evidence demonstrating how the cancellation effect is \textit{unreliable}, and that \textit{middle} attention layers are better estimators for influence. Furthermore, we address the broader challenge of \textit{aggregating} influence scores across layers, and showcase how alternatives to standard averaging (such as ranking and vote-based methods) can lead to significantly improved performance. Finally, we propose better methods for evaluating influence score efficacy in LLMs without undertaking model retraining, and propose a new metric known as the Noise Detection Rate (NDR) that exhibits strong predictive capability compared to the cancellation effect. Through extensive experiments across LLMs of varying types and scales, we concretely determine that the \textit{first} (layers) are not \textit{necessarily better} than the \textit{last} (layers) for LLM influence estimation, contrasting with prior knowledge in the field.

%for downstream task improvement in a computationally feasible manner is critical. Prior work has focused on either the classification head or embeddings, but we show that the middle attention layers of deep models capture detrimental samples more effectively. We further address the broader challenge of aggregating influence scores across layers while minimizing signal cancellation. As alternatives to standard averaging, we propose Rank- and Vote-based aggregation methods, which substantially improve model accuracy after anomaly filtering across multiple configurations. Finally, we evaluate proxy metrics for method validation without retraining, showing that the previously proposed cancellation effect is unreliable, whereas the Noise Detection Rate exhibits moderate-to-strong predictive correlation.

\end{abstract}\vspace{-5mm}

\section{Introduction}
\vspace{-2mm}

\looseness-1Large Language Models (LLMs) have demonstrated stellar performance on tasks across a number of applications and domains~\citep{Street-2024,Mittelst-2024,Marco-2025}. 
% \looseness-1Large Language Models (LLMs) have achieved remarkable performance across a wide range of applications and domains. 
Despite these advancements, current state-of-the-art models still 
exhibit
suboptimal behavior on complex reasoning tasks~\citep{Jiang-2025}, hallucinate responses and facts~\citep{Cleti-2024}, and can make biased and unfair decisions~\citep{Gallegos-2024, Peters-2025}. To improve the trust and safety of LLMs, it is imperative to better interpret and understand model decision-making~\citep{Singh-2024}. 
% Despite that, they have deficiencies and can make incorrect decisions -- these require the model to be interpretable

Recently, data attribution and influence methods~\citep{Hammoudeh-2024} have shown great promise 
% in helping interpret LLM decisions and explain model behaviors 
in interpreting LLM behavior
from the perspective of the training data \citep{Grosse-2023, Chhabra-2025}. These approaches seek to detect 
% detrimental, 
noisy,
anomalous, out-of-distribution, or problematic training samples and conceptually link them to model output performance~\citep{Pleiss-2020,Yang-2024,Jiang-2021}. 
% It is important to note that detrimental samples have been shown to occur (likely due to human error or malicious adversaries) even in highly curated datasets~\citep{Ekambaram-2017}. 
Even highly curated datasets can contain detrimental samples, often introduced unknowingly through human error or surreptitiously by malicious adversaries~\citep{Ekambaram-2017}.
%Training-time poisoning attacks insert anomalies into input data to gain control over model predictions~\citep{Shafahi-2018,Hammoudeh-2023}.
%In addition, datasets frequently exhibit encoded biases on protected characteristics~\citep{Zhang-2020, Hutchinson-2020}. 
%Regardless of the cause, defective training samples degrade a model’s performance. Different methods of signal measuring across the model structure could help detect these anomalies (figure~\ref{fig:nn-infl-cmp}).
More specifically, \textit{influence functions}~\citep{Garima-2020,Sui-2021,Kwon-2024} employ gradient-based analysis to help developers understand how models' decisions are influenced by training data.
% samples.
% Despite their obvious benefits, a fundamental issue in influence computation is the computational overhead of utilizing the entire gradient space from all the layers of the LLM, which, for the models of today, can constitute several billion parameters.
Despite their clear benefits, a fundamental issue in influence computation is the computational overhead of utilizing the entire gradient space from all the layers of the LLM, which, for modern models, amount to billions of parameters.

\begin{wrapfigure}{r}{0.6\textwidth}
    \centering
    \includegraphics[width=\linewidth]{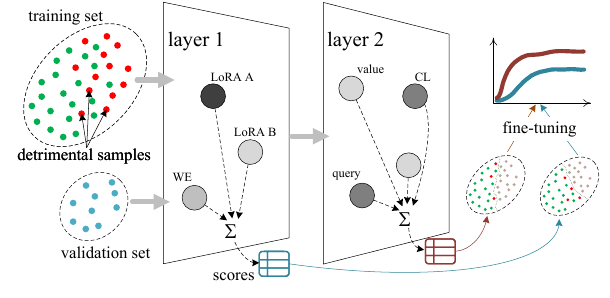}\vspace{-4mm}
    \caption{The influence estimation pipeline for LLMs.
    % \td change noisy samples -> detrimental samples. Also move the text for noisy samples and validation set a little lower so it is not being cut by the circles
    }
    \label{fig:nn-infl-cmp}
\end{wrapfigure}\vspace{-3mm}

% Thus, 
A common workaround (Figure \ref{fig:nn-infl-cmp}) when employing influence functions in LLMs is to restrict the gradient input to only certain layers~\citep{Yen-2022,Chhabra-2025}. However, there is little consensus on what the best layers are for influence estimation, and it is not known whether the output head, attention layers, or word embedding layers are better suited for this purpose. This is partly because very little work has been undertaken to analyze which layers are more useful for influence analysis. More specifically, one prior seminal work by \citet{Yen-2022} analyzes layer choice in computing language model data influence and finds that the first few (embedding) layers are most beneficial for influence estimation. However, as we discuss subsequently in our paper, these conclusions are derived from a strong assumption about the observed effectiveness of the layers being used, which might not hold in practice, requiring a deeper investigation into the effect of layers. Additionally, the work only studies one influence method and smaller-sized language models such as RoBERTa~\citep{Liu-2019-R}, whereas we undertake our study across various LLM parameter sizes and recent models, thereby capturing general patterns of influence estimation as models scale.

% Moreover, influence functions can leverage gradients from various layers of a model, and a commonly used approach is to aggregate these influences by averaging across parameter subsets. 
Moreover, influence functions that utilize gradients from multiple layers generally aggregate their contributions through averaging across parameter subsets.
However, this averaging can obscure important distinctions by allowing compensatory effects between layers, potentially masking the true contribution of individual training samples. In this work, we also explore and propose alternative strategies for combining influence estimates computed independently across different layers, aiming to preserve layer-specific insights and improve the interpretability of influence attribution.

% New data attribution methods that rely on influence functions and various aggregation strategies must be evaluated to determine how effectively they can identify detrimental training samples. 
Finally, assessing new influence-based attribution methods involves measuring how accurately they identify detrimental training samples.
In practice, this evaluation is typically performed by removing the least influential samples, fine-tuning the model on the filtered dataset, and then observing the impact on downstream task performance. However, this process is computationally expensive, particularly when influence is measured across different layers or parameter subsets of a model. As a result, it becomes valuable to identify a reliable extrinsic indicator -- one that can be computed efficiently and correlates well with the final task accuracy. Such a metric can guide the development and selection of influence-based attribution methods without requiring exhaustive fine-tuning experiments. To this end, we also propose a novel metric (the Noise Detection Rate) and show how it can be used as a reliable proxy indicator of downstream influence estimation performance.

\textbf{Contributions.} In sum, our contributions are as follows:\vspace{-2mm}
\begin{itemize}[leftmargin=1em, nosep]
\item 
% We study how well a previously proposed measure, the cancellation effect~\citep{Yen-2022}, determines layers best suited for detrimental samples filtering. Our empirical and theoretical evidence suggests that in the context of filtering of many samples, the cancellation effect is not a viable indicator metric. 

We evaluate the cancellation effect metric proposed by ~\cite{Yen-2022} for identifying layers most suitable for influence estimation. Our theoretical analysis and experiments demonstrate that, when filtering large numbers of samples, the cancellation effect is an unreliable indicator.

\item 
% We analyze how detrimental training samples influence language model behavior, focusing on identifying which model layers are most effective for influence estimation (RQ1). To this end, we conduct a comprehensive empirical study using mislabeled or noisy examples in the context of the widely used GLUE benchmark datasets. Our evaluation spans several state-of-the-art language models -- including Llama-3.2, Qwen-2.5, and Mistral 7B -- and helps address open research questions from previous work.
We quantify the impact of detrimental training samples on language model behavior and identify the model layers most informative for influence estimation. We conduct extensive experiments using noisy versions of the GLUE benchmark as in prior influence work \citep{Kwon-2024, Chhabra-2025} and employ several LLMs such as Llama-3.2, Qwen-2.5, and Mistral 7B.
% thereby addressing open questions from prior work.

\item 
% We propose novel strategies for aggregating influence scores across layers, validating that these methods often outperform the standard averaging approach in terms of effectiveness (RQ2).
We also introduce novel strategies for aggregating influence scores across layers, and demonstrate that they frequently outperform the standard averaging method used in current functions.
\item 
% We assess the suitability of several proxy measures (RQ3) -- training gradient cancellation and noise detection rate -- as reliable indicators of downstream influence estimation performance, with the goal of minimizing reliance on exhaustive fine-tuning for assessing influence estimation accuracy.
We propose and evaluate the validity of noise distribution-based proxy measures: Noise Detection Rate (NDR) metric and Area Under the Curve (AUC), as reliable indicators of downstream influence estimation performance, with the goal of minimizing reliance on exhaustive fine-tuning for assessing influence estimation accuracy. 
% Our theoretical analysis and experiments demonstrate that, when filtering large numbers of samples, the cancellation effect is an unreliable indicator.
\end{itemize}
\vspace{-2mm}

\section{Related Works}
\vspace{-2mm}

Our work falls within the broader area of interpretable machine learning, particularly focusing on gradient-method estimations of training data importance~\citep{Koh-2017, Yeh-2018, Yeh-2019, Jia-2019, Garima-2020, Khanna-2018, Sui-2021}. 
% Explainable AI encompasses several established approaches~\citep{Wang-2024}, including self-explanatory models~\citep{Lee-2022} 
% such as decision trees and rule-based systems~\citep{Lee-2022}, surrogate-based model explanations~\citep{Lakkaraju-2019}, and outcome explanations that rely on feature attribution, counterfactual reasoning~\citep{Mothilal-2021}, or input data attribution~\citep{Sundararajan-2017}, as well as internal model inspection~\citep{Goldstein-2013}. 
% Feature attribution methods are typically classified into gradient-based~\citep{Sundararajan-2017}, perturbation-based~\citep{Fong-2019}, and decomposition-based approaches~\citep{Montavon-2019}. Among these, gradient-based techniques help estimate how infinitesimal changes in feature values influence model predictions.
In gradient-based models trained via empirical risk minimization, influence functions offer a practical means to estimate the impact of individual training samples without performing costly leave-one-out retraining. For deep learning, the foundational work \citep{Koh-2017} introduced a Taylor-series approximation combined with the LiSSA optimization algorithm \citep{Agarwal-2017} to make influence estimation tractable. Subsequent efforts, such as Representer Point \citep{Yeh-2018} and Hydra \citep{Chen-2021}, aimed to improve these estimates, though their focus remained primarily on vision tasks. More recent advancements, including DataInf~\citep{Kwon-2024}, Arnoldi iteration~\citep{Schioppa-2022}, and Kronecker-factored curvature approximations \citep{Grosse-2023}, have scaled influence estimation to large generative language models like LLMs. Simpler methods have also emerged, using raw gradients as a proxy for influence ~\citep{Garima-2020, Charpiat-2019}, often enhanced with ensemble-based strategies \citep{Bae-2024, Kim-2023}. Additionally, self-influence -- computed solely within the training set -- has been shown to be effective for identifying influential samples \citep{Bejan-2023, Thakkar-2023}. Influence functions have proven valuable across diverse applications, including classification \citep{Koh-2017, Chhabra-2025, Chhabra-2024b}, generative modeling \citep{Kwon-2024, Schioppa-2022, Grosse-2023}, active learning \citep{Liu-2021b}, in-context learning~\citep{Nguyen-2023, Van-2024, Askari-2025}, and layer quality estimation~\citep{askari2025layerif}.

% The phenomenon of compensation in feature-attributed gradients has been explored in earlier work. For instance, \citep{Liu-2020} demonstrates how regularization techniques during training can reduce such cancellation, while \citep{Kapishnikov-2021} introduces a method to mitigate cancellation in the Integrated Gradients framework~\citep{Sundararajan-2017}.
% In the domain of data attribution, prior studies have leveraged influence functions to identify spurious data artifacts and improve model performance~\citep{Han-2020, Pezeshkpour-2022,Askari-2025}. 
% A notable limitation addressed in \citep{Hammoudeh-2024} is the dependence of many gradient-based methods on the dot product, which may cause influential training examples to be systematically overlooked due to cancellation across model weights. Similarly, \citep{Yen-2022} hypothesizes that high gradient cancellation during training can impair the discriminative power of influence scores. Building on this line of work, our study investigates whether such cancellation adversely affects mislabeled data detection and directly tackles the challenge of dot-product compensation.

Gradient compensation in feature-attributed gradients has been studied previously~\citep{Liu-2020,Kapishnikov-2021}. In data attribution, influence functions have been used to detect spurious training artifacts and improve performance~\citep{Han-2020,Pezeshkpour-2022,Askari-2025}. Prior work~\citep{Hammoudeh-2024,Yen-2022} highlights that dot-product-based methods and high gradient cancellation may reduce the discriminative power of influence scores. We extend this line of work by empirically and theoretically investigating whether such cancellation impairs the detection of mislabeled data. %Owing to space limitations, we discuss additional related work in Appendix \ref{appendix:addn-rel-work}.

It is also interesting to note that prior work has studied the \textit{applicability} of influence functions in LLMs, with varying results. Clearly, while influence functions have been used successfully across several LLM tasks and applications \cite{wanggeneralization, zhang2025correcting, askari2025layerif, xia2024less}, recent work \cite{Li-2025} finds that Hessian-Free, DataInf, and LiSSA perform poorly on tasks such as harmful data detection, class attribution, and backdoor identification, due to iHVP approximation errors, uncertain convergence, and weak correlation between parameter changes and model behavior. Their experiments use default influence-function implementations with mean aggregation across all layers. In contrast, we extend these insights by identifying specific layers and aggregation strategies that achieve performance capable of exceeding the default settings in~\cite{Li-2025}. As part of future work, we believe indetifying optimal layers for influence analysis can significantly improve the efficacy and fidelity of influence functions in LLMs, as observable in our main paper results.

Moreover, given our focus on \emph{where} influence scores are strongest and \emph{how} individual valuations can be effectively combined, we relate our analysis to knowledge-editing methods that similarly perform localization and aggregation. ROME~\citep{Meng-2022} and R-ROME~\citep{Gupta-2024} use \textit{causal tracing} to identify the MLP layer encoding a fact and apply a rank-one edit. MEMIT~\citep{Meng-2023} generalizes this via per-layer causal-effect scores and top-$k$ distributed edits, while EMMET~\citep{Gupta-2024-2} refines aggregation to reduce redundancy. In contrast, we use gradient-based influence to locate layers most affected by beneficial or harmful training samples and, consistent with KE studies, find the strongest discriminative signals in middle layers.

\section{Preliminaries and Notation}
\label{sec:prelim}\vspace{-2mm}

\subsection{Influence functions}
\label{sec:infl}

% Influence function originates from assessing the upweighting of a training sample on parameter estimation in optimization~\citep{Hampel-1974}. Machine learning extends its usage to estimating the impact of training samples on prediction loss~\citep{Koh-2017}, and on any arbitrary utility $f$. We consider how a training sample $\bar{x} \in X$ affects the values of $f$ on the validation set $\bar{x}' \in X'$. This influence is defined as follows. 

Given a training sample $\bar{x} \in X$, model weights $\Theta$, and a validation sample $\bar{x}' \in X'$, the influence is defined as follows for the utility function $f$.
\begin{align}
    I(\bar{x}, X', \Theta) = \left(\frac{1}{|X'|} \sum_{\bar{x}' \in X'}\nabla_\Theta f(\bar{x}', \Theta) \right)^T H^{-1}_\Theta \nabla l(\bar{x}, \Theta)
    \label{eq:infl}
\end{align}
% The value $I$ has a particular meaning regarding directions in the search space of $\Theta$. 
% The term under the parentheses defines the vector in the search space $\Theta$. 
The optimization step $\Delta \Theta_1$ in the direction of the vector given by the term under the parentheses decreases $f$ on $X'$. The second term, $H^{-1}_\Theta \nabla l(\bar{x}, \Theta)$ ($H$ - Hessian matrix) defines the optimization step $\Delta \Theta_2$ that would minimize the loss on the sample $\bar{x}$. 
% It is expressed through second-order derivatives in the Hessian matrix $H$. 
%%Thus, Eq.~\ref{eq:infl} estimates the alignment of these two optimization tasks.
% , the minimization of $f$ on $X'$, and the minimization of $l$ on $\bar{x}$. 
% When $\Delta \Theta_1$ and $\Delta \Theta_2$ are codirectional, minimizing training sample loss would also minimize the $f$, the two optimization goals match. The influence of sample $\bar{x}$ would be high. Conversely, when a step that minimizes loss increases the utility $f$, the influence score has a low value. 

% In the context of the mislabeled training sample filtering task with influence scores, we consider the loss on the validation sample as utility $f$. 

% It is generally expected that 
% The loss optimization on some training samples could be misaligned with the optimization of $f$, and, therefore, would be ``less influential'' (low or negative $I$ values) on $X'$~\citep{Chhabra-2025}. 

The loss optimization on some training samples could be misaligned with the optimization of $f$. Therefore, fitting these samples would increase $f$. We refer to such samples as \textit{detrimental} for the utility under consideration. This definition corresponds to the previously used definition in~\citep{Koh-2017, Kwon-2024, Chhabra-2025}. %Detrimental samples frequently correspond to present noise and mistakes in the training dataset.  
%It is known that deep models demonstrate the ability to fit both the noise and clean data, and, in some cases, noisy labels serve as a regularization technique to improve generalization~\citep{Muller-2019, Xie-2020, Song-2023}.
%
% increase this loss, meaning that the optimization goals $\Theta_{\bar{x}}$ and $\Theta_{\bar{x}'}$ would not correlate. 
% However, deep models demonstrate the ability to fit both the noise and clean data. In some cases, noisy labels serve as a regularization technique to improve generalization~\citep{Muller-2019, Xie-2020, Song-2023}.
%
% The formula~\ref{eq:infl} defines values through the inverse of the Hessian, the second-order derivative of the loss. 
In practice, it is not feasible to compute $I$ on large models due to the time complexity of the Hessian inversion (cubic time in the exact case). Therefore, influence is estimated approximately. 
% This makes the influence calculation infeasible for deep models in terms of memory and clock time. 
% Many authors consider the approximations of $ H^{-1}_\Theta \nabla l(\bar{x}, \Theta)$ for deep architectures without sacrificing precision for specific conditions. 
The TracIn method~\citep{Garima-2020} replaces the \textit{Hessian vector product} $\mathcal{H} = H^{-1}_\Theta \nabla l(\bar{x}, \Theta)$ with the gradient $\nabla l(\bar{x}, \Theta)$.
% , effectively moving from second-order to first-order optimization.
Cosine similarity between the utility term and $\mathcal{H} \approx \nabla l(\bar{x}, \Theta)$ serves as another first-order influence estimation approximation.
% LiSSA~\citep{Agarwal-2017} presents an interactive $\mathcal{H}$ estimation with Taylor expansion. 
DataInf~\citep{Kwon-2024} is a recently proposed second-order method based on swapping the order of matrix inversion and averaging across training samples in $\mathcal{H}$.
% , computing HVP through available gradients $\nabla l(\bar{x}_i, \Theta)$. 
Its error decreases when $|\Theta|$ is small, which makes this method suitable for fine-tuning with LoRA~\citep{Hu-2022, Dettmers-2023}. 
% But before we state all applied methods formally, we also need to consider further decomposition of influence estimation in formula~\ref{eq:infl}.

% For deep neural network models, when the loss is the negative log-likelihood function, 
\looseness-1For cross-entropy loss,
the Hessian is the block diagonal matrix of gradient second moments $|X|^{-1} \sum_{\bar{x} \in X} \nabla l(\bar{x}, \Theta) \nabla l^T(\bar{x}, \Theta)$, one block for one layer $l \in L$ in the neural network. Thus, $I$ is an aggregation of influence values across validation samples and model layers, defined as follows:
% formula~\ref{eq:infl} could be transformed to the following.
% \begin{align*}
%     I(\bar{x}, X', \Theta) = \sum_{l \in L} \left(\frac{1}{|X'|} \sum_{\bar{x}' \in X'}\nabla_{\Theta_l} f(\bar{x}', \Theta_l) \right)^T \mathcal{H}(\bar{x}, \Theta_l) 
% \end{align*}
\vspace{-0.5mm}
{ 
\small
\begin{align}
    I(\bar{x}, X', \Theta) = \frac{1}{|X'|}  \sum_{\bar{x}' \in X', l \in L} I'(\bar{x}, \bar{x}', \Theta_l) \text{ s.t. } I'(\bar{x}, \bar{x}', \Theta_l) = \nabla_{\Theta_l} f^T(\bar{x}', \Theta_l) \mathcal{H}(\bar{x}, \Theta_l)
    \label{eq:infl2}
\end{align}
}%\vspace{-4mm}

  % Vectors $HVP(\bar{x}, \Theta_l)$ are computed with different Hessian vector product approximation methods.   
$I'(\bar{x}, \bar{x}', \Theta_l)$ are tensors collected on different layers and for different influence functions. 
  % The normalization coefficient is not important for training sample ordering according to formula~\ref{eq:infl2}. Noisy training samples are expected to have small accumulated scores.
% This work compares label noise detection and filtering for three commonly used influence methods, 
We employ TracIn, Cosine, and DataInf in our experiments ($\nabla l_{\bar{x}} = \nabla_{\Theta_l} l(\bar{x}, \Theta_l)$), analytically defined as follows:
% , with the following influence matrices $I'$ ($\nabla l_{\bar{x}} = \nabla_{\Theta_l} l(\bar{x}, \Theta_l)$).
\vspace{-3mm}
{
\small
\begin{align}
    \operatorname{TracIn}(\bar{x}, \bar{x}', \Theta_l) = \nabla l^T_{\bar{x}'} \nabla l_{\bar{x}}.
    \label{eq:tracin}
\end{align}\vspace{-4mm}
\begin{align}
    \operatorname{Cosine}(\bar{x}, \bar{x}', \Theta_l) = \frac{\nabla l^T_{\bar{x}'} \nabla l_{\bar{x}}}{|\nabla l^T_{\bar{x}'}| |\nabla l^T_{\bar{x}}|}.
    \label{eq:cosine}
\end{align}\vspace{-2mm}
\begin{align}
    \operatorname{DataInf}(\bar{x}, \bar{x}', \Theta_l) = \frac{\nabla l^T_{\bar{x}'} \nabla l_{\bar{x}} - |X|^{-1} \sum_{\bar{z} \in X} \nabla l^T_{\bar{x}'} \nabla l_{\bar{z}} \nabla l^T_{\bar{z}} \nabla l_{\bar{x}} }{(|X||l| \lambda)^{-1} \sum_{\bar{x} \in X} |\nabla l_{\bar{x}}|^2 }.
    \label{eq:datainf}
\end{align}
\vspace{-2mm}
}
% Note that Cosine value differs from TracIn by a constant for a given $\bar{x}$ and $\bar{x}'$. However, this constant varies across different training-validation pairs, which makes the final influence scores of the two methods different. 

% \textbf{Influence on high-parameter layers}. 
% Aiming at the question of which layers detect better the detrimental samples,
% % in their aggregated influence scores (formula~\ref{eq:infl2})
% we consider two influence functions proposed in the work~\citep{Yen-2022}. 
To address the question of which layers are more effective at detecting detrimental samples, we also consider two influence functions proposed by~\citet{Yen-2022}.
Both of them,  $\operatorname{TracIn}_{we}$ and $\operatorname{TracIn}^{10}_{we}$, operate on word embeddings. The first, $\operatorname{TracIn}_{we}$, considers only the weights 
of the tokens (including special tokens) in both the training and the validation samples: 
% If $\Theta_{we}$ are embedding weights, and $\bar{x} \cap \bar{x}'$ are tokens present in both samples, the scores are defined as follows. 
\begin{align}
    \operatorname{TracIn}_{we}(\bar{x}, \bar{x}', \Theta_{we}) = \sum_{t \in \bar{x} \cap \bar{x}', \theta_t \in \Theta_{we}} \frac{\partial l(\bar{x}', \theta_t)}{\partial \theta_t} \frac{\partial l(\bar{x}, \theta_t)}{\partial \theta_t}.
    \label{eq:tracin-we}
\end{align}\vspace{-3mm}

% We should note that the method works even if training and validation samples do not have shared visible tokens. 
% The influence is then attributed through utility tokens such as begging, end of the sequence, and sentence separation. 

The second, $\operatorname{TracIn}^{10}_{we}$ additionally filters out the weights $t \in \bar{x} \cap \bar{x}', \theta_t \in \Theta_{we}$ keeping only the top-10 that have highest-by-magnitude gradients for training sample $\bar{x}$.

% TODO. The computation of influence scores happens on the neural network submodular level. 

\iffalse
Methods $\operatorname{TracIn}_{we}$ and $\operatorname{TracIn}^{10}_{we}$ were also justified with the complexity of computing TracIn on all word embedding parameters. 
% Because $\operatorname{TracIn}^{10}_{we}$ requires even fewer gradients than $\operatorname{TracIn}_{we}$, the method is tractable for long sentences. 
$\operatorname{TracIn}^{10}_{we}$ is tractable for long sentences because it requires even fewer gradients than $\operatorname{TracIn}_{we}$. 
However, the parameter selection of these methods is still computationally expensive because it requires token-wise analysis of sample pairs. In this work, we collect influence scores on \texttt{WE} using embedding compression and computation batching (App.~\ref{sec:a-exp}).
\fi

\subsection{Cancellation effect}
\label{sec:cancel}

\looseness-1In their study of layer choice effects in influence estimation, \citet{Yen-2022} consider the compensation of gradients as contributed by training samples. They argue that if the sum of gradients w.r.t a particular weight $\theta$ becomes very small,
% due to different directions, attractors in the search space, 
then the optimizer would not change $\theta$ significantly because of them. 
% In other words, the optimizer does not change the weight significantly. 
However, the magnitudes of the summed training gradients could be very large and hence, the $\theta$-contributed influence score could also be large. The authors further state that $\theta$, the weight with a high cancellation effect, should be ignored in influence estimation because it reduces the model's discrimination power. To this end, they propose a metric $C$ to estimate the cancellation effect on a layer $W$ and present the results where smaller $C$ values lead to better model confidence \citep{Yen-2022}:
% In other words, $C$ is a potential indicator of the downstream task performance.
\begin{align}
C(W) = \frac{\sum_{\bar{x}} |\nabla_W l(\bar{x}, W)|}{|\Delta W|}.
\label{eq:cancel}
\end{align}

$C$ can be computed throughout the fine-tuning procedure, considering every checkpoint individually. 
% Therefore, stochastic batching, dropout, and normalization affect final values.
% (for instance, work~\citep{Liu-2020} discusses how regularization techniques reduce the cancellation for feature attributions). 
% \looseness-1
Accumulating the magnitudes of atomic gradient updates per sample requires \textit{size-one batching} during training, which is computationally expensive and infeasible for current models. We thus adopt the original formulation to estimate the cancellation on a single checkpoint in evaluation mode. 

\subsection{Proposed Influence Aggregation Approaches}
\label{sec:appr}

% Equation~\ref{eq:infl2} expresses the scores through the influence matrices and averages them across validation samples and selected layers. This leads to \textit{influence compensation} -- scores of different layers and validation samples could zero each other. Additionally, some of them could dominate others by score magnitude. 
% We investigate alternative aggregation methods with goal to remove these effects.
% Generally, influence scoring could be considered as follows.
As is evident, Eq.~\ref{eq:infl2} averages influence scores across all layers and validation samples. This can cause influence \textit{compensation}, where opposing contributions cancel out, or \textit{dominance}, where large scores overshadow others. In our work, we aim to investigate alternative aggregation strategies that can be employed instead to \textit{mitigate} these effects. Generally, influence attribution can be abstracted as:
\begin{align}
    I(\bar{x}, X', \Theta_L) = \mathcal{A}_{X', L}(I'(\cdot, \bar{x}', \Theta_l))(\bar{x}) ,
    \label{eq:infl-agg}
\end{align}
where $\mathcal{A}$ is an \textit{aggregation operator} (e.g., taking the average across samples/layers). We thus propose our novel aggregation approaches as follows:
% of collected influence matrices on selected layers $L$.

% One validation sample defines its influence order on $X$. Aggregation methods $\mathcal{A}$ define ways to combine them into final scores. Apart from averaging, we consider ranking and voting as alternatives in our experiments. 

%They were identified as the best among attempted according to the approach discussed in section~\ref{sec:best-layers}. 
% Same section proposes the way how to compare different $\mathcal{A}$ before expensive fine-tuning.
% that are most promising by NDR measure. 

\textbf{Ranking}. Every validation sample and layer ranks training samples according to their influence. These ranks are summed, eliminating the domination of individual influences with high magnitude. Additionally, our proposed \texttt{Rank} method ignores incorrectly predicted validation samples by the selected checkpoint:
\begin{align}
    \operatorname{Rank}(I') = 
    % \alpha 
    \sum_{\bar{x}' \in X'', l \in L} \sum_{\bar{y} \in X} \mathbb{I}(I'(\bar{y}, \bar{x}',\Theta_l) < I'(\cdot, \bar{x}',\Theta_l)),
\end{align}
where $\mathbb{I}$ is the indicator function, 
% $\alpha$ - normalization, 
and $X''$ denotes the correctly predicted validation samples. 

\textbf{Positional voting}. To avoid the domination of very low or very high ranks in averaging, we propose a method based on voting. Every validation sample and layer assigns $k$ votes to the least influential training datum, $k-1$ to the next one, etc., until the number of votes reaches zero. We then pick $k$ equal to the number of training samples for filtering as potentially detrimental. As with \texttt{Rank}, the \texttt{Vote} method considers only correctly predicted validation samples by the selected checkpoint:
\begin{align}
    \operatorname{Vote}_k(I') = -\sum_{\bar{x}' \in X'', l \in L}\max(k - \sum_{\bar{y} \in X} \mathbb{I}(I'(\bar{y}, \bar{x}',\Theta_l) < I'(\cdot, \bar{x}',\Theta_l)), 0).    
    \label{eq:vote-agg}
\end{align}

\section{Research questions}
% \label{sec:problem}
\label{sec:rqs}

\looseness-1 We now formalize the research questions (RQs) we seek to study in this paper. To bridge the gaps identified above regarding the role of layers in LLM influence analysis, the impact of layer-wise aggregation strategies in influence functions, and whether influence functions can be evaluation through novel external metrics, without requiring costly fine-tuning, we propose the following RQs:

% \textbf{RQ1}. \emph{What model part is the best in recognizing anomalies of training data according to the observed signals?}

% \textbf{RQ1}. \emph{Which part of the model is most effective at identifying detrimental training samples based on the observed influence?}

% \textbf{RQ1}. 
% \emph{Does gradient cancellation among training samples meaningfully correlate with the quality of layers' influence scores in distinguishing detrimental samples? } 
% Assessing whether the cancellation effect heuristic reliably identifies layers whose influence scores improve downstream performance after filtering detrimental samples is crucial for establishing its validity as an indicator measure. This question requires evaluating scores of different parameter groups, which leads to the following.

\textbf{[RQ1]}.
\looseness-1\emph{Can the gradient cancellation effect serve as a reliable predictor of layer contribution in influence estimation?} In Section~\ref{sec:cancel-exp}, we propose both theoretical and empirical evidence to answer RQ1, showing that high cancellation within a layer does not necessarily imply poor attribution performance.\vspace{1mm}

\textbf{[RQ2]}. 
% \emph{Which part of the model is most effective at identifying detrimental training samples based on the observed influence?}
% \emph{Effectiveness of model layers in detecting detrimental training samples.}
\emph{Which model layers yield the most effective influence scores for detecting detrimental samples?}
% Influence values (section~\ref{sec:infl}) have the potential to recognize anomalies and could be measured across model layers. Answering RQ1 (section~\ref{sec:best-layers}) serves practical purposes of 1) suggesting better ways of training data filtering 2) understanding which model part is most susceptible to anomalies. At the same time, there are many ways how the signals could be combined together (equation~\ref{eq:infl-agg}). It is not possible to test their effect on the model performance exhaustively. This leads to the following question.
% Influence values 
% % (Section~\ref{sec:infl}) 
% have the potential to identify harmful samples and can be computed across different model layers. 
Addressing this question (Section~\ref{sec:layers-exp}) serves two practical goals: (1) improving training data filtering methods, and (2) revealing which parts of the model are most sensitive to such samples. These efforts are motivated by the lack of general consensus on layer choice in influence estimation.\vspace{1mm} %However, there are multiple ways to aggregate these influence signals, thus motivating our efforts.

\textbf{[RQ3]}.
\emph{Can alternative aggregation strategies improve influence estimation performance compared to traditional averaging?}
% Section~\ref{sec:appr} proposes alternatives for how separate scores could be aggregated, the methods \texttt{Rank} and \texttt{Vote}. We study their effect on model accuracy after anomaly filtering in section~\ref{sec:rank-vote}. At the same time, we recognize that some unsupervised learning methods that do not reduce influence dimensionality with aggregation could potentially have an even better effect, but leave this improvement for the future. Evaluating their impact on model performance exhaustively is infeasible. This raises the following question.
Moving beyond influence score averaging across layers, we introduce two novel aggregation methods (Section~\ref{sec:appr}): \texttt{Rank} and \texttt{Vote}, for combining individual influence scores. In Section~\ref{sec:aggr-exp}, we evaluate their impact on model accuracy following sample filtering.\vspace{1mm} 
% While some unsupervised methods that preserve the full dimensionality of influence values may offer further improvements, we leave their exploration for future work. 
% As an exhaustive evaluation of all possible approaches is infeasible, this leads to the following question.
%The evaluation of a new approach requires model retraining of filtered datasets.
%Therefore, we turn to the question of whether one can estimate downstream performance without additional fine-tuning.

\textbf{[RQ4]}. 
% \emph{Is there a measure that could estimate the downstream task performance of the influence function $I'$ and the aggregation method $\mathcal{A}$ without the necessity of additional fine-tuning?}
% \emph{Identifying measures to estimate the downstream performance of $I'$, $\mathcal{A}$ without retraining.}
% \emph{How reliably can noise-distribution measures predict the downstream performance of influence-based data filtering methods?} In Section~\ref{sec:ndr-exp}, we study whether our proposed metrics: Noise Detection Rate (NDR) and Area Under the Curve (AUC), can serve as useful metrics for assessing the performance of influence functions, without requiring expensive fine-tuning.
\emph{How reliably can the detrimental sample distribution measures predict the downstream performance of influence-based data filtering methods?} In Section~\ref{sec:ndr-exp}, we study whether our novel proposed metrics: Noise Detection Rate (NDR) and Area Under the Curve (AUC), can serve as useful metrics for assessing the performance of influence functions, without requiring expensive fine-tuning.\vspace{1mm}

\section{Analysis and Experimental Results}
\label{sec:exp}

% TODO: We consider the problem of detecting the mislabeled or noisy training samples with influence estimation. We constrain our experiments to the context of binary classification, where the noise introduction and fixture are done by flipping the labels. Previous works define the influence measures as an aggregation of scores obtained on separate model modules or layers. We target answering the following research questions.

\textbf{Datasets}. We evaluate detrimental sample filtering across eight GLUE datasets~\citep{Wang-2018}: QNLI, MRPC, QQP, SST-2, CoLA, MNLI, RTE, and STS-B. These benchmarks cover a range of natural language understanding tasks, including sentence similarity, paraphrase detection, sentiment analysis, linguistic acceptability, and natural language inference. Together, they provide a diverse and widely used testbed for assessing model performance and the impact of data quality.

\textbf{Models}. The experiments are conducted on several LLM types: RoBERTa-Large~\citep{Liu-2019-R}, LLaMA-3.2 1B~\citep{Touvron-2023}, Qwen-2.5 1.5B~\citep{Qwen-25}, and Mistral 7B~\citep{Albert-2023}. These models represent various widely used transformer-based language models ranging from millions to several billions of parameters, offering a comprehensive view of influence behavior across model sizes and architectures.

\textbf{Methodology and Protocol}. Similar to prior work \citep{Kwon-2024, Chhabra-2025}, we employ the five-stage pipeline detailed further in Appendix~\ref{sec:a-exp}): (1) Inject synthetic noise into training data and initialize from compressed embeddings. (2) Fine-tune on noisy data, selecting checkpoints with the lowest validation loss~\citep{Garima-2020}, which we verify outperforms final-epoch or accuracy-based selection. (3) Compute influence values across all tunable layers. (4) Partition the model into embeddings (\texttt{WE}), attention groups (four splits), and classification head (\texttt{CL}), aggregating influence per training sample. (5) Remove 30\% least influential samples and retrain, evaluating effectiveness by the best test accuracy. Each configuration is repeated over 10 seeds for robustness.

Furthermore, to robustly compare configurations, we compare them pairwise across multiple runs. Configuration \texttt{A} outperforms \texttt{B} if it performs better for the same dataset, checkpoint, and random seed. We also report the counts of configuration \texttt{A} being better more times than some predefined ranking threshold (e.g. 75\%) compared to other configurations.
This results in a score matrix showing which configurations consistently outperform others. From this matrix, we identify Pareto fronts and assess which influence methods perform better. 
% : a configuration at rank ii is only outperformed by configurations with lower ranks. 
%This approach helps distinguish strong configurations and identify trade-offs between them. 
% The “confident win” definition could be further refined using statistical significance tests.

% \subsection{RQ1. Best layers for the noise detection}
% \label{sec:best-layers}

\subsection{\textbf{[RQ1]} Verification of Assumptions about Cancellation Effect}
\label{sec:cancel-exp}\vspace{-2mm}

\begin{wraptable}[16]{r}{0.38\textwidth}
\caption{Training gradient cancellations for various LLMs (1 $\rightarrow$ no cancellation, $\infty \rightarrow$ extreme cancellation). }\vspace{-3mm}
\label{tab:r-cancellation}
\centering
\resizebox{\linewidth}{!}{ 
\begin{tabular}{ccccccc}
\hline
 & Layer   & Mean {\scriptsize $\pm$ Std}               &   Median &   Min & Max & $\rho$ \\
 % &     $ln|\Theta|$ \\
\hline
% \multicolumn{5}{c}{Roberta-Large} \\
% \hline
 \multirow{6}{*}{\parbox{0.1cm}{\centering \rotatebox[origin=c]{90}{Roberta-Large}}} &
 WE      & 2.2 {\scriptsize $\pm$ 0.3}  &      1   &   1   & $\infty$ 
 & -0.3 \\
 % &     7 \\
 % 1.4e+07 \\
 &
 00-05   & 9.4 {\scriptsize $\pm$ 3.9}  &     11.8 &   1   & $10^6$ 
 & 0.1 \\
 % 1.4e+06  \\
 % & 5 \\
 % 9.8e+04       \\
 &
 06-11   & 10.5 {\scriptsize $\pm$ 4.6} &     14.3 &   1.7 & $10^6$ 
 & 0.1 \\ 
 % 1.4e+06  \\
 % & 5 \\
 % 9.8e+04       \\
 &
 12-17   & 9.4 {\scriptsize $\pm$ 5.1}  &     12.5 &   1.6 & $\infty$ 
 & 0.1 \\
 % & 5 \\ 
 % 9.8e+04       \\
 &
 18-23   & 8.5 {\scriptsize $\pm$ 4.4}  &     11.1 &   1.5 & $10^6$ 
 & 0.2 \\
 % 2.9e+06  \\
 % & 5 \\ 
 % 9.8e+04       \\
 &
 CL      & 8.5 {\scriptsize $\pm$ 6.1}  &     11.1 &   1.6 & $\infty$ 
 & 0.1 \\
 % & 6 \\ 
 % 1.1e+06 \\
\hline
% \hline
% \multicolumn{6}{|c|}{Llama-3.2 1B} \\
% \hline
\multirow{6}{*}{\parbox{0.1cm}{\centering \rotatebox[origin=c]{90}{Llama-3.2 1B}}} &
 WE      & 2.9 {\scriptsize $\pm$ 0.3} &      1   &   1   & $\infty$ 
 &  0.3 \\
 % &      2.9e+07 \\
 &
 00-03   & 8.4 {\scriptsize $\pm$ 2.9} &     11.8 &   1.7 & $\infty$ 
 & -0.1 \\
 % & 1.1e+05       \\
 &
 04-07   & 5.8 {\scriptsize $\pm$ 2.3} &      7.7 &   1.2 & 
 $10^6$ 
 & -0.2 \\
 % 1.4e+06  \\ 
 % & 1.1e+05       \\
 &
 08-11   & 4.4 {\scriptsize $\pm$ 1.6} &      5.8 &   1   & 
 $10^5$ 
 & -0.1 \\
 % 4.7e+05  \\ 
 % & 1.1e+05       \\
 &
 12-15   & 4.0 {\scriptsize $\pm$ 1.7} &      5.3 &   1   & 
 $10^6$ 
 & -0.1 \\
 % 6.3e+05  \\ 
 % & 1.1e+05       \\
 &
 CL      & 3.1 {\scriptsize $\pm$ 2.3} &      2.5 &   1   & 
 $10^4$ 
 & -0.1 \\
 % 5.8e+03   \\
 % &   4100       \\
\hline
% \multicolumn{5}{c}{Mistral 7B} \\
% \hline
\multirow{6}{*}{\parbox{0.1cm}{\centering \rotatebox[origin=c]{90}{Mistral 7B}}} &
 WE      & 3.5 {\scriptsize $\pm$ 0.3}   &      1.1 &   1   & $\infty$ 
 & 0.1 \\
 % & 7 \\
 % 4.3e+07 \\
 &
 00-07   & 17.7 {\scriptsize $\pm$ 3.5}  &     17   &   1.6 & $\infty$ 
 & 0.0 \\
 % & 5 \\
 % 4.3e+05       \\
 &
 08-15   & 16.4 {\scriptsize $\pm$ 6.4}  &     18.6 &   2.2 & $\infty$ 
 & 0.1 \\
 % & 5 \\ 
 % 4.3e+05       \\
 &
 16-23   & 15.6 {\scriptsize $\pm$ 8.6}  &     16   &   1.9 & $\infty$ 
 & 0.0 \\
 % & 5 \\ 
 % 4.3e+05       \\
 &
 24-31   & 15.7 {\scriptsize $\pm$ 10.2} &     15.5 &   1.8 & $\infty$ 
 & 0.0 \\
 % & 5 \\ 
 % 4.3e+05       \\
 &
 CL      & 20.5 {\scriptsize $\pm$ 19.5} &     11.7 &   3.8 & $10^5$ 
 & 0.1 \\ 
 % 9.1e+04  \\
 % &   5 \\ 
 % 8200       \\
\hline

\end{tabular}}
\end{wraptable}
% \vspace{-2em}

% How adequately does this equation describe the gradient compensation happening during training?
% Indeed, 
% When cancellation happens, $\Delta W$ is small for commonly used optimizers, and $C(W)$ is high. However, metric $C$ is not linear w.r.t. $W$ subsets. 
% Let's consider the embedding \texttt{WE} and the classification head \texttt{CL}. 
% % This corresponds to the embedding layer and the classification head without dense projection. 
% % For instance, Roberta-Large 
% While it is true that many gradients flow through \texttt{CL} during backpropagation, there exist weights in \texttt{WE} (weights of frequent tokens), through which this flow is also high. Moreover, as with bias weight, these gradients tend to cancel out.
% % , resulting in a small $\Delta \theta$ for a token. However, 
% Because the equation~\ref{eq:cancel} takes the norm of all changes, this high cancellation of one weight is not noticeable. The additivity property does not hold for this metric. 

The \textit{cancellation effect} arises when opposing gradient contributions cause small net parameter updates ($\Delta W$) (Eq.~\ref{eq:cancel}), while the cancellation metric $C(W)$ remains high. Intuitively, the key issue arising in Eq.~\ref{eq:cancel} is that $C$ is not additive across parameter subsets. For example, although many gradients pass through the classification head (\texttt{CL}), frequent-token embeddings in \texttt{WE} also accumulate large but opposing updates. Since $C$ aggregates changes via norms, such cancellations are obscured, limiting its reliability as an indicator.
% \textbf{Empirical study}.
% We collect the cancellation effect metric~\ref{eq:cancel} with L1 norm for GLUE datasets and 3 considered models. After the first fine-tuning on a noisy dataset, we preserve a checkpoint with the lowest loss on the validation set, which is also used for influence computation. Then, the cancellation metric routine passes the training set through this checkpoint in evaluation mode to sum up gradients and their absolute values. In this way, it estimates cancellation for one epoch, ignoring dropout, activation normalization, and the stochasticity of sample batching. This approach differs from the original work, where cancellations are collected during training from the start to the checkpoint of interest. We preserve computed cancellation values for each tunable module and word embeddings. 

Table~\ref{tab:r-cancellation} presents the cancellation effect measured across LLMs, datasets, and seeds. The embeddings \texttt{WE} indeed have the smallest values. 
% The median values of $C$ correspond to weights inside corresponding layer groups 
The median value of $C$ reflects the \emph{cancellation of an individual parameter}, predominantly observed within a corresponding layer group.
% \td{Tried to rewrite this sentence but not sure what you are trying to say, so please rewrite plainly and correctly so that reviewers can better understand}. 
% Note that 
Value $C=1$ denotes the smallest cancellation when all the gradients are codirectional (e.g. majority of \texttt{WE} parameters). 
However, there are \texttt{WE} weights with infinite cancellation scores that have 
two or more collected gradients from training samples.
% a non-zero number of updates from samples 
% \td{Also confusing -- non-zero number of updates from samples should be rewritten to make it better understandable.}. 
% The cancellation happens on the level of separate parameters and is not reflected in the metric value for the whole layer. 
Clearly, the results confirm that layer group cancellation $C$ ``smoothes'' the gradient compensation over a large number of parameters and hides those weights where the individual parameter gradient cancellation is high.
\citet{Yen-2022} showed that influence estimates from layers with lower $C$ better capture expected prediction changes for ground-truth classes when top-$k$ opponents are filtered out for a given sample. In contrast, we extend this analysis to the validation set as a whole, filtering opponents across many samples and measuring downstream performance via task accuracy. 
% \td{You cannot refer to Table 3 right now since we are still in this section, please remove this mention and correct}, 
Results demonstrate only weak or no correlation between $C$ and performance (Table~\ref{tab:r-cancellation}, column $\rho$: Spearman correlation). 
% Notably, middle attention layers exhibit strong gradient compensation (low $C$; see Table~\ref{tab:r-cancellation}) yet achieve the best anomaly filtering ability (Section~\ref{sec:layers-exp}) \td{Again, don't jump ahead. Talk about things as they appear, and remove the mention of Section 5.2 here. In general you need to better describe Table 1 here and don't jump ahead}. 
% This discrepancy demonstrates that $C$ is not a reliable indicator of downstream utility.

% \textbf{Counter-example}. The cancellation effect hypothesis does not hold in the following setting. 

\looseness-1 Next, we provide theoretical evidence that reinforces the aforementioned issues with the cancellation effect and demonstrate how it is an unreliable estimator of influence performance (\textit{see Appendix~\ref{sec:a-c-proof} for the proof}).
% on how the cancellation can positively affect the influence attributions.

% \textbf{Theorem 1}. Given training set $X=\{\bar{x}_1, \bar{x}_2\}$, where $\bar{x}_1$ is noise and $\bar{x}_2$ is clean; two weights $\theta$ and $\omega$, $C(\theta) \ll C(\omega)$, 
% % $W_1=\{\theta\}$, $W_2=\{\theta, \omega\}$, 
% TracIn functions $I_{\theta, \omega}$, $I_{\theta}$, 
% there exists validation sample $\bar{x}_3$, that increase distance between noise and clean samples with $I_{\theta, \omega}$,  $\Delta I_{\theta,\omega} > \Delta I_{\theta}$, improving discrimination power of influence method.

% \textbf{Theorem 1}. Consider 
% (1) an arbitrary training set $X$,
% (2) samples $\bar{x}_1, \bar{x}_2\ \in X$, where $\bar{x}_1$ is noisy and $\bar{x}_2$ is clean,
% (3) two weights $\theta$ and $\omega$ with cancellations $C(\theta) \ll C(\omega)$, $C(\omega) \rightarrow \infty$ for $\bar{x}_1, \bar{x}_2$, and
% (4) TracIn influence functions $I_{\theta,\omega}$ and $I_{\theta}$.
% Then there exists a validation sample $\bar{x}_3$ such that $\Delta I_{\theta,\omega} > \Delta I_{\theta} > 0$
% which increases the separation between noise and clean samples and thereby improves the discrimination power of the influence method.

% \begin{theorem}[\textbf{Invalidating \citep{Yen-2022}'s Findings}]
\begin{theorem}[\textbf{Cancellation Can Improve Influence Estimation.}]
Let $X$ be a training set. Consider:  
\begin{enumerate}[nosep]
    \item two samples $\bar{x}_1, \bar{x}_2 \in X$, where $\bar{x}_1$ is noisy and $\bar{x}_2$ is clean;
    \item two parameter vectors $\theta$ and $\omega$ such that their cancellation scores satisfy 
    $C(\theta) \ll C(\omega)$ with $C(\omega)\to\infty$ for $\{\bar{x}_1,\bar{x}_2\}$;
    \item the TracIn influence scores $I_\theta$ (based on $\theta$ alone) and $I_{\theta,\omega}$ (based jointly on $\theta,\omega$);
    \item influence score distance between $\bar{x}_1$ and $\bar{x}_2$ w.r.t. weights $\Theta$ and validation samples $X'$: $\Delta_\Theta I(\bar{x}_1,\bar{x}_2 \mid X') = |I(\bar{x}_1, X', \Theta) - I(\bar{x}_2, X', \Theta)|$.
\end{enumerate}
Then there exists a validation point $\bar{x}_3$ such that:\vspace{-1mm} 
% \td{Need to define $\Delta I$ somewhere before here}
\[
\Delta I_{\theta,\omega}(\bar{x}_1,\bar{x}_2\mid \bar{x}_3)
   > \Delta I_\theta(\bar{x}_1,\bar{x}_2\mid \bar{x}_3),
\]
\looseness-1 i.e.\ the separation between noisy and clean samples is strictly larger under $I_{\theta,\omega}$, showing that contrary to the claim of \cite{Yen-2022}, the inclusion of high cancellation weights can \textit{improve} influence.
\end{theorem}\vspace{-2mm}

%\textbf{Proof}. See Appendix~\ref{sec:a-c-proof}. \qed

\textbf{Remarks on RQ1 Findings.} Our empirical results demonstrate an absent-to-weak correlation between the cancellation effect and the accuracy of models retrained on datasets filtered by corresponding layers. Further, our theoretical analysis provides a counterexample, demonstrating that weights with high cancellation \textit{can}, in fact, improve influence estimation.

\subsection{\textbf{[RQ2]} Layer effectiveness in Influence Function Analysis}
\label{sec:layers-exp}\vspace{-2mm}

Next, owing to the limitations of the cancellation effect as previously discussed, we now seek to analyze which layers are better suited for influence estimation in language models.
% In Table~\ref{tab:first-vs-last}, 
We compare layers based on how well they can detect detrimental samples in accordance with the framework outlined at the beginning of the section. 
% In the table, the model weight groups are presented as rows ordered starting from $\operatorname{TracIn}^{10}_{we}$, $\operatorname{TracIn}_{we}$ (i.e., selection from \texttt{WE}), followed by full \texttt{WE} layer, attention layer groups, and the \texttt{CL} layer. 
% Attention groups are shown in the same order as they appear in the model, but their size varies with the model (Appendix~\ref{sec:a-exp}). 
% % of noise of different layer groups for three models and three influence computation approaches. 
% % The ranking threshold is 0.5. 
% Columns of the table contain the layer's performance ranking and win ratio for each model and influence function. 
% % Configurations are compared on 80 sampled best test accuracies (8 datasets and 10 seeds). 
In general, our results are only aligned with \citep{Yen-2022} regarding the classification head \texttt{CL} performance. We observe that \texttt{CL} is highly susceptible to influential noise across models and methods. Dataset filtering with scores from \texttt{CL} improves downstream accuracy less than other layers (Figure~\ref{fig:m-acc}). 

At the same time, \texttt{WE} is not the best choice of layer for every case scenario. While \texttt{TracIn} applied on the smaller model RoBERTa-Large demonstrates best accuracies when \texttt{WE} is used, in general, the first or second attention layer groups are better options-- both in terms of performance and computational complexity, when larger models are used. Methods $\operatorname{TracIn}^{10}_{we}$ and $\operatorname{TracIn}_{we}$ improves upon \texttt{TracIn} with \texttt{WE}, but requires more clock time in practice (Section~\ref{sec:infl}). For other influence functions, we also observe the best performance on the first and second attention layer groups. Appendix~\ref{sec:a-layer-ranking} contains details on the layer ranking per model.

Figure~\ref{fig:m-acc} demonstrates the best test set accuracy variation (mean and Q1-Q3 quartile range) after filtering with influence scores for Mistral 7B. Configurations have statistically significant differences. The best layers outperform the worst ones (generally, \texttt{CL} for most tasks and methods) by 10-15\%. Similar observations for other models are in provided in Appendix~\ref{sec:a-acc-distr} due to space constraints.
% \vspace{-4.5mm}
% The results presented in this section are obtained by applying influence methods with standard mean aggregation for scoring. 
\begin{figure}[h]
    \centering
    \includegraphics[width=\textwidth]{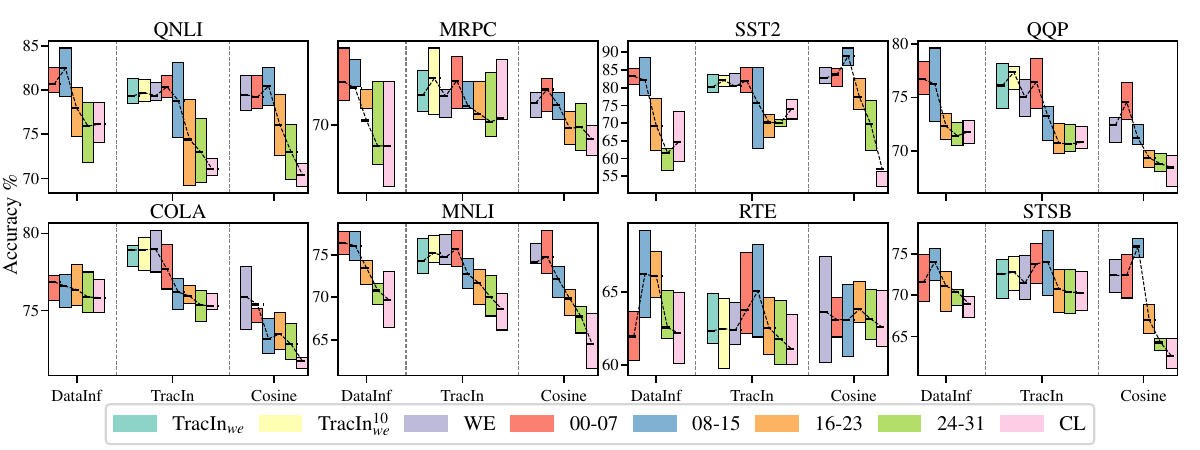}\vspace{-3mm}
    \caption{Best test accuracy of Mistral 7B after 30\% filtering (averaged over 10 runs). Early and middle attention layers yield the strongest influence method performance on most tasks. Differences are statistically significant (\textit{Friedman p-value: 1e-142}).}
    \label{fig:m-acc}
\end{figure}

% While table~\ref{tab:first-vs-last} compares layers for each influence function separately, the following results present a comparison across them.

\textbf{RoBERTa-Large}. 
% This is the smallest model. 
% On this shallow model, we observe a spike in filtering performance (for DataInf, Cosine) on later attention layer groups. DataInf on group 4 (layers 18-23) has the top-1 performance with 0.70 win rate over other configurations. Cosine on group 3, layers 12-17 (0.56 win rate) and TracIn on \texttt{WE} (0.51) are runner-ups. $\operatorname{TracIn}_{we}$ and $\operatorname{TracIn}^{10}_{we}$  are very similar (0.47) but dominated by TracIn on \texttt{WE}. 
% COLA accuracy trend on figure~\ref{fig:r-acc-trend} in App.~\ref{sec:a-acc-distr} demonstrates best that the ideal filtering \texttt{Full} sometimes is outperformed by influence methods that leave some percent of noise. 
For this comparatively smaller model, filtering performance peaks in later attention layers. DataInf achieves the strongest result on group 4 (layers 18–23) with a 0.70 win rate, outperforming other configurations. Cosine on group 3 (layers 12–17, 0.56 win rate) and TracIn on the embedding layer (0.51) are the closest competitors. The two variants $\operatorname{TracIn}{we}$ and $\operatorname{TracIn}^{10}{we}$ yield nearly identical scores (0.47) but are consistently dominated by TracIn on \texttt{WE}. Interestingly, COLA accuracy trends (see Figure~\ref{fig:r-acc-trend}, Appendix~\ref{sec:a-acc-distr}) reveal that the oracle setting \texttt{Full} is occasionally surpassed by influence-based filtering that retains a fraction of noisy samples.

% In some cases, filtering methods outperform the \texttt{Full} baseline (ideal noise removal). For instance,  
% figure~\ref{fig:r-acc-trend}. This is an exceptional case when the left noise is beneficial for fine-tuning.

\looseness-1\textbf{Llama-3.2 1B}. 
% In contrast to other models, influence functions demonstrate the worst noise filtering ability on Llama-3.2. 
% % filtering with influence functions on this model is the worst compared to others. 
% Configurations do not outperform the uniform noise removal, the baseline \texttt{Random} even when discarded samples have a high percentage of mislabeled. DataInf on group 2, layers 04-07 (0.64 win rate), group 1, layers 00-03 (0.58) is ranked best together with \texttt{Random} (0.64). Figure~\ref{fig:l-acc-trend} in App.~\ref{sec:a-acc-distr} shows accuracy trends for the best methods. \texttt{Random} surpasses others on QQP, COLA, and STSB in later epochs.
% At the same time, noise distribution in the influence range (figure~\ref{fig:l-noise-distr}, App.~\ref{sec:a2-noise-distr}) is not uniform. Noise concentrates on both sides of the influence range. A high percentage remains influential after filtering. 
% % This could suggest that, apart from there is some quality of left noise that swings the model learning trajectory to the suboptimal path. In comparison between influence methods, layers 04-07 perform much better than the others 
For Llama, influence functions perform worst among all settings: none of the configurations surpass the uniform removal baseline (\texttt{Random}), even when a large fraction of mislabeled samples is discarded. The strongest outcomes are obtained with DataInf on group 2 (layers 04–07, 0.64 win rate) and group 1 (layers 00–03, 0.58), comparable to \texttt{Random} (0.64). Accuracy trajectories (Figure~\ref{fig:l-acc-trend}, Appendix~\ref{sec:a-acc-distr}) further show that \texttt{Random} outperforms influence methods on QQP, COLA, and STSB in later epochs. Despite this, noise distribution analysis (Figure~\ref{fig:l-noise-distr}, Appendix~\ref{sec:a2-noise-distr}) reveals non-uniform patterns, with mislabeled samples concentrating at both extremes of the influence range, describing a substantial fraction of detrimental samples as influential for the task goal. We provide additional discussion on Llama results in Appendix~\ref{sec:a-llama}.
%Appendix~\ref{sec:a-llama} provides additional discussion and shows that focusing on LoRA B-value projection modules in layers 8–9 with Vote aggregation yields statistically significant improvements over the Random baseline.

\textbf{Qwen-2.5 1.5B}. 
% Cosine on group 2 (layers 07-13) and DataInf on group 1 (layers 00-06) and group 2 demonstrate the best performance and are ranked top-1. TracIn is the runner-up (rank 2) with its best outcomes on the second group. However, method $\operatorname{TracIn}_{we}$ gives comparable results to TracIn on group 2. Figure~\ref{fig:q-acc}, App.~\ref{sec:a-acc-distr} demonstrate that middle layers 07-13 frequently outperform others across GLUE tasks and methods. 
The strongest results are achieved by Cosine on group 2 (layers 07–13) and by DataInf on both groups 1 (layers 00–06) and 2, all ranked top-1. TracIn follows as the runner-up, with its best performance on group 2, while $\operatorname{TracIn}_{we}$ produces comparable outcomes on the same group. Accuracy trends (Figure~\ref{fig:q-acc}, Appendix~\ref{sec:a-acc-distr}) further show that middle layers (07–13) consistently outperform other layers across GLUE tasks and methods.

\looseness-1\textbf{Mistral 7B}. 
% % In contrast, many methods outperform \texttt{Random} on this model. 
% The top-1 ranked configurations include DataInf and TracIn methods.  DataInf demonstrates best outcomes on layers 08-15 (0.71 win rate), 00-07 (0.69), and 16-23 (0.51). TracIn has its best on attention layers 00-07, which is comparable to $\operatorname{TracIn}^{10}_{we}$ that improves TracIn on \texttt{WE}.
% % variation  (0.66), also belongs to the Pareto front. The best layers 00-07 of Cosine are ranked second. 
% Cosine on 00-07 and 08-15 is the runner-up.
% Figure~\ref{fig:m-acc-trend} demonstrates that different influence methods could be better on average for specific tasks. For instance, Cosine 08-15 has higher trends on SST2 and STSB, DataInf 08-15 on QNLI and RTE, TracIn on COLA. 
Top-1 configurations constitute DataInf and TracIn. DataInf achieves the strongest results on layers 08–15 (0.71 win rate), 00–07 (0.69), and 16–23 (0.51). TracIn performs best on layers 00–07, closely matched by $\operatorname{TracIn}^{10}_{we}$, which improves over TracIn on \texttt{WE}. Cosine on layers 00–07 and 08–15 ranks as runner-up. Accuracy trends (Figure~\ref{fig:m-acc-trend}, Appendix~\ref{sec:a-acc-distr}) reveal task-specific preferences: Cosine (08–15) excels on SST2 \& STSB, DataInf (08–15) on QNLI \& RTE, and TracIn on COLA.

% the accuracy trend with training epochs. The accuracy lies between the baseline asymptotes (lower and upper bounds, \texttt{Random} and \texttt{Full}). However, each dataset has its leader in terms of average accuracy. For instance,  

% For deeper models, the best layer position shifts closer to the middle and start of the network. For example, in figure~\ref{fig:m-acc}, the results for Mistral demonstrate the best performance of DataInf 08-15 layers. Cosine method best layers are 00-07, while TracIn still performs the best on \texttt{WE}. 

% \begin{figure}[h!]
%     \centering
%     \includegraphics[width=\textwidth]{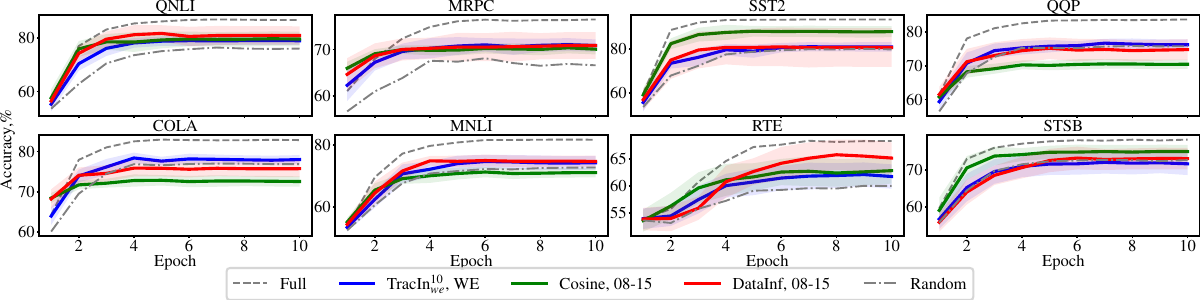}    
%     \caption{Mistral 7B accuracy on test set after 30\% filtering with the best by approach influence methods ignoring aggregation across all layers.  }
%     \label{fig:m-acc-trend}
% \end{figure}

% In summary, our observations indicate the following regarding the first research question. With mean score aggregation, the classification head \texttt{CL} demonstrates poor noise discrimination performance across the considered influence methods and models. The TracIn method has decreasing trends from the first layers to the last. Therefore, TracIn \texttt{WE} variations are better for this method. However, DataInf, Cosine frequently outperform TracIn on early-middle attention layers of deep models. 

\textbf{Remarks on RQ2 Findings.} Across models and methods, we observe a consistent pattern in layer-level performance. With mean-score aggregation, the classification head (\texttt{CL}) shows limited ability to discriminate noise across all influence methods. 
% TracIn exhibits a monotonic decline from early to late layers, making its word embedding (\texttt{WE}) variations more effective overall. 
% In contrast, 
DataInf and Cosine frequently outperform TracIn, particularly on early–middle attention layers of deeper models, suggesting that these layers encode representations that are most informative for noise filtering. 
Comprehensive rankings across all configurations are further provided in Appendix~\ref{sec:a1-test-set-accs} (Tables~\ref{tab:r-test-acc}, \ref{tab:l-test-acc}, \ref{tab:m-test-acc}). 
Additionally, we verify and reconfirm our findings on autoregressive datasets in Appendix~\ref{sec:a-autoreg-ds}.
% \vspace{-3mm}

% The reader should consider App.~\ref{sec:a1-test-set-accs}, Tables~\ref{tab:r-test-acc},\ref{tab:l-test-acc},\ref{tab:m-test-acc} for detailed ranking across configurations, including different aggregation methods.

\subsection{\textbf{[RQ3]} Exploring better strategies to combine influence scores}
\label{sec:aggr-exp}\vspace{-2mm}

% Correlation between training sample scores of different methods and layers (Figure~\ref{fig:m-infl-corr}) discerns three layer groups (at the beginning, in the middle, and at the end of a network). A layer of one group has similar sample scores. For instance, TracIn variations on \texttt{WE} qualitatively recognize the same noise. The last layers also have a strong agreement in every method. The correlation between groups is very weak. 
% % Middle layers happen to differ from both groups. 
% For deeper models, DataInf has similar scores to TracIn on the last layers, while the similarity with Cosine is weak.

% \begin{wrapfigure}{r}{0.55\textwidth}
%     \centering\vspace{-2mm}
%     \includegraphics[width=\linewidth]{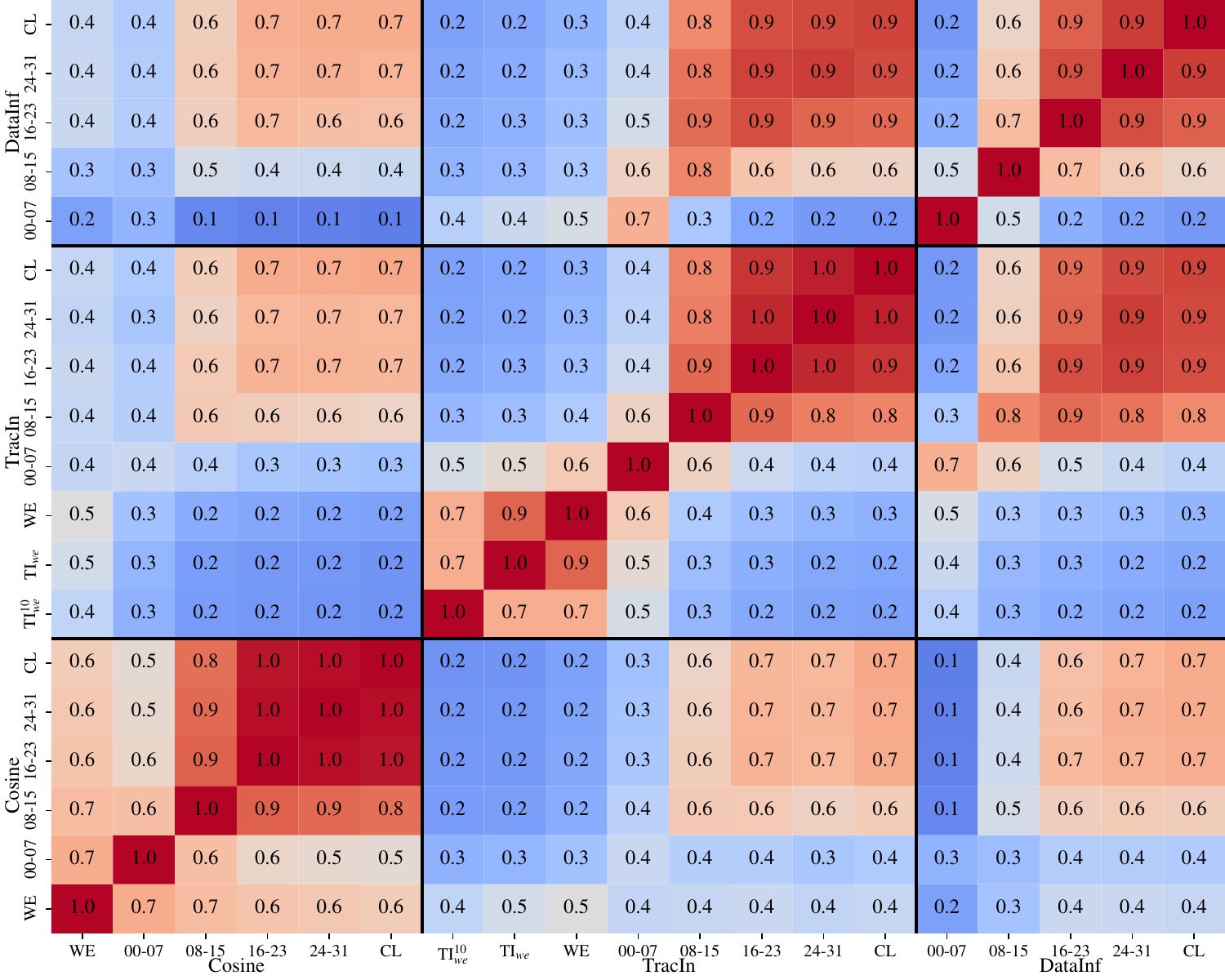}\vspace{-3mm}
%     \caption{Influence correlation between layers and methods (Mistral-7B). 
%     % The last layers demonstrate better agreement in scoring, especially on deeper models. First layers also demonstrate weak correlation, and capture qualitatively different noise than the last layers. 
%     }
%     \label{fig:m-infl-corr}
% \end{wrapfigure}

Next, we aim to study whether our proposed aggregation strategies (Section \ref{sec:appr}) can result in better influence function performance. We first analyze the correlation of training-sample influence scores across layers and methods (shown in Appendix~\ref{sec:a-ndr-across-layers} 
% Figure~\ref{fig:m-infl-corr}
Figure~\ref{fig:corr-all}), revealing three layer groups: \textit{early, middle}, and \textit{late}. Within a group, layers assign similar scores: for example, TracIn variations on \texttt{WE} consistently identify the same noisy samples, and late layers show strong agreement across all methods. Correlations between groups are weak, indicating that layers often \textit{compensate} for one another. Across methods, DataInf closely aligns with TracIn on later layers, while Cosine shows low agreement, providing a complementary view of importance attribution. These patterns motivate the underlying ideas behind our proposed aggregation strategies, \texttt{Rank} and \texttt{Vote}, that reconcile layer- and method-level discrepancies.
% From different alternative methods of influence score aggreation, we found \texttt{Rank} and \texttt{Vote} to be most promising. As with layer analysis, for each eggregation we collected accuracy change after retraining on dataset filtered by different strategies.
% % As alternative methods to mean aggregation, we consider \texttt{Rank} and \texttt{Vote} described in section~\ref{sec:infl}. We measure the improvement of noise filtering in terms of the best test set accuracy. 
% Figure~\ref{fig:rank-vote-diffs-mistral} reports the accuracy change from the level of mean aggregation. 
% Beyond mean aggregation, we evaluate alternative strategies. Among them, \texttt{Rank} and \texttt{Vote} emerge as the most promising. 
% % Similar to our layer analysis, we measure the effect of each aggregation by retraining on datasets filtered under different strategies and recording the resulting test accuracy. 
Figure~\ref{fig:rank-vote-diffs-mistral} illustrates the performance differences relative to mean aggregation for Mistral 7B (results for other models show similar trends and are provided in Appendix~\ref{sec:a3-agg-methods}).
% similarly, results for varying number of votes are provided in Appendix~\ref{sec:a-vote-k}).

\begin{figure}[t]
    \centering
    \includegraphics[width=\linewidth]{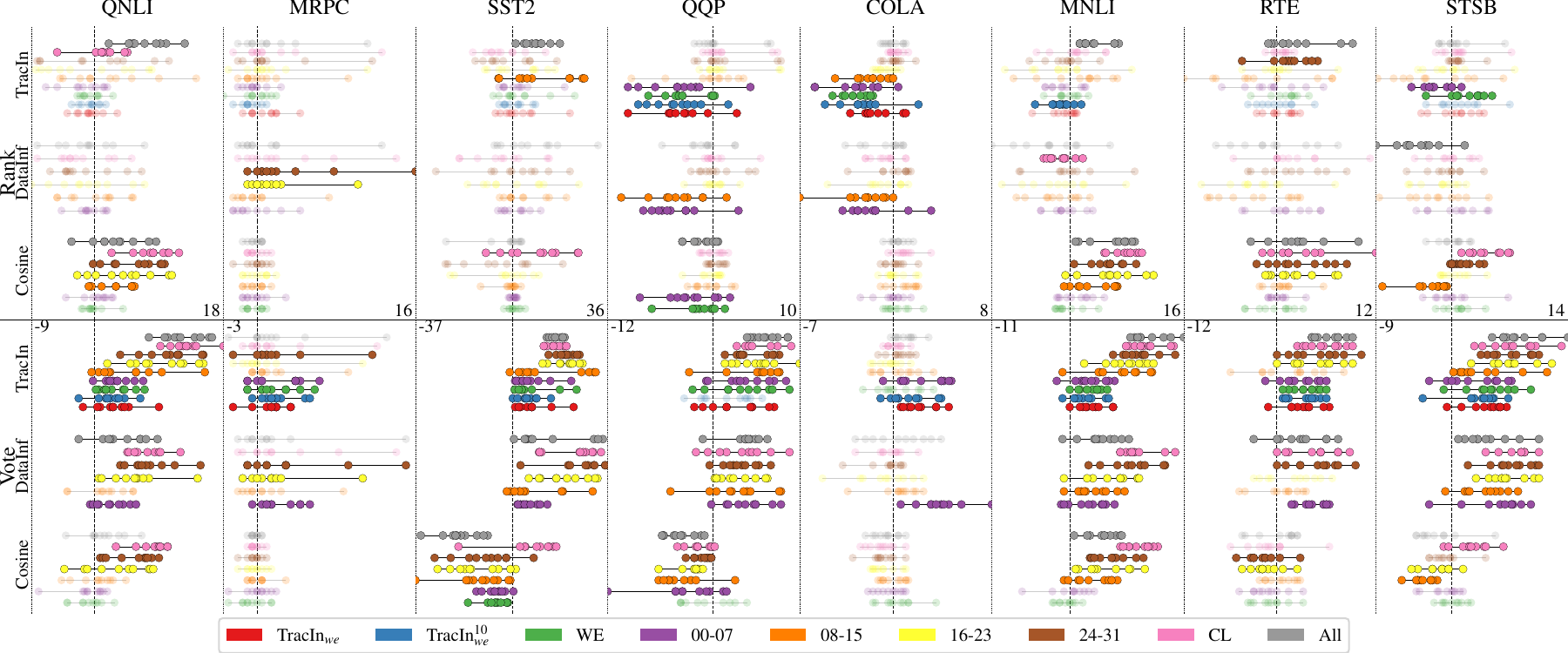}
    % \vspace{-3mm}
    \caption{Mistral 7B: Accuracy improvements (\% change) of \texttt{Rank} and \texttt{Vote} relative to mean aggregation, 10 runs per layer group. Voting consistently improves accuracy for DataInf and TracIn across most datasets and layers, while degrading performance for Cosine. Statistically significant differences are shown in opaque colors (Wilcoxon test, $p < 0.1$ and mean change $>1\%$).
    }
    % \vspace{-5mm}
    \label{fig:rank-vote-diffs-mistral}
\end{figure}

% Voting greatly improves the filtering on the Mistral model for TracIn and DataInf without statistically significant loss of accuracy on any dataset. COLA happens to be the ``toughest'' dataset in this regard. For TracIn, similar trends are observed for Llama and Roberta  (App.~\ref{sec:a3-agg-methods}). However, while the accuracy is increased on many datasets and layers, we observe drops on COLA for these models. \texttt{Rank} method is a bit better than \texttt{Vote} for DataInf method on Llama model, but is inferior on other configurations. 
% Voting substantially improves filtering for TracIn and DataInf on Mistral, with no statistically significant losses across datasets, although COLA remains the most challenging benchmark. Similar gains are observed for TracIn on Llama and RoBERTa (App.~\ref{sec:a3-agg-methods}), again with occasional drops on COLA. For DataInf on Llama, \texttt{Rank} slightly outperforms \texttt{Vote}, but in most other configurations \texttt{Vote} is superior.

Voting substantially improves filtering for TracIn and DataInf on Mistral, with no statistically significant losses across datasets, though CoLA remains the most challenging benchmark. Comparable gains are observed for TracIn on Llama and RoBERTa (Appendix~\ref{sec:a3-agg-methods}), with occasional drops on CoLA, QQP. Moreover, performance improvement is observed for NLI tasks, SST-2, and STS-B. For DataInf on Llama, \texttt{Rank} slightly outperforms \texttt{Vote}, whereas in most other settings \texttt{Vote} remains superior.

Utilizing influence functions with \texttt{Vote} on Mistral leads to significant improvements and 
% brings many outliers to the top 
promotes outlier configurations (layer–influence method combinations) to higher positions in the global ranking
% \td{change this wording "brings many outliers to the top" with something more descriptive and understandable -- rewrite please} 
(Appendix~\ref{sec:a1-test-set-accs}, Table~\ref{tab:m-test-acc}). 
% For instance, TracIn \texttt{CL} and TracIn$^{10}_{we}$ share the best rank and become non-dominant. For Roberta, TracIn \texttt{CL} rank changes \texttt{Mean} $10 \rightarrow 2$ \texttt{Vote}.
% % for Llama, it stays equal 9; 
% For Mistral, TracIn \texttt{CL} layer rank improves from $12$ for \texttt{Mean} aggregation to $1$ for \texttt{Vote}. 
For instance, TracIn \texttt{CL} and TracIn$^{10}_{we}$ now share the top rank and thus become non-dominant. On RoBERTa, the rank of TracIn \texttt{CL} improves from 10 (\texttt{Mean}) to 2 (\texttt{Vote}). On Mistral, the TracIn \texttt{CL} layer rank rises from 12 (\texttt{Mean}) to 1 (\texttt{Vote}).
% \td{This last line is also not clear please rewrite clearly (can use an LLM as before if needed)}
% In practice, the method could improve the scoring on modules that were initially poor but computationally feasible, making these modules better than dense ones. 

\looseness-1 The positional voting scheme introduces the hyperparameter $k$ in Equation~\ref{eq:vote-agg}, representing the number of votes each validation sample assigns to training samples. We also study how aggregation performance varies as a function of $k$. Figure~\ref{fig:layers-ndr-vote_k-value-b} shows the percentage of filtered detrimental samples across different values of $k$ and model layers for Qwen-2.5 1.5B with LoRA Value B module, which provides the strongest overall performance. For both DataInf and TracIn, performance is maximized for $k\in[10,50]$. In contrast, the Cosine-based method attains a local minimum around $k \approx 50$ and reaches its maximum at larger $k$. Similar trends are observed for other LoRA modules (see Appendix~\ref{sec:a-vote-k} for more details).

\begin{figure}[t]
    \centering
    \includegraphics[width=\linewidth]{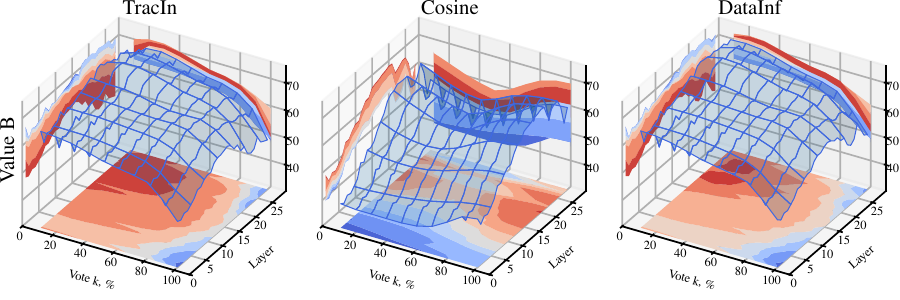}
    \caption{Filtered detrimental samples \% with LoRA Value B module for Qwen-2.5 1.5B as a function of the number of votes $k$ (positional voting, Equation~\ref{eq:vote-agg}) and the selected model attention layer. The heatmaps on the axes show where performance is maximized.
    }
    \label{fig:layers-ndr-vote_k-value-b}\vspace{-2mm}
\end{figure}

\textbf{Remarks on RQ3 Findings.} 
% Observed performance improvements conclude that better aggregation methods, like \texttt{Vote}, exist even across many contexts. For instance, DataInf 00-07 winning rate is increased to 0.84. 
% % However, the method is not generally better (at least across models and tasks for the specific influence function). 
% % The design of a better influence estimation is a persistent problem of a.
% Future studies could potentially target methods $\mathcal{A}$ based on unsupervised learning from influence signals or multi-objective approaches. 
Our results demonstrate that alternative aggregation strategies can significantly improve the effectiveness of influence estimation. In particular, \texttt{Vote} consistently outperforms mean aggregation across multiple datasets and models, with notable gains such as an increase in the win rate of DataInf (layers 00–07) to 0.84. These findings establish that aggregation is a critical factor in influence analysis. While \texttt{Vote} offers a simple and effective alternative, future work could explore efficient unsupervised or multi-objective aggregation methods $\mathcal{A}$ to exploit the influence signals and further enhance filtering performance.

% TODO: DataInf, LiSSA, TracIn - general statement through aggregation formula 

% TODO: find similar works 

% Previous work \citep{} argues that the first layers are better than the last in the noise detection. It argues that the last layers suffer more from the cancellation effect. During backpropagation, the gradients are aggregated in weight tensors through different paths, and the direction of updates could compensate for each other. Authors propose the estimation of such a cancellation effect through the following ratio:

% We examine proxy measures for estimating influence performance without retraining. 
% % Beyond efficiency, such measures can also enhance interpretability by revealing where noisy samples exert the most influence. 
% We first revisit the cancellation effect, followed by noise detection rate and noise distribution analysis.

% \subsubsection{Distribution of Noisy Samples Across Influence Scores}
% \label{sec:ndr-exp}

% \vspace{-3mm}
\subsection{\textbf{[RQ4]} Identifying measures to estimate downstream performance}
\label{sec:ndr-exp}\vspace{-1mm}

% \textbf{Noise detection rate}. 
% In the benchmarking setting, where anomalies are added to the training data in a controllable manner, we can figure out which samples, altered with noise, are filtered out as least influential according to the different methods. \textit{Noise detection rate} NDR measures the percentage of mislabeled samples in the first k samples ordered with influence functions. Thus, NDR at 30\% estimates the quantity of filtered noise before final fine-tuning (Stage 5, section~\ref{sec:exp}). 
%We evaluate how effectively influence functions identify noisy samples in a controlled benchmarking setup with injected anomalies. 
We now investigate whether we can utilize an external metric as proxy to evaluate influence function methods, thereby foregoing expensive retraining/fine-tuning based evaluation. To this end, we propose two metrics. The \textit{Noise Detection Rate} (NDR), quantifies the proportion of mislabeled samples ranked among the top-$k$\% least influential, i.e., \textit{the fraction of noise filtered prior to final fine-tuning.}
% NDR curve measures the percentage of mislabeled samples in the bottom k samples in the order of influence scoring. To compute the curve, we must know beforehand which sample has a flipped label. Therefore, this metric is available only during benchmarking. However, the NDR curve and its values at a specific threshold (30 percent, we filter out this fraction eventually) could have utility in estimating the goodness of new influence methods before attempting them on target tasks. 
% The noise could have a different distribution in the influence range. Ideally, we expect the method to capture anomalies as the least influential signals of the model. However, it is not always the case. NDR curve visually represents the noise position. 
% The AUC, in contrast to NDR at threshold, estimates the noise skew towards the beginning or influence range. 
In contrast, the Area Under the Curve (AUC) metric captures the overall skew of noise across the influence ranking, indicating whether mislabeled samples concentrate at the top of the list or are more uniformly spread.
%We hypothesize that NDR and AUC could serve as reliable proxy metrics. Even though the measures arguably represent only the quantity and not the quality of filtered noise, removing more anomalies is expected to lead to better accuracy. We validate the hypothesis by measuring the metric correlations to the actual performance observed after fine-tuning on filtered datasets. 
We validate the hypothesis that NDR and AUC can serve as reliable proxy metrics for influence estimation by measuring the metric correlations to the actual ground-truth performance observed after fine-tuning on filtered datasets. 
% NDR at threshold 30 estimates how many noisy samples are discarded before fine-tuning. We test its correlation to the downstream task performance. 
% Table~\ref{tab:spearman} presents the Spearman correlation $\rho$ for NDR and AUC for different influence functions, aggregation methods, and models (i.e., across selected layers, GLUE tasks, and random seeds). We observe weak-to-strong statistically significant positive correlations of NDR and AUC. NDR (threshold 30\%) has a higher correlation in most cases which makes it preferable over AUC. A weak positive correlation between 0.5 and 0.7 does not allow us to conclude that the metrics could be a reliable indicator in all settings. However, we observe a strong correlation 0.8-0.9 for \texttt{Vote} aggregation configurations. Indicators incorporating noise quality estimation or highly influential noise count could be more reliable. 
% This is left for future research. 
Table~\ref{tab:spearman} reports the correlation of NDR and AUC with the actual ground-truth downstream performance.
% , computed across influence functions, aggregation methods, models, selected layers, GLUE tasks, and random seeds. 
Overall, we observe moderate-to-strong statistically significant positive correlations. NDR at 30\% generally exhibits higher values than AUC. However, values in the 0.5–0.7 range indicate that these metrics are not consistently reliable across all configurations. \texttt{Vote} aggregation has stronger correlations of 0.8–0.9, suggesting that combining influence scores appropriately can enhance \textit{proxy predictiveness}. 

% Future work could explore indicators that also account for noise quality or the influence of highly impactful anomalies.
\begin{wraptable}[16]{r}{0.5\textwidth}
\caption{Spearman correlation ($\rho$) of training gradient cancellation ($C$), noise detection rate (NDR), and NDR-AUC with final task accuracy after filtering, across influence aggregation methods. High correlations are shown in bold and * denotes non-statistically significant results.}\vspace{-4mm}
\label{tab:spearman}
\centering
\resizebox{\linewidth}{!}{ 
\begin{tabular}{llccc|ccc|ccc}
\hline 
&        \multirow{2}{*}{Infl.func}  & \multicolumn{3}{c|}{Roberta-Large} & \multicolumn{3}{c|}{Llama-3.2 1B}
& \multicolumn{3}{c}{Mistral 7B} \\
% \cline{3-11}
  &      &  $C$  &  NDR  &  AUC &  $C$  &  NDR  &  AUC &  $C$  &  NDR  &  AUC  \\
\hline
 \multirow{5}{*}{\parbox{0.1cm}{\centering \rotatebox[origin=c]{90}{Mean}}} &      DataInf       &  0.2  &     0.7      &      0.5      & 0.0*  &     0.6      &      0.5      &  0.1 &     0.5      &      0.5      \\
   &   TracIn$_{we}$    & -0.3  &     0.6      &      0.4      & 0.2*  &     0.6      &      0.5      & 0.1*  &     0.4      &      0.3      \\
   & TracIn$^{10}_{we}$ & -0.3  &     0.6      &      0.4      & 0.0*  &     0.5      &      0.5      & 0.0*  &     0.5      &      0.3      \\
   &       TracIn       & 0.0*  &     0.4      &      0.3      & 0.1*  &     0.6      &      0.5      & 0.0* &     0.6      &      0.5      \\
   &       Cosine       &  0.2  &     0.7      &      0.6      & -0.0* &     0.5      &      0.5      & -0.1  &     0.6      &      0.5      \\
\hline
 \multirow{5}{*}{\parbox{0.1cm}{\centering \rotatebox[origin=c]{90}{Rank}}} &      DataInf       &  0.3  &     0.7      &      0.7      & -0.1* &     0.6      &      0.6      & 0.0*  &     0.6      &      0.7      \\
   &   TracIn$_{we}$    & -0.3  &     0.6      &      0.5      & -0.1* &     0.2      &     0.2*      & 0.0* &     0.3      &      0.2      \\
   & TracIn$^{10}_{we}$ & -0.3  &     0.5      &      0.5      & -0.2* &     0.1*     &     0.0*      & -0.1* &     0.2*      &     0.1*      \\
   &       TracIn       &  0.2  &     0.5      &      0.5      & -0.1* &     0.5      &      0.5      & -0.1* &     0.4      &      0.5      \\
   &       Cosine       &  0.2  &     0.7      &      0.7      & -0.1* &     0.6      &      0.6      & 0.1*  &     0.6      &      0.6      \\
\hline
 \multirow{5}{*}{\parbox{0.1cm}{\centering \rotatebox[origin=c]{90}{Vote}}}  &      DataInf       &  0.3  &     \textbf{0.8}      &      \textbf{0.8}      & -0.1  &     0.6      &      0.6      & 0.1  &     \textbf{0.9}      &      \textbf{0.8}      \\
   &   TracIn$_{we}$    & -0.4  &     \textbf{0.8}      &      \textbf{0.8}      & -0.1* &     0.5      &      0.5      & -0.1* &     \textbf{0.8}      &      \textbf{0.8}      \\
   & TracIn$^{10}_{we}$ & -0.4  &     0.7      &      \textbf{0.8}      & -0.1* &     0.5      &      0.5      & 0.0* &     \textbf{0.8}      &      \textbf{0.8}      \\
   &       TracIn       &  0.2  &     \textbf{0.8}      &      0.7      & -0.1  &     0.6      &      0.6      & 0.1  &     \textbf{0.8}      &      \textbf{0.8}      \\
   &       Cosine       &  0.2  &     0.7      &      0.6      & -0.1  &     0.2      &      0.2      & 0.0*  &     0.4      &      0.4      \\
\hline
\end{tabular}}
\end{wraptable}
Figure~\ref{fig:ndr-with-layer} further illustrates the proportion of noise filtered by each layer of Mistral 7B. Across models and influence functions, the Value B LoRA modules consistently capture the largest fraction of noisy samples, with peak filtering observed in the middle attention layers for larger models and later layers for smaller models (see Appendix~\ref{sec:a-ndr-across-layers} for additional results).
% While NDR accounts for filtering only in the first samples of the order, AUC also reflects the presence of influence noise. We observe a weak positive correlation of 0.5-0.6 across influence methods with Mean aggregation. Values for NDR are slightly better than for AUC, i.e. NDR curve shape is less important than captured noise at start. However, considering the nature of correlated metrics and weak correlation, we conclude that the high quantity of filtered noise does not always lead to good performance. There are several reasons for this. First, it is known that noise could potentially improve the learning trends. Second, the learning could be swayed by several influential noisy samples, not detected in the majority filtering. 
Additional \emph{sample-influence ranking} variation analysis in Appendix~\ref{sec:a-rank-var} indicates that average noise influence attains its minimum at layers with the highest noise-detection efficiency.\vspace{2mm}

% DataInf, TracIn methods with Vote aggregation demonstrate a stronger correlation of 0.8-0.9 between NDR and performance on Roberta-Large and Mistral models. As a result, this allows us to consider NDR measures of different layers and modules as a preliminary indicator of goodness. 
% However, we cannot generally conclude that an arbitrary influence method with high NDR would have high downstream performance due to weak correlation. 

% Even though NDR correlation is weak for some settings, we still consider the discarded noise quantity to indicate the desired configuration, as there are infinitely many variations of $(I, \mathcal{A})$ and selected layers. Methods \texttt{Rank} and \texttt{Vote} (section~\ref{sec:rank-vote}) were selected because of high NDR. 

% Figure~\ref{fig:ndr-with-layer} demonstrates how much noise is filtered by every layer of Mistral 7B. We observe that Value B LoRA modules are the best in capturing noise across all models and influence functions. App.~\ref{sec:a-ndr-across-layers} provides more NDR trends. Value B achieves the highest noise filtering percentage on the attention layers close to the middle of a model. 
\begin{figure}[t]
    \centering
    \includegraphics[width=1.0\textwidth]{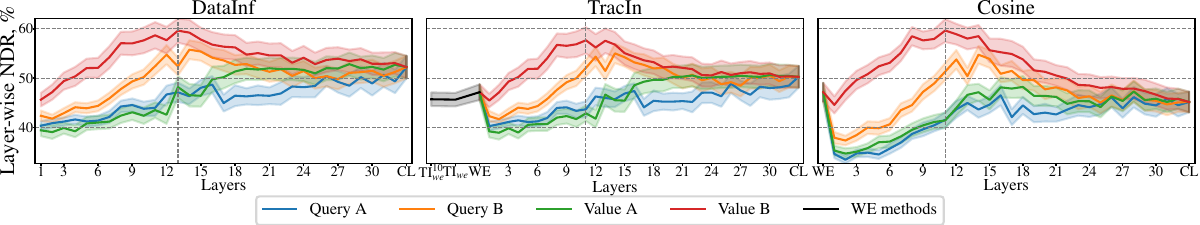}\vspace{-3mm}
    \caption{Layer-wise NDR (\%) across layers of Mistral 7B. }\vspace{-4mm}
    \label{fig:ndr-with-layer}
\end{figure}

% \td{dvitel: I plan to add HERE table comparing query vs value, LoRA A vs LoRA B NDRs, targeting to answer the question which modules better isolate noise as least influential}

\looseness-1\textbf{Remarks on RQ4 Findings.} 
% To summarize the answer to RQ2, NDR could indicate method performance in limited contexts. In many cases, it has only a weak positive correlation. Measures that incorporate noise quality could potentially be better indicators.
The cancellation effect shows little correlation with downstream performance and fails to reliably indicate influential layers or weights. In contrast, NDR serves to be a useful proxy, particularly under \texttt{Vote} aggregation, though correlations are generally moderate-to-high across layers, tasks, and models. The proposed AUC metric exhibits similar trends. Together, these results suggest that proxy metrics incorporating both the quantity and quality of filtered anomalies could offer more reliable indicators for influence function evaluation, highlighting a promising direction for future research.

\section{Conclusion}
 
% This work targets the search for the best model layers to identify anomalies based on influence functions. The results align with previous works; the classification head has suboptimal performance. However, embeddings are not the best choice either. LoRA Value B modules of middle attention layers capture the highest anomaly percentage in the least influential range. The same layers demonstrate the best downstream task performance in many contexts. 

% We validate the proposed indicators, the noise detection rate, and cancellation estimation, in terms of how well they align with observed downstream task accuracy. While cancellation does not indicate a good layer selection, the higher amount of filtered noise has a weak-to-strong positive correlation with performance in different contexts. 
% Finding better indicators is important for future work, as it would allow the design of better influence estimators without the need for exhaustive fine-tuning. 

% To improve estimation on the classification head, we propose and investigate alternative methods of individual influence score aggregations. Voting gives an accuracy boost across many settings for deeper models. Proposed approaches also demonstrate that it is possible to alleviate the compensation of influence values.

Our paper conducts multi-faceted analysis to better understand how influence functions can be effectively leveraged for data-centric learning in large language models. Firstly, our work challenges prior assumptions in past work about the most informative layers for influence estimation in LLMs and demonstrates that middle attention layers often provide more reliable signals than the \textit{first} embedding layers. Secondly, we introduce improved layer-wise influence score aggregation strategies that move beyond the standard averaging approach, and attain significant performance improvements in influence analysis. Finally, we propose external evaluation metrics, such as the Noise Detection Rate (NDR), to offer a more robust framework for evaluating and interpreting influence scores without undertaking costly retraining for influence function evaluation. These findings advance the understanding of training data influence in large language models and provide practical tools for more accurate and efficient model auditing.

% Our paper investigates how influence functions can be better leveraged data-centric learning in large language models. Owing to limited past work in identifying which layers are better suited for influence function analysis, we conduct analysis across several LLMs and datasets. We first contrast with prior work by \cite{Yen-2022} and We show that (contrary to common practice in selecting layers for influence function analysis) neither embeddings nor the classification head is an optimal choice. Moreover, LoRA Value B modules in middle attention layers consistently capture the highest proportion of anomalies and yield superior downstream influence estimation performance. Among aggregation strategies, ranking and voting alleviate influence score compensation and significantly improve accuracy over averaging, especially for \texttt{CL}. We validate that the gradient cancellation effect~\citep{Yen-2022} does serve as a reliable predictor of retraining performance when a group of least influential samples is removed, whereas noise-detection measures exhibit moderate-to-strong correlations with the accuracy, making them useful for preliminary evaluations of future attribution methods. 
% % Future research could focus on refining influence aggregation such proxy measures to reduce the reliance on costly retraining when developing new influence estimators.

%\clearpage

\section{Reproducibility Statement}
We have open-sourced our code and implementation for our experiments: \href{https://github.com/dvitel/nn-infl}{https://github.com/dvitel/nn-infl}. To support reproducibility, multiple runs were executed with fixed random seeds specified in the accompanying shell scripts, and determinism was enabled in the underlying \textit{pytorch} framework. All experiments were conducted on an Ubuntu server with 8x NVIDIA A100s (40GB VRAM/GPU). 

%\section{Ethics Statement}

\bibliography{refs}
\bibliographystyle{unsrtnat}

\newpage

\newpage

%%%%%%%%%%%%%%%%%%%%%%%%%%%%%%%%%%%%%%%%%%%%%%%%%%%%%%%%%%%%

\appendix

\setlength{\textfloatsep}{5pt}  % vertical space between floats and text
\setlength{\floatsep}{5pt}      % vertical space between floats
\setlength{\intextsep}{5pt}     % vertical space for in-text floats
\setlength{\belowcaptionskip}{2pt} % space below captions
\setlength{\abovecaptionskip}{2pt} % space above captions

\section*{Appendix}

\section{Experimental setup details}
\label{sec:a-exp}
% TODO: + add this: Configurations are compared on 80 sampled best test accuracies (8 datasets and 10 seeds). 

This section describes the five-stage experimental pipeline from Section~\ref{sec:exp} in detail.

% \textbf{Stage 1}. The anomaly detection problem is defined on GLUE datasets with mislabeled training samples. Preprocessing selects a maximum of 4500 training samples, 500 validation samples for influence computation, and 500 test-set samples for downstream performance reporting from the dataset under consideration. Ten random seeds define different selections. On every binary classification task, we flip 20\% training labels uniformly selected with a random seed. MNLI task is converted to binary by selecting the samples that belong to the 0 (entailment) and 1 (neutral) classes. For STSB, 0-5 scores are converted to 0 (different) and 1 (similar) labels by threshold 3. Then, noise is added similarly to other tasks. 

\textbf{Stage 1.} We define the anomaly detection task on GLUE datasets with injected label noise. For influence computation, we select up to 4500 training samples, 500 validation samples, and 500 test samples for reporting downstream performance, with 10 random seeds controlling different selections. For binary classification tasks, 20\% of training labels are flipped uniformly at random. MNLI is binarized by keeping only entailment (0) and neutral (1) classes, and STSB scores 0–5 are converted to 0/1 using threshold 3; noise is added similarly.

Checkpoints preparation adds LoRA modules~\citep{Dettmers-2023} to query and value projections in every attention layer, compresses embeddings to the tokens participating in the GLUE dataset splits. The embedding compression greatly speeds up influence computations based on gradients. 
% For \texttt{WE}, we report the number of parameters of compressed embeddings in the result tables.

% We consider the following models. 
% RoBERTa-Large contains 355M parameters, 1.5M trainable parameters, 24 attention layers, and a classification head with a dense projection module. Instruction-tuned Llama-3.2 has 1B parameters, 16 attention layers with attached LoRA modules, and a tunable classification head (0.5M trainable parameters in total). This model is multilingual. Therefore, embedding compression significantly reduces the number of parameters for influence estimation. Qwen-2.5 version with 1.5B parameters has 28 attention layers, Mistral 7B -- 32 attention layers. 
% % A total number of tunable LoRA and classification head parameters is around 1.7 M. 
% We leave word embedding layers \texttt{WE} of the models frozen during training, but compute gradients on them when estimating the influence with different methods from section~\ref{sec:infl}. 
% We use the following learning rates: Roberta-Large 3e-4, Llama 1e-4, Qwen 3e-4 (1e-3 for RTE), Mistral 5e-5. All models are tuned with 16-bit float weights. 

RoBERTa-Large has 355M parameters (1.5M trainable), 24 attention layers, and a classification head with dense projection. Llama-3.2 1B is multilingual, with 16 attention layers, attached LoRA modules, and a tunable classification head (0.5M trainable parameters); embedding compression reduces parameters for influence estimation. Qwen-2.5 1.5B has 28 attention layers, and Mistral 7B has 32. Word embeddings (\texttt{WE}) remain frozen during training, but gradients are computed for influence estimation. Learning rates are: Roberta 3e-4, Llama 1e-4, Qwen 3e-4 (1e-3 for RTE), and Mistral 5e-5; all models use 16-bit float weights.

% \textbf{Stage 2}. Fine-tuning on noisy datasets starts with the corresponding initial checkpoint and proceeds for 10 epochs. During this process, we track validation set accuracy and loss metrics to detect the checkpoint useful for influence estimation. Work~\citep{Garima-2020} provides guidelines on which checkpoints are better for TracIn. Considering them and conducting some preliminary runs, we pick the checkpoint with the best validation set loss in all reported experimental runs. 
\textbf{Stage 2}. Models are fine-tuned on noisy datasets for 10 epochs from the initial checkpoint. Validation loss and accuracy are tracked to select the checkpoint for influence estimation. Following~\citep{Garima-2020} and preliminary experiments, we use the checkpoint with the lowest validation loss in the following stages.

\textbf{Stage 3}. We split the neural network layers into groups: embeddings \texttt{WE}, classification head \texttt{CL}, and 4 groups with an equal number of internal attention layers. The groups in the results are numbered from 1 to 4. For instance, group 1 of RoBERTa-Large includes the first to sixth attention layers, also denoted 00-05. Group 2 has 06-11, group 3 -- layers 12-17, and group 4 -- 18-23. Similar grouping is done for other models. 

\textbf{Stage 4}. The scoring routine computes attributed values from the collected influence tensors of layer groups for a particular aggregation method under consideration (default is averaging). This stage also estimates other indicator metrics such as noise detection rate (based on knowledge of the flipped label), noise histograms, and cancellation effect.

% \textbf{Stage 5}. Obtained scores define the priority of training samples. 30 percent of samples with the lowest scores are discarded from the datasets. Filtered datasets are used in fine-tuning with the same hyperparameters of Stage 2, starting from the initial checkpoint prepared on Stage 1. The performance of the influence method filtering is estimated with an accuracy metric on the separate test set, which was not used for training and for influence computation. 
\textbf{Stage 5}. Training samples are ordered by their influence scores, and the lowest 30\% are removed. The filtered datasets are fine-tuned using the same hyperparameters as Stage 2, starting from the Stage 1 checkpoint. Method performance is evaluated via test-set accuracy on samples unseen during training or influence computation.

% \textbf{Computing cancellation}. Cancellation effect (section \ref{sec:cancellation}) is estimated with L1 norm on Stage 2 checkpoint. cancellation metric routine passes the training set through this checkpoint in evaluation mode to sum up gradients and their absolute values. In this way, it estimates cancellation for one epoch, ignoring dropout, activation normalization, and the stochasticity of sample batching. This approach differs from the original work, where cancellations are collected during training from the start to the checkpoint of interest. We preserve computed cancellation values for each tunable module and word embeddings. 

We apply the following techniques to optimize influence computation.

% \textbf{Compression}. Instead of using all embedding weights, the layer is replaced with one that contains only weights of tokens from the task dataset. Creating sub-views of embeddings could be more appropriate for a production setting. However, this technique is useful in benchmarking where many tokens are not utilized (especially in multi-lingual models). 
\textbf{Compression}. To reduce computation, we replace the full embedding layer with weights corresponding only to tokens present in the task dataset. While sub-views may be preferable in production, this approach is effective in benchmarking, especially for multilingual models with many unused tokens.

% \textbf{Batching the influence}.
\textbf{Batching}. Equations~\ref{eq:tracin},\ref{eq:cosine},\ref{eq:datainf} depend on dot-product $\nabla l^T_{\bar{x}'} \nabla l_{\bar{x}}$. 
% First, we estimate the number of gradients required by the methods. 
To maintain the gradients for $n$ training samples, $k$ validation samples, and $m$ parameters, \texttt{TracIn} and \texttt{Cosine} requires $O(nkm)$, \texttt{DataInf} -- $O(nkm + n^2m)$ of memory. In one iteration, we pick $n_1$ training and $k_1$ validation samples s.t. $O(n_1k_1m)$ gradients fit the available GPU memory, and compute $O(n_1k_1)$ influence scores between sample pairs. Staying in $O(n_1k_1m)$ limit requires the iteration through all pair of $\lceil n/n_1 \rceil$ training and $\lceil k/k_1 \rceil$ validation batches.  

Inner loop has to recompute the gradients because of the memory limit. For A100 80GB GPU, 4500 training and 500 validation samples, influence values on Mistral embeddings are computed in 20 batches in our experiments. It is still very expensive for \texttt{DataInf} because it also depends on $\nabla l^T_{\bar{z}} \nabla l_{\bar{x}}$.

\clearpage

\section{Best test set accuracy distributions and trends}
\label{sec:a-acc-distr}

This section presents the best test set accuracy distribution after filtering for RoBERTa (Figure~\ref{fig:r-acc}), Llama (Figure~\ref{fig:l-acc}), and Qwen (Figure~\ref{fig:q-acc}). TracIn indeed has decreasing performance trend from early \texttt{WE} to later \texttt{CL} layers across models. However, DataInf and Cosine frequently outperform TracIn on early or middle attention layers for bigger models, and later attention layers for the smaller Roberta. 

\begin{figure}[h]
    \centering
    \includegraphics[width=\textwidth]{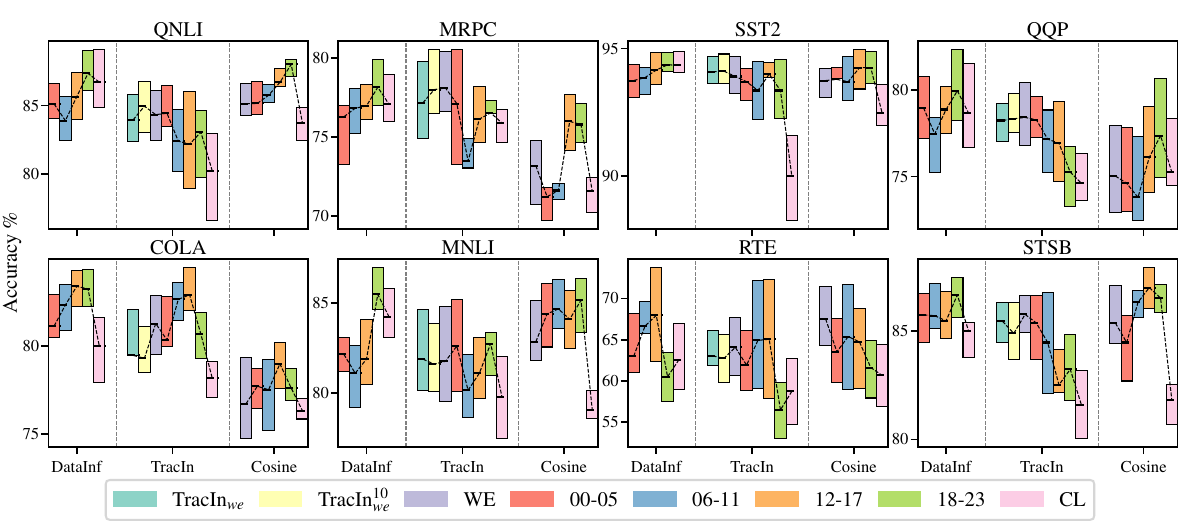}
    \caption{Roberta-Large best test set accuracy after 30\% filtering over 10 runs.  }
    \label{fig:r-acc}
\end{figure}

\begin{figure}[h!]
    \centering
    \includegraphics[width=\textwidth]{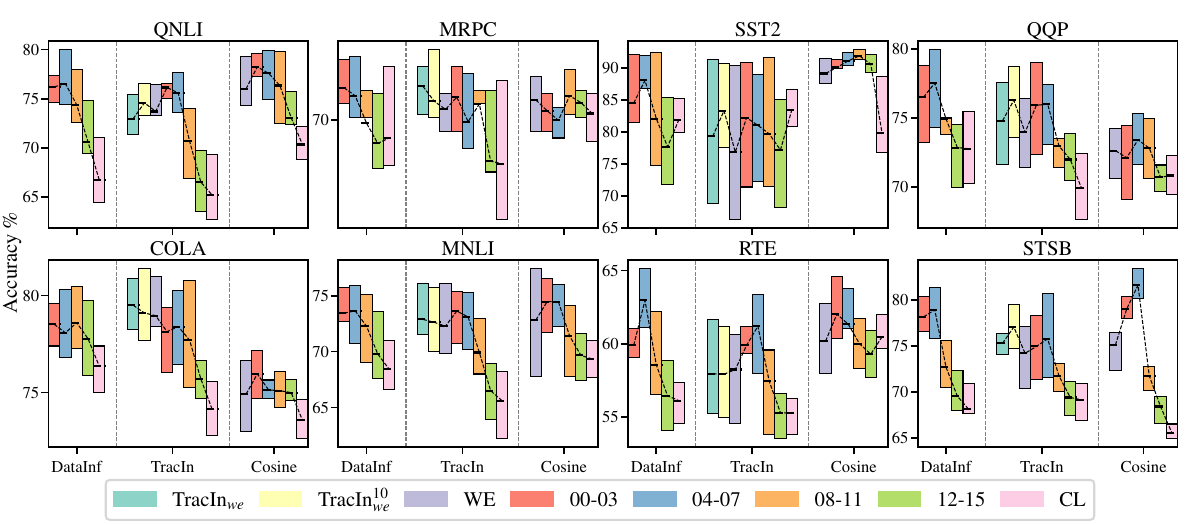}
    \caption{Llama-3.2 1B best test set accuracy after 30\% filtering over 10 runs.  }
    \label{fig:l-acc}
\end{figure}

\begin{figure}[h!]
    \centering
    \includegraphics[width=\textwidth]{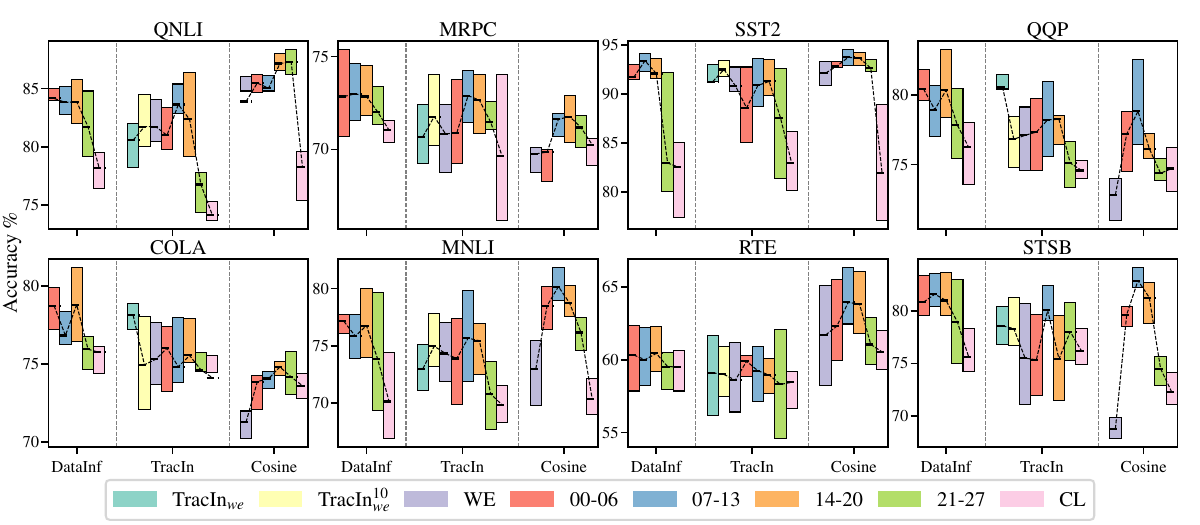}
    \caption{Qwen-2.5 1.5B best test set accuracy after 30\% filtering over 10 runs.  }
    \label{fig:q-acc}
\end{figure}

The following trends are presented for the best configurations, influence method, and layer group for Roberta (Figure~\ref{fig:r-acc-trend}), Llama (Figure~\ref{fig:l-acc-trend}), Qwen (Figure~\ref{fig:q-acc-trend}),  and Mistral (Figure~\ref{fig:l-acc-trend}). Accuracy trends are measured on the test set, not used for training or influence computation. 

\begin{figure}[h]
    \centering
    \includegraphics[width=\textwidth]{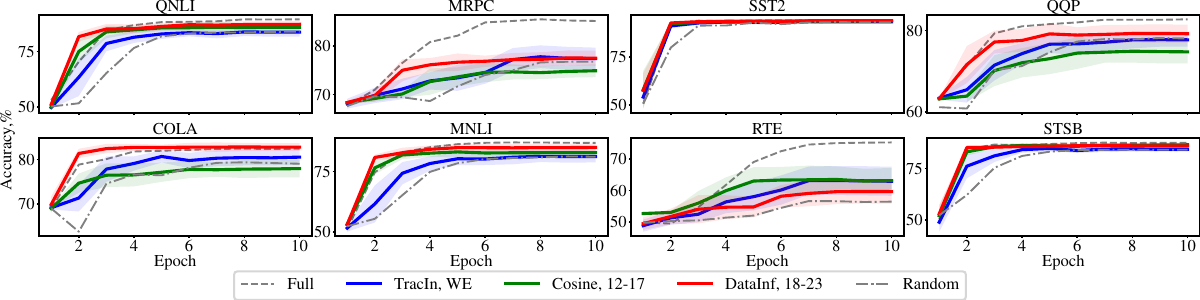}
    \caption{Roberta-Large accuracy trend on test set after 30\% filtering.  }
    \label{fig:r-acc-trend}
\end{figure}

\begin{figure}[h!]
    \centering
    \includegraphics[width=\textwidth]{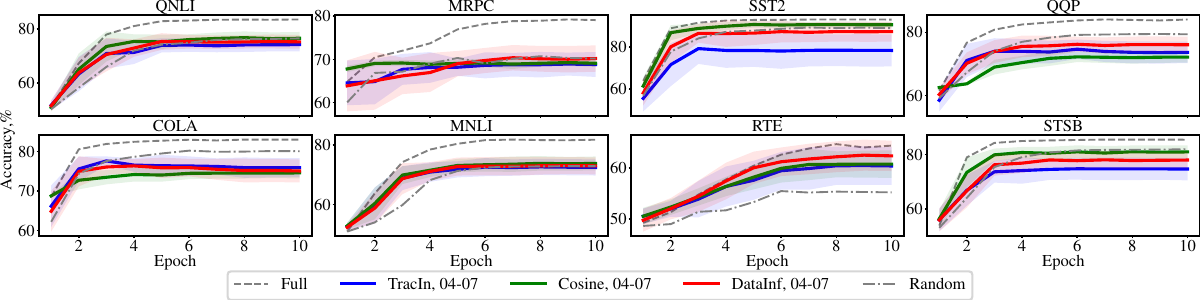}    
    \caption{Llama-3.2 1B accuracy trend on test set after 30\% filtering.  }
    \label{fig:l-acc-trend}
\end{figure}

\begin{figure}[h!]
    \centering
    \includegraphics[width=\textwidth]{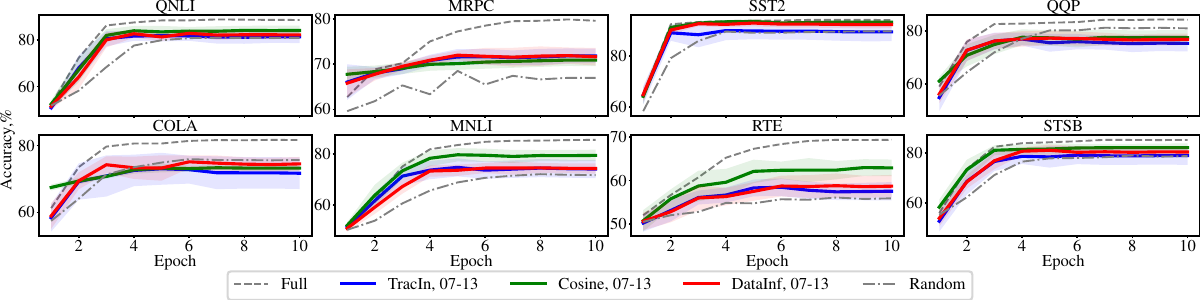}    
    \caption{Qwen-2.5 1.5B accuracy trend on test set after 30\% filtering.  }
    \label{fig:q-acc-trend}
\end{figure}

\begin{figure}[h!]
    \centering
    \includegraphics[width=\textwidth]{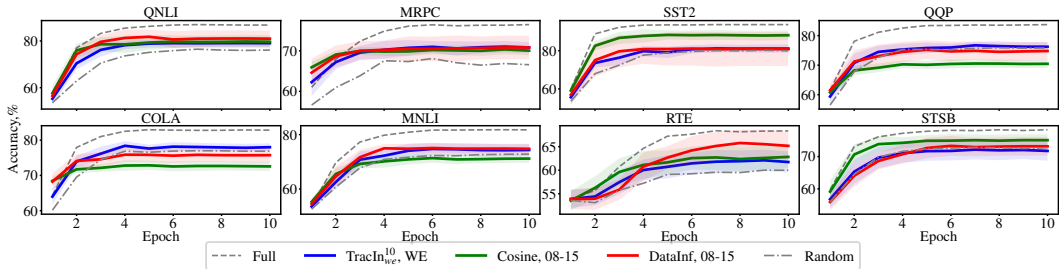}    
    \caption{Mistral 7B accuracy trend on test set after 30\% filtering.  }
    \label{fig:m-acc-trend}
\end{figure}

\clearpage

\section{Layer ranking per model}
\label{sec:a-layer-ranking}

Table~\ref{tab:first-vs-last} presents the results of layer performance ranking based on the measured best test set accuracy after sample filtering, where 30\% of the least influential samples according to each layer are discarded and subsequent retraining is performed. 
In the table, the model weight groups are presented as rows ordered starting from $\operatorname{TracIn}^{10}_{we}$, $\operatorname{TracIn}_{we}$ (i.e., selection from \texttt{WE}), followed by the full \texttt{WE} layer, attention layer groups, and the \texttt{CL} layer. 
Attention groups are shown in the same order as they appear in the model, but their size varies with the model.
% of noise of different layer groups for three models and three influence computation approaches. 
% The ranking threshold is 0.5. 
% Columns of the table contain the layer's performance ranking and win ratio for each model and influence function. 
% Configurations are compared on 80 sampled best test accuracies (8 datasets and 10 seeds). 

\begin{table}[ht]
\caption{Layer ranks (embed \texttt{WE}, attn 1–4, head \texttt{CL}) and win rates (in parentheses) across models and influence functions (threshold=0.5). `–' denotes N/A. More specifically, DataInf is infeasible to run on \texttt{WE} and TracIn specific \texttt{WE} variations $\operatorname{TI}$. Most performant layers are shown in bold. 
% \td{You need to briefly mention what () denotes -- std dev? - added (in parentheses)}
}
% \vspace{-2mm}
\label{tab:first-vs-last}
\centering
\resizebox{\linewidth}{!}{ 
\begin{tabular}{cccc|ccc|ccc|ccc}
\hline
  \multirow{2}{*}{Layers}  &        \multicolumn{3}{c|}{Roberta-Large}        & 
  \multicolumn{3}{c|}{Llama-3.2 1B} &
  \multicolumn{3}{c|}{Qwen-2.5 1.5B} &
  \multicolumn{3}{c}{Mistral 7B} \\
  % \cline{2-13}
& DataInf & TracIn & Cosine & DataInf & TracIn & Cosine & DataInf & TracIn & Cosine & DataInf & TracIn & Cosine \\
\hline 
TI$^{10}_{we}$ 
&    -    & 1 (.49) &  -      
&    -    & \textbf{1 (.56)} &  -
&    -    & 2 (.49) &  - 
&    -    & 1 (.58) &  -    \\
TI$_{we}$ 
&    -    & 2 (.48) &   -
&    -    & 2 (.52) &   -
&    -    & 1 (.51) &   -  
&    -    & 2 (.50) &   -   \\
\texttt{WE}
&    -    & \textbf{1 (.54)} & 2 (.41) 
&    -    & 3 (.44) & 3 (.42)  
&    -    & 3 (.40) & 3 (.19)  
&    -    & 2 (.50) & 1 (.58) \\
1
& 4 (.29) & 3 (.50) & 3 (.35) 
& 1 (.53) & 2 (.54) & \textbf{1 (.53)} 
& \textbf{1 (.49)} & 2 (.42) & 2 (.45)
& 2 (.53) & \textbf{1 (.61)} & \textbf{1 (.60)} \\
2
& 4 (.30) & 5 (.40) & 2 (.40) 
& \textbf{1 (.58)} & 1 (.54) & \textbf{1 (.53)} 
& 1 (.48) & \textbf{1 (.57)} & 1 (.60)
& \textbf{1 (.60)} & 2 (.48) & 2 (.55) \\
3 
& 2 (.40) & 4 (.44) & \textbf{1 (.56)}
& 2 (.42) & 4 (.38) & 2 (.45) 
& \textbf{1 (.49)} & 2 (.48) & \textbf{1 (.61)}
& 2 (.39) & 3 (.29) & 3 (.37) \\
4
& \textbf{1 (.55)} & 6 (.36) & 1 (.54) 
& 3 (.23) & 5 (.22) & 4 (.28) 
& 2 (.31) & 4 (.30) & 2 (.40) 
& 3 (.20) & 4 (.20) & 4 (.23) \\
\texttt{CL} 
& 3 (.36) & 7 (.17) & 4 (.15) 
& 4 (.18) & 6 (.17) & 5 (.15) 
& 3 (.16) & 5 (.22) & 3 (.18) 
& 3 (.20) & 4 (.20) & 5 (.09) \\
\hline
\end{tabular}
}
\end{table}

Here, the ranking is given across layers. In many cases, first or second attention groups happen to be most performant, which makes them a good selection for detrimental sample detection. 

\section{Influence score correlations between layers and methods}

Figure~\ref{fig:corr-all} presents computed correlations between training-sample attributed scores of different layers and influence methods. TracIn splits the model into three groups of \emph{early, middle, and later} layers based on their score agreement, i.e., the strength of correlation. DataInf has high correlation with TracInf on later layers such as \texttt{CL}. This layer split suggests that different parameters could capture different detrimental samples, and a good aggregation of scores is viable.  

\begin{figure}[ht]
    \centering
    \includegraphics[width=\textwidth]{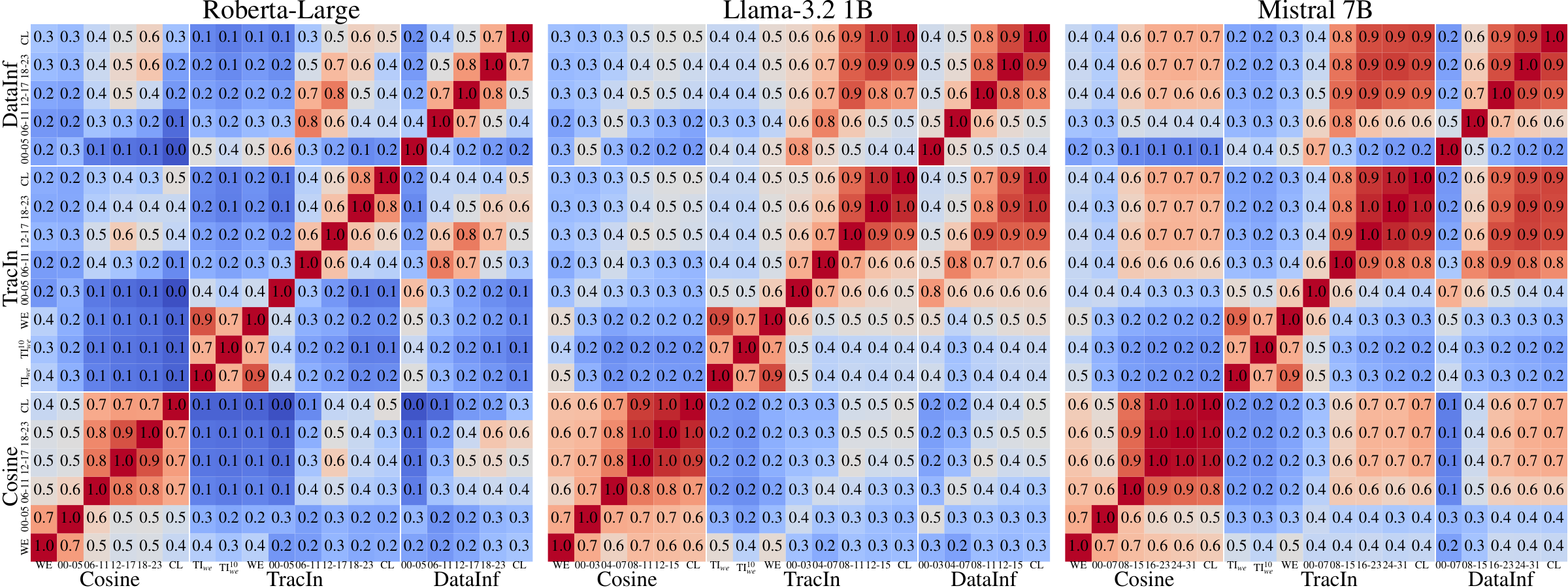}    
    \caption{Influence scores correlation for 3 models, methods, and layers.}
    \label{fig:corr-all}
\end{figure}

\clearpage

\section{Detected noise in 30\% least influential samples}
\label{sec:a-ndr-across-layers}

The following results present the noise detection across layers for Roberta-Large (Figure~\ref{fig:r-ndr-across-layers}), Llama-3.2 1B (Figure~\ref{fig:l-ndr-across-layers}), Qwen-2.5 1.5B (Figure~\ref{fig:q-ndr-across-layers}), Mistral 7B (Figure~\ref{fig:m-ndr-across-layers}). The consequent conclusions are (1) LoRA Value B is the best in capturing noise across models; (2) layers with high NDR also have high downstream performance; (3) spike of NDR moves from later attention layers for shallow models to the early-middle layers for deeper models.

\begin{figure}[h!]
    \centering
    \includegraphics[width=\textwidth]{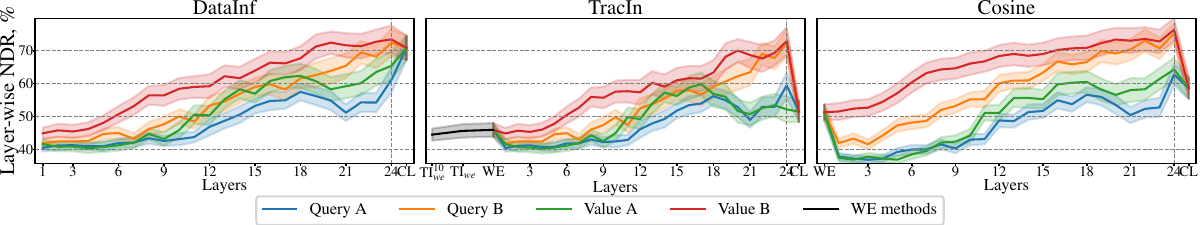}    
    \caption{Roberta-Large NDR for 30\% threshold across modules and layers. 
    % \td{Need to tell people which model for each of these.}  
    }
    \label{fig:r-ndr-across-layers}
\end{figure}
\begin{figure}[h!]
    \centering
    \includegraphics[width=\textwidth]{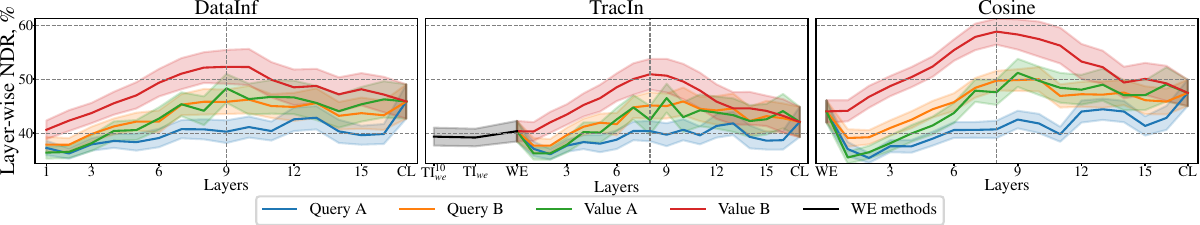}    
    \caption{Llama-3.2 1B NDR for 30\% threshold across modules and layers.  }
    \label{fig:l-ndr-across-layers}
\end{figure}
\begin{figure}[h!]
    \centering
    \includegraphics[width=\textwidth]{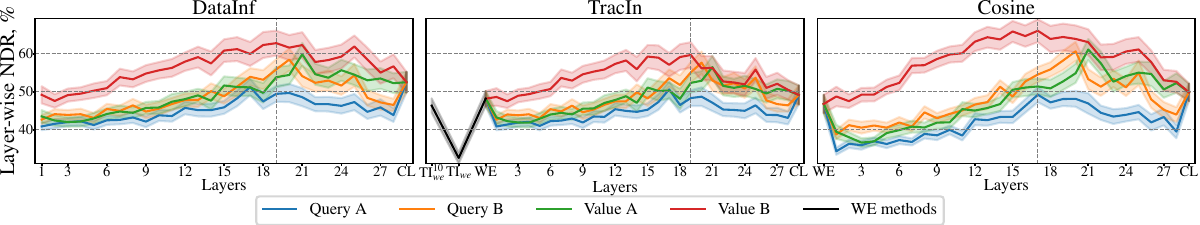}    
    \caption{Qwen-2.5 1.5B NDR for 30\% threshold across modules and layers.  }
    \label{fig:q-ndr-across-layers}
\end{figure}
\begin{figure}[h!]
    \centering
    \includegraphics[width=\textwidth]{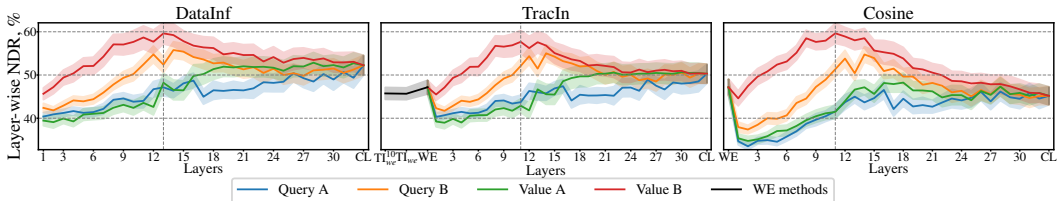}    
    \caption{Mistral 7B NDR for 30\% threshold across modules and layers.  }
    \label{fig:m-ndr-across-layers}
\end{figure}

Next, from the following result we can conclude that Value B on network middle layers captures the highest noise levels and is a good choice for influence estimation. 

\clearpage

\section{Best test set accuracy after filtering}
\label{sec:a1-test-set-accs}

The following tables~\ref{tab:r-test-acc},\ref{tab:l-test-acc},\ref{tab:m-test-acc} present the ranking of filtering methods based on different influence scoring and aggregation for 3 models under consideration. 

\begin{table}[h!]
\centering
\caption{Roberta-Large best \textbf{test set accuracy} on GLUE after 30\% filtering with influence scores. Methods are ordered from best to worst with average ranking across 10 runs per setup. The highest accuracy means are highlighted.  }
\label{tab:r-test-acc}
\resizebox{\linewidth}{!}{  
% [inline block 0: 1 envs, 29680 chars -> data_tex | \begin{tabular}{|lll|cccccccccc|} \hline...]


}

\end{table}

Rows are ordered according to mean rank from the best method to the worst. Two baselines are Random (dropping 30\% uniformly) and Full (removing all noise (20\%) and 10\% of random clean samples). A good filtering method should outperform the Random, the lower bound, and approach the Full, upper bound. The tables highlight the best non-baseline methods for each GLUE dataset. AUC measure is specified for the NDR curve; a high value indicates skew of noise towards the beginning of the influence range.

\begin{table}[h]
\centering
\caption{Llama-3.2 1B best \textbf{test set accuracy} on GLUE after 30\% filtering with influence scores. Methods are ordered from best to worst with average ranking across 10 runs per setup. The highest accuracy means are highlighted. In contrast to Roberta and Mistral, influence methods demonstrate poor performance of noise detection compared to Random baseline. Llama noise distribution on figure~\ref{fig:l-noise-distr} confirms that a lot of mislabeled training samples become influential. }
\label{tab:l-test-acc}
\resizebox{\linewidth}{!}{  
% [inline block 1: 2 envs, 59720 chars -> data_tex | \begin{tabular}{|lll|cccccccccc|} \hline...]


}

\end{table}

\clearpage

\section{Proof of Cancellation Effect Theorem}
\label{sec:a-c-proof}

% \begin{theorem}[\textbf{Invalidating \citep{Yen-2022}'s Findings} \td{Not consistent with theorem in the main paper, also $\Delta I$ needs to be defined here similarly. Make sure everything is consistent.}]
% Let $X$ be a training set. Consider:  
% \begin{enumerate}
%     \item two samples $\bar{x}_1, \bar{x}_2 \in X$, where $\bar{x}_1$ is noisy and $\bar{x}_2$ is clean;
%     \item two parameter vectors $\theta$ and $\omega$ such that their cancellation scores satisfy 
%     $C(\theta) \ll C(\omega)$ with $C(\omega)\to\infty$ for $\{\bar{x}_1,\bar{x}_2\}$;
%     \item the TracIn influence scores $I_\theta$ (based on $\theta$ alone) and $I_{\theta,\omega}$ (based jointly on $\theta,\omega$).
% \end{enumerate}
% Then there exists a validation point $\bar{x}_3$ such that
% \[
% \Delta I_{\theta,\omega}(\bar{x}_1,\bar{x}_2\mid \bar{x}_3)
%    > \Delta I_\theta(\bar{x}_1,\bar{x}_2\mid \bar{x}_3) > 0,
% \]
% i.e.\ the separation between noisy and clean samples is strictly larger under $I_{\theta,\omega}$, thereby improving discrimination.
% \end{theorem}

\begin{theorem}[\textbf{Cancellation Can Improve Influence Estimation.}]
Let $X$ be a training set. Consider:  
\begin{enumerate}
    \item two samples $\bar{x}_1, \bar{x}_2 \in X$, where $\bar{x}_1$ is noisy and $\bar{x}_2$ is clean;
    \item two parameter vectors $\theta$ and $\omega$ such that their cancellation scores satisfy 
    $C(\theta) \ll C(\omega)$ with $C(\omega)\to\infty$ for $\{\bar{x}_1,\bar{x}_2\}$;
    \item the TracIn influence scores $I_\theta$ (based on $\theta$ alone) and $I_{\theta,\omega}$ (based jointly on $\theta,\omega$);
    \item influence score distance between $\bar{x}_1$ and $\bar{x}_2$ w.r.t. weights $\Theta$ and validation samples $X'$: $\Delta_\Theta I(\bar{x}_1,\bar{x}_2 \mid X') = |I(\bar{x}_1, X', \Theta) - I(\bar{x}_2, X', \Theta)|$.
\end{enumerate}
Then there exists a validation point $\bar{x}_3$ such that: 
% \td{Need to define $\Delta I$ somewhere before here}
\[
\Delta I_{\theta,\omega}(\bar{x}_1,\bar{x}_2\mid \bar{x}_3)
   > \Delta I_\theta(\bar{x}_1,\bar{x}_2\mid \bar{x}_3),
\]
i.e.\ the separation between noisy and clean samples is strictly larger under $I_{\theta,\omega}$, thereby showing that contrary to the claim of \citep{Yen-2022}, the inclusion of weights with high cancellation can \textit{improve} influence estimates.
\end{theorem}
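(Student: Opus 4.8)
The plan is to collapse the statement to a one-line scalar inequality using the additivity of TracIn across parameters (Eqs.~\ref{eq:infl2} and~\ref{eq:tracin}), and then to exhibit an explicit gradient configuration realizing all four hypotheses. First I would specialize to a single validation point $\bar{x}_3$ (so $|X'|=1$) and treat $\theta,\omega$ as scalar parameters (the vector case being identical with products replaced by dot products). Writing $a_i=\partial l(\bar{x}_i,\theta)/\partial\theta$, $b_i=\partial l(\bar{x}_i,\omega)/\partial\omega$ for $i\in\{1,2\}$ and $a_3,b_3$ for the validation-point gradients, TracIn gives $I_\theta(\bar{x}_i,\bar{x}_3)=a_3a_i$ and $I_{\theta,\omega}(\bar{x}_i,\bar{x}_3)=a_3a_i+b_3b_i$, hence
\[
\Delta I_\theta(\bar{x}_1,\bar{x}_2\mid\bar{x}_3)=|a_3|\,|a_1-a_2|,\qquad
\Delta I_{\theta,\omega}(\bar{x}_1,\bar{x}_2\mid\bar{x}_3)=\bigl|a_3(a_1-a_2)+b_3(b_1-b_2)\bigr|.
\]
So everything reduces to choosing the scalars so the second quantity strictly exceeds the first.

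Next I would translate the cancellation hypotheses, evaluated on $\{\bar{x}_1,\bar{x}_2\}$ as in the statement, into constraints on these scalars via $C(W)=\sum_{\bar{x}}|\nabla_W l(\bar{x},W)|/|\Delta W|$ (Eq.~\ref{eq:cancel}). The condition $C(\omega)\to\infty$ forces the net update $|b_1+b_2|\to 0$ while the mass $|b_1|+|b_2|$ stays bounded below, i.e.\ $b_2\to-b_1$ with $b_1\neq 0$, so that the \emph{difference} $|b_1-b_2|\to 2|b_1|$ is bounded away from zero; meanwhile $C(\theta)$ being small is consistent with $a_1\approx a_2$ (nearly codirectional, $C(\theta)\approx 1$), making $|a_1-a_2|$ small. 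This is exactly the mechanism the theorem rests on: $\theta$ alone barely separates the noisy sample from the clean one, while the high-cancellation weight $\omega$ separates them strongly.

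I would then finish with an explicit instance. Take $a_1=1,\ a_2=1-\delta,\ b_1=1,\ b_2=-1+\varepsilon$ with small $\delta,\varepsilon\in(0,1)$; then one computes directly $C(\theta)=1$ while $C(\omega)=(2-\varepsilon)/\varepsilon\to\infty$, so $C(\theta)\ll C(\omega)$, and one assigns $\bar{x}_1$ the noisy and $\bar{x}_2$ the clean role. Invoking the existence of a validation point, pick $\bar{x}_3$ with $a_3,b_3>0$ (a single such point suffices; it is realizable, e.g.\ by a two-feature linear model, or simply posited, since influence depends only on gradients). Then $\Delta I_\theta=a_3\delta$ and $\Delta I_{\theta,\omega}=a_3\delta+b_3(2-\varepsilon)$, so $\Delta I_{\theta,\omega}-\Delta I_\theta=b_3(2-\varepsilon)>0$; the same holds in the exact limit $b_2=-b_1$ (i.e.\ $C(\omega)=\infty$). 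This yields $\Delta I_{\theta,\omega}(\bar{x}_1,\bar{x}_2\mid\bar{x}_3)>\Delta I_\theta(\bar{x}_1,\bar{x}_2\mid\bar{x}_3)$, which is the claim.

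The main obstacle is not the algebra — adding a same-sign term to an absolute value is a one-liner — but the bookkeeping around realizability and the precise reading of $C$: one must confirm that $C$ in the hypotheses is taken over $\{\bar{x}_1,\bar{x}_2\}$ (which is what makes the reduction exact) and argue that gradients with the prescribed signs and magnitudes genuinely arise from a loss on valid samples. I would also add a remark that for other validation points the inclusion of $\omega$ can instead \emph{shrink} the separation, so the theorem — and this proof — only assert the existence of a favorable $\bar{x}_3$; that existence already suffices to refute the blanket prescription of \citet{Yen-2022} to discard high-cancellation weights.
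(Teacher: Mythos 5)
Your proposal is correct in its algebra and rests on the same mechanism as the paper's proof, but it is packaged differently. The paper argues by contradiction for an \emph{arbitrary} gradient configuration satisfying the hypotheses (additionally assuming some cancellation at $\theta$, i.e.\ $a_1a_2<0$): writing $\Delta I_{\theta,\omega}=|a_3(a_1-a_2)|\,|1+\tfrac{b_1-b_2}{a_1-a_2}\tfrac{b_3}{a_3}|$, it shows that $\Delta I_{\theta,\omega}\le\Delta I_\theta$ for \emph{every} validation point would force $a_3b_3<0$ universally, and then dismisses universal anti-alignment as impossible. You instead give a direct construction: translate $C(\omega)\to\infty$ into $|b_1+b_2|\to0$ with $|b_1-b_2|$ bounded away from zero, choose explicit $a_i,b_i$ with $C(\theta)=1$, and posit one validation point with $a_3,b_3>0$, after which the strict inequality is immediate. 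What your route buys is transparency and fewer side conditions (no division by $a_3(a_1-a_2)$, no need for $a_1a_2<0$, no ``$\gg$'' manipulations); what it gives up is the universal quantification over $\bar{x}_1,\bar{x}_2,\theta,\omega$ that the theorem statement nominally asserts, since your final step fixes the training-side gradients rather than treating them as given. That gap is easy to close with the material already in your middle paragraph: for any admissible configuration, $|b_1-b_2|$ is bounded away from zero while $|a_1-a_2|$ is whatever it is, so any $\bar{x}_3$ with $a_3(a_1-a_2)$ and $b_3(b_1-b_2)$ of the same sign (or with $a_3(a_1-a_2)=0$ and $b_3\ne0$) yields the strict increase. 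Note also that both your proof and the paper's share the same unproven posit — the existence of a validation point with suitably aligned gradients (the paper says such points ``typically'' exist; you ``simply posit'' one) — so on that score you are no weaker than the original, and your closing remark that other validation points can shrink the separation matches the paper's own remark.
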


\begin{proof}
Suppose, for contradiction, that no such validation point exists. Then for all $\bar{x}_3$,
\[
\Delta I_{\theta,\omega} \leq \Delta I_\theta.
\]

Let
\[
a_i = \frac{\partial \ell(\bar{x}_i,\theta)}{\partial \theta}, 
\qquad 
b_i = \frac{\partial \ell(\bar{x}_i,\omega)}{\partial \omega}.
\]

\noindent
\textbf{Cancellation assumptions.}
Since $C(\omega)\to\infty$, the gradients at $\omega$ nearly cancel:
\[
b_1 b_2 < 0, 
\qquad |b_1+b_2| = \varepsilon \to 0^+.
\]
For $\theta$, cancellation is finite:
\[
a_1 a_2 < 0, \qquad |a_1+a_2| > 0.
\]

\noindent
\textbf{Influence relation.}
From the TracIn definition and the cancellation assumptions, without loss of generality, one obtains
\begin{align*}
\Delta I_{\theta} = |a_3 (a_1 - a_2)|,\ \Delta I_{\theta,\omega} = |\begin{bmatrix} 
    a_3 \\
    b_3
\end{bmatrix}^\top
\begin{bmatrix}
a_1 - a_2 \\
b_1 - b_2
\end{bmatrix}
| = |a_3 (a_1 - a_2)| |(1 + \frac{b_1 - b_2}{a_1 - a_2} \frac{b_3}{a_3})|
\end{align*}
\begin{align*}
C(\omega) \gg C(\theta) \implies \frac{b_1 - b_2}{\epsilon} \gg \frac{a_1 - a_2}{|a_1 + a_2|} \implies
\Delta I_{\theta,\omega} 
   \gg \Delta I_\theta\left( 1 + \frac{\varepsilon}{|a_1+a_2|}\cdot \frac{b_3}{a_3}\right).
\end{align*}

\noindent
\textbf{Contradiction.}
By assumption $\Delta I_{\theta,\omega}\leq \Delta I_\theta$, which requires
\[
\frac{\varepsilon}{|a_1+a_2|}\cdot \frac{b_3}{a_3} < 0.
\]
Since $\varepsilon>0$, this inequality forces $a_3 b_3 < 0$.

Thus the assumption can only hold if \emph{all} validation points $\bar{x}_3$ yield anti-aligned gradients under $\theta$ and $\omega$. Such universal anti-alignment is impossible, because $\theta$ and $\omega$ typically share co-directional gradients on at least some validation points. This yields a contradiction.

Therefore, there must exist a validation point $\bar{x}_3$ such that
\[
\Delta I_{\theta,\omega} > \Delta I_\theta,
\]
completing the proof.
\end{proof}

\noindent\textbf{Remark.} This shows that contrary to the claim of~\citep{Yen-2022}, inclusion of weights with high cancellation can \emph{improve} discrimination in influence scoring, although the improvement depends on the choice of validation sample. Indeed, there are also cases where $\Delta I_{\theta,\omega} < \Delta I_\theta$, so the proposition in~\citep{Yen-2022} cannot be concluded in full generality.

\clearpage

\section{Ranking and voting on Roberta and Llama}
\label{sec:a3-agg-methods}

The figures~\ref{fig:rank-vote-diffs-roberta},~\ref{fig:rank-vote-diffs-llama} show the change of Roberta-Large and Llama-3.2 1B performance when we filter the training set with rank and vote aggregations across different influence methods. Mistral 7B has the most benefit of these methods (see figure~\ref{fig:rank-vote-diffs-mistral}).  Transparent distributions correspond to statistically insignificant (Wilcoxon p-value 0.1, as we have 10 runs only) or less than 1 percent change. 

\begin{figure}[h!]
    \centering
    \includegraphics[width=\linewidth]{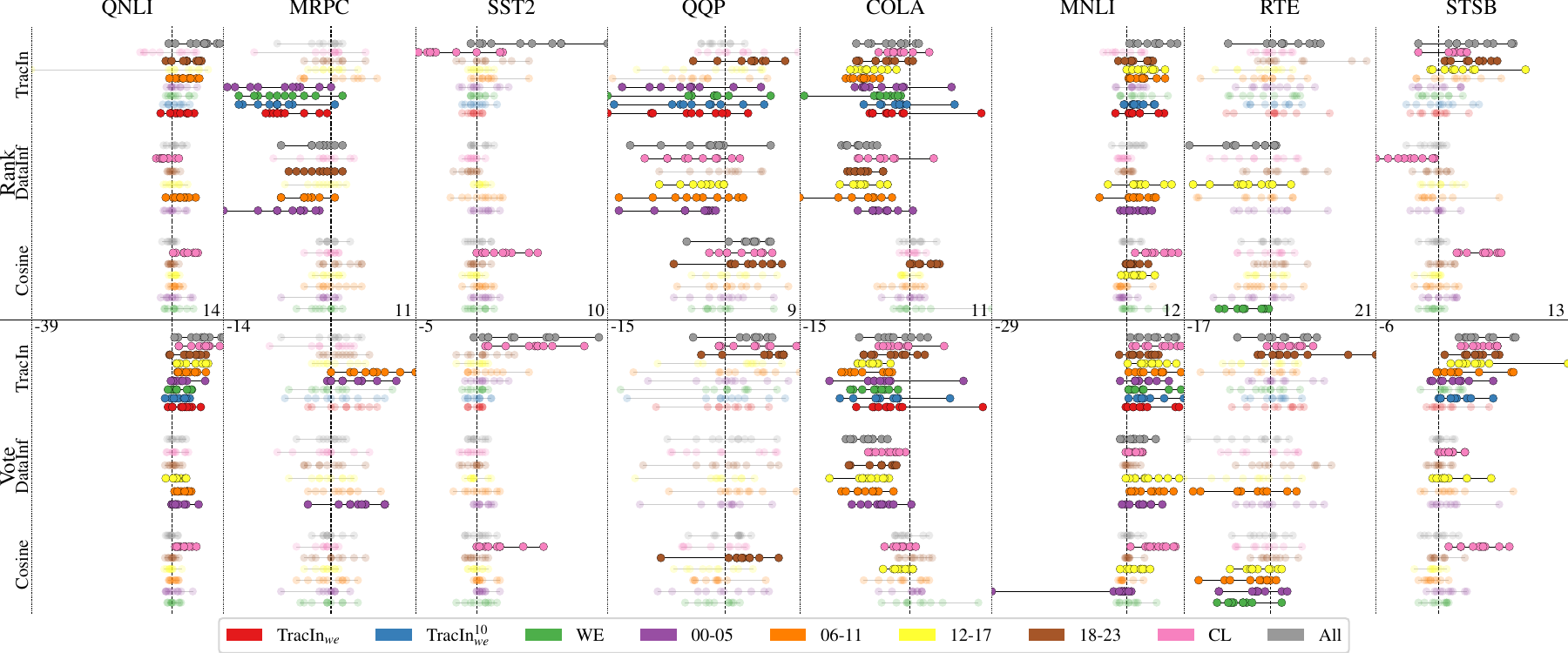}
    \caption{Roberta-Large accuracy change \% with ranking and voting compared to mean aggregation. Voting has the most positive effect on the TracIn influence method, greatly boosting the performance of the last layers (CL) scores. COLA is the ``toughest'' dataset (accuracy drop across layers and methods). The mean aggregation is still better here.  }
    \label{fig:rank-vote-diffs-roberta}
\end{figure}

\begin{figure}[h!]
    \centering
    \includegraphics[width=\linewidth]{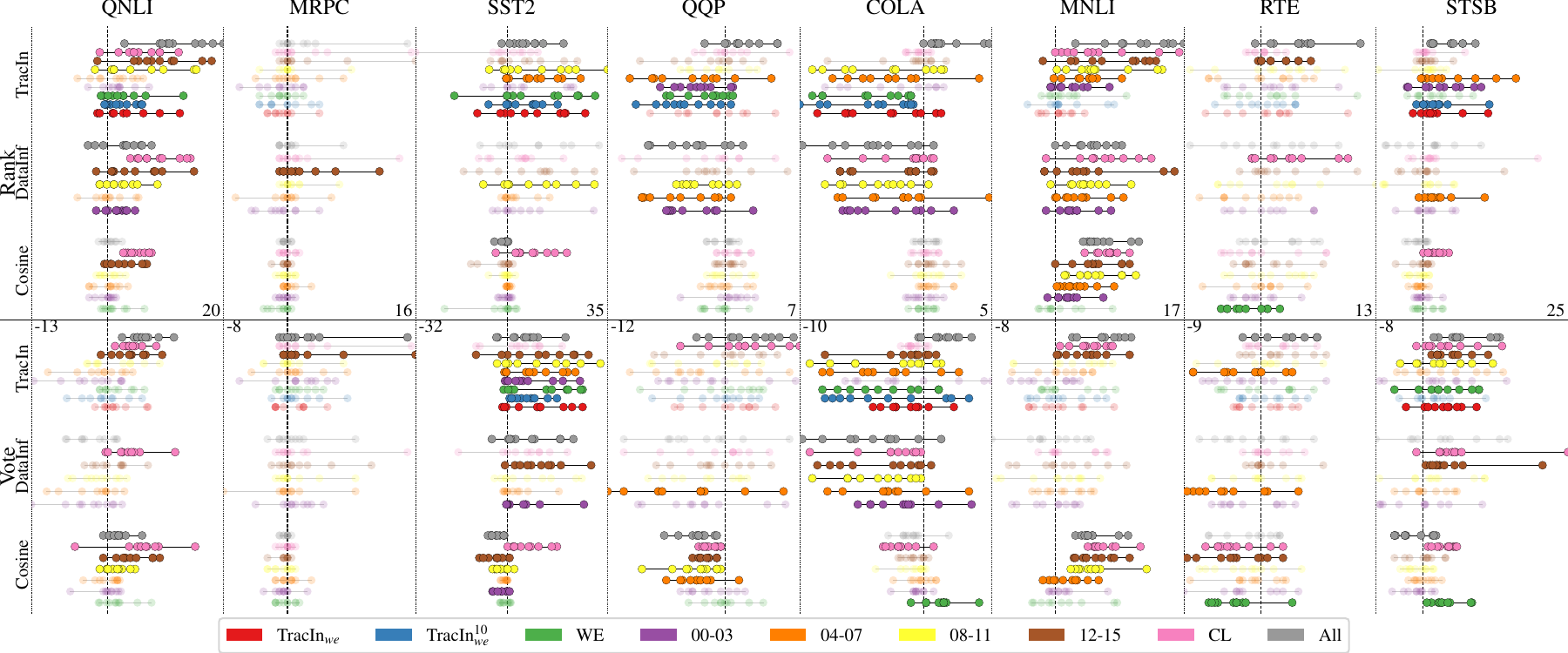}
    \caption{Llama-3.2 1B accuracy change \% with ranking and voting compared to mean aggregation. SST2 and MNLI are greatly boosted, while COLA and QQP demonstrate drops.  }
    \label{fig:rank-vote-diffs-llama}
\end{figure}

For NLI tasks, both Ranking and Voting yield consistent improvements, while on CoLA and QQP we observe a performance decline.

\clearpage

\section{Noise influence distribution on model layers }
\label{sec:a2-noise-distr}

This section presents noise distribution histograms for methods and layers.
% To visualize how scoring orders samples and where the noise is in these orders, we collect histogram information presented in figure~\ref{fig:r-noise-distr}. 
Every chart contains 10 quantiles, 450 samples each, depicting the relative noise count in the corresponding influence value range. The first and last quantiles represent the least and most influential samples, respectively.

\begin{figure}[h!]
    \centering
    \includegraphics[width=\textwidth]{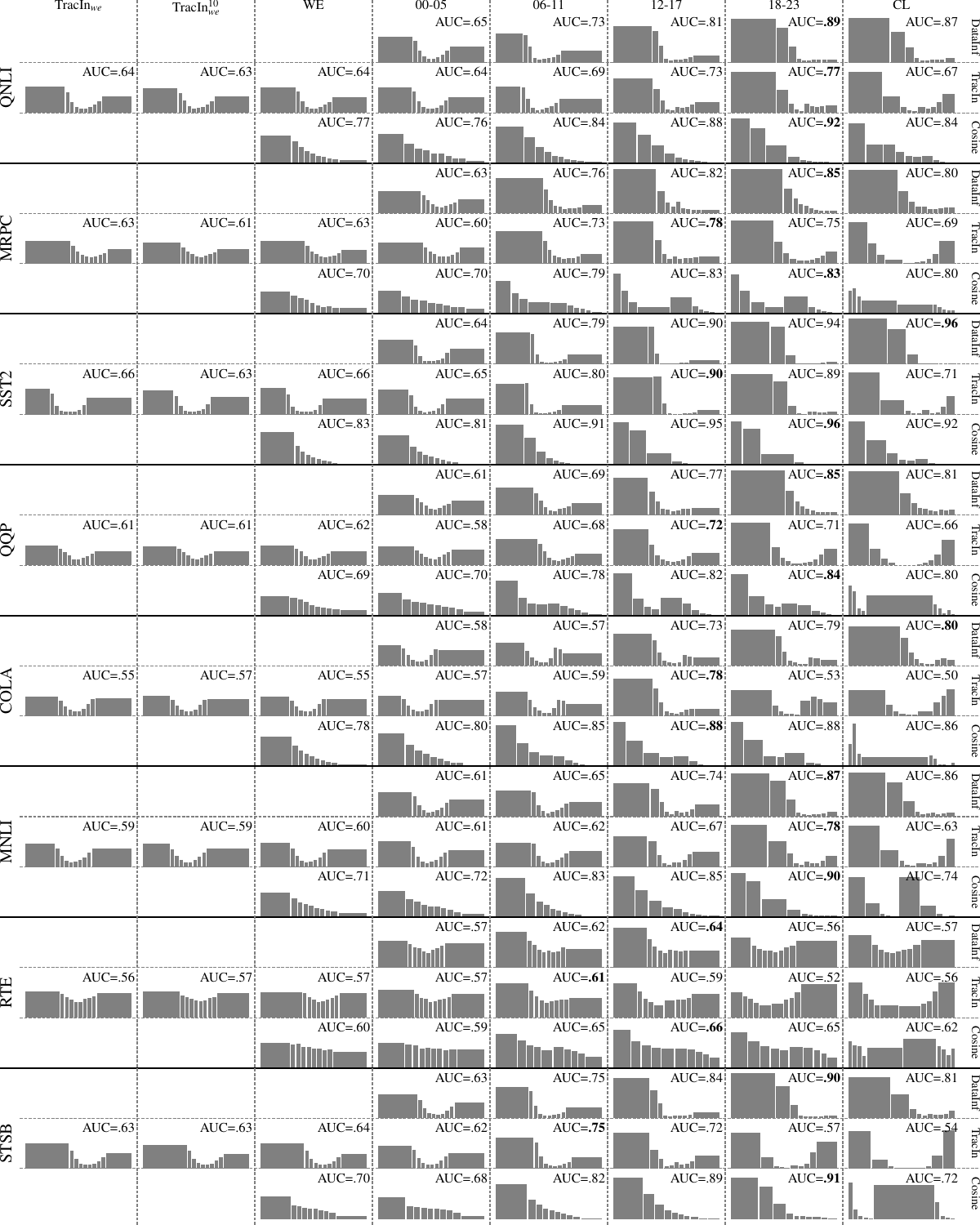}
    \caption{Roberta-Large noise distribution across mean-aggregated influence score range for influence methods, datasets, and selected layers. High bars represent a bigger percentage of noise in the corresponding quantile. The noise is usually in the first quantiles (least influential), as expected. Methods have the least discriminative power on the RTE dataset. 
    TracIn frequently has high noise in the last quantile across layers.     
    }
    \label{fig:r-noise-distr}
\end{figure}

An influence function is not useful for anomaly detection if it is closer to a uniform distribution. For selected methods, most mislabeled samples are concentrated at the beginning. However, some layers and methods have a high influential noise level. For instance, TracIn demonstrates relatively high spikes on both sides across configurations. TracIn on \texttt{WE} and \texttt{CL} has a high percentage of influential noise. Generally, a performant attribution should strive to minimize the noise entropy, producing unimodal but not uniform distributions.

The following figures~\ref{fig:r-noise-distr},~\ref{fig:l-noise-distr},~\ref{fig:m-noise-distr}
present how mislabeled samples are spread in the influence scores. In the experiments, we discard the first two quantiles.

\begin{figure}[h!]
    \centering
    \includegraphics[width=\textwidth]{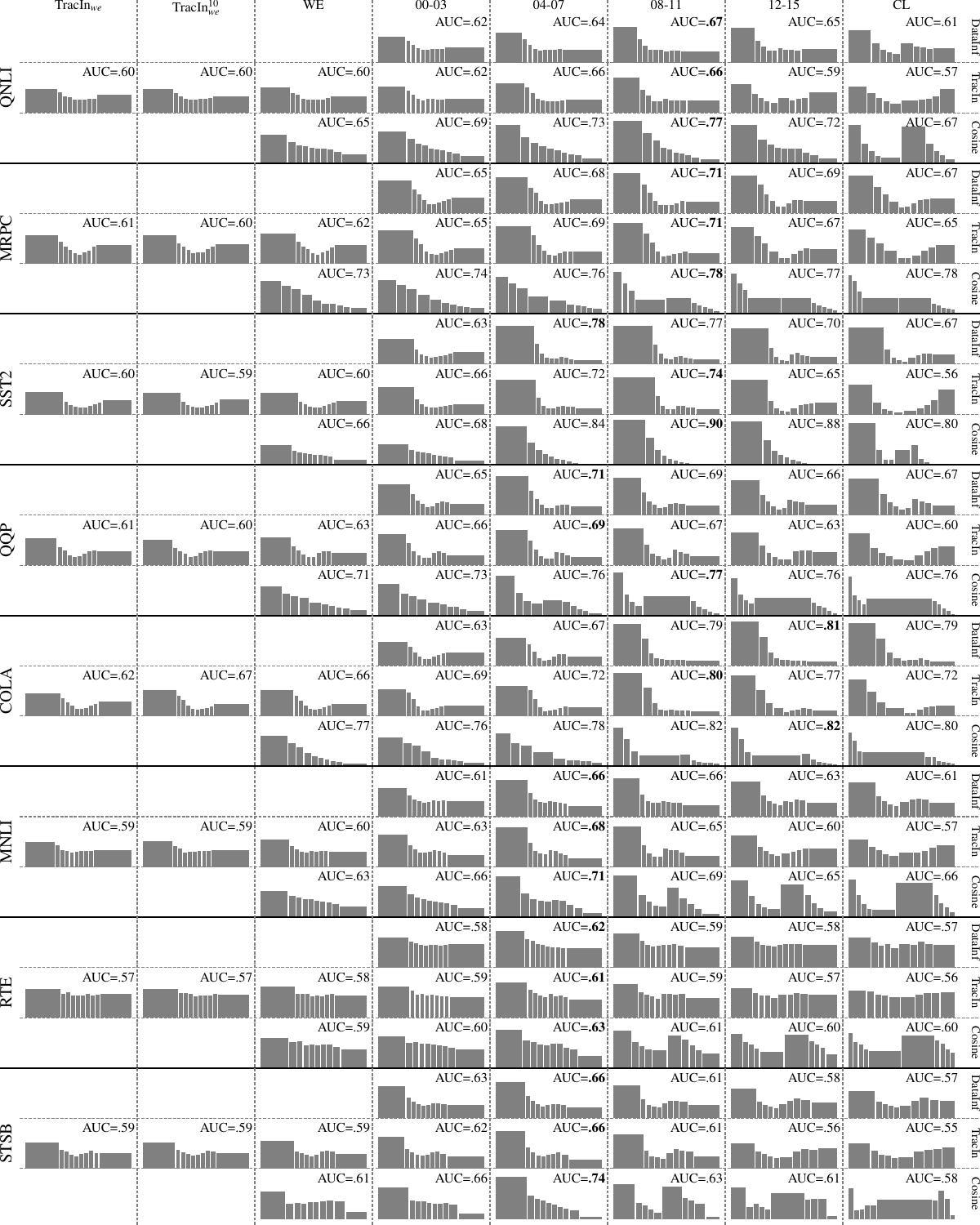}
    \caption{Llama-3.2 1B noise distribution across mean-aggregated influence score range for influence methods, datasets, and selected layers. Similar to Roberta-Large (figure~\ref{fig:r-noise-distr}), in many cases the noise in concentrated in the first quantile. However, Llama-3.2 1B scores demonstrate less discriminative power, effectively spreading the noise more across the range. In other words, noise has more influence on Llama model. 
    }
    \label{fig:l-noise-distr}
\end{figure}

\begin{figure}[h!]
    \centering
    \includegraphics[width=\textwidth]{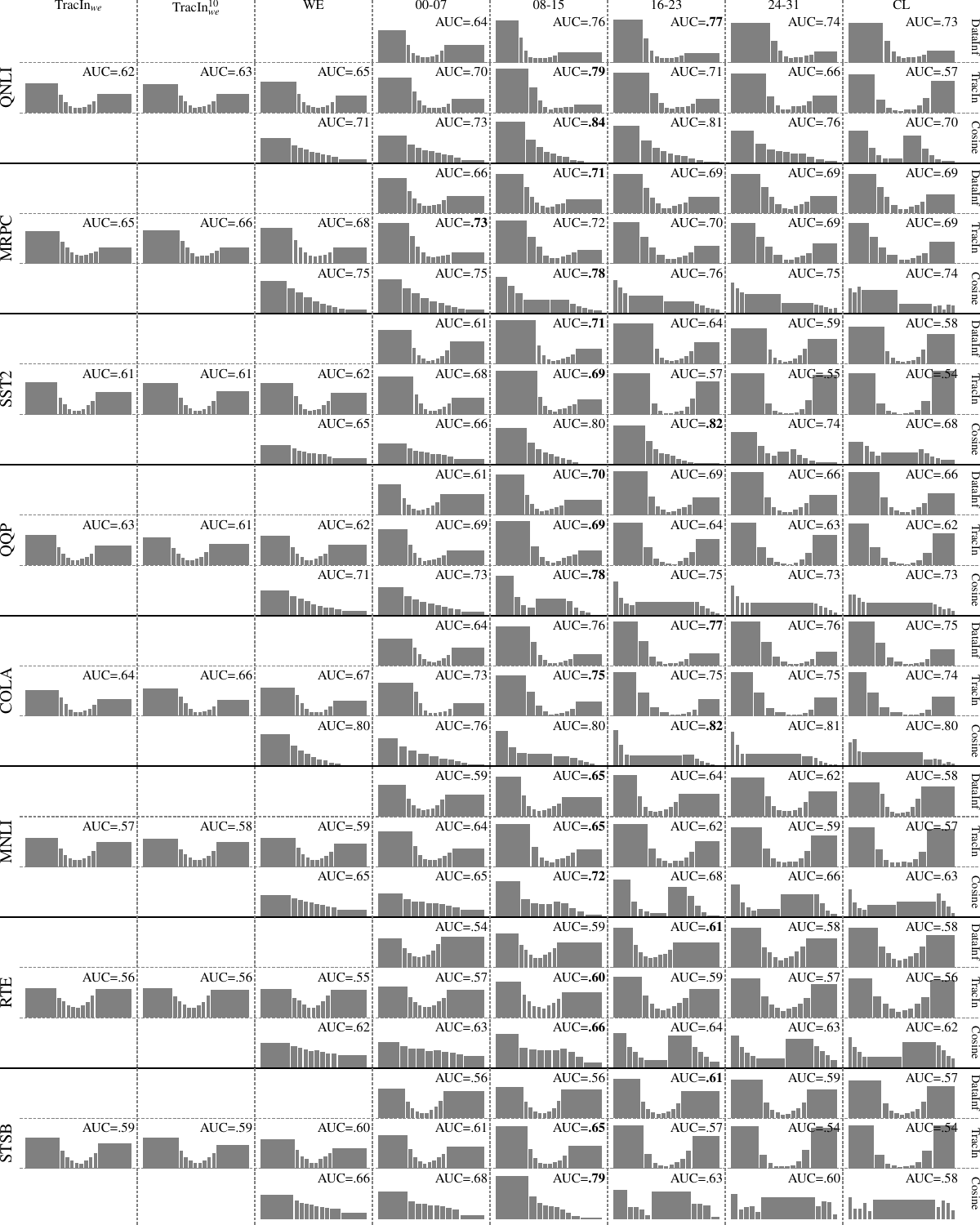}
    \caption{Mistral 7B noise distribution across mean-aggregated influence score range for influence methods, datasets, and selected layers. As with Llama (figure~\ref{fig:l-noise-distr}), we observe that noise has more influence on deeper Mistral. For instance, even for the DataInf method, we observe a high concentration of mislabeled samples in the last quantiles. Cosine similarity has a noise spike in the middle of influence range instead. The middle layers of the Mistral are less susceptible to the influence of noise across methods. 
    }
    \label{fig:m-noise-distr}
\end{figure}

% The influence value range is divided into 10 quantiles, bins of 450 training samples, and the number of noisy samples is counted. Figures demonstrate that the least influential samples contain more noise. Lower bar width indicates a dense concentration of samples. In the experiments, we discard the first two quantiles. However, figures demonstrate that noise could also concentrate on the other side of the influence range.

\clearpage

\section{Influential data identification on autoregressive datasets}
\label{sec:a-autoreg-ds} \vspace{-2mm}

\textbf{Datasets}. To further validate the RQ2 (first is not better than last), we conduct the experiment on the autoregressive datasets introduced in \cite{Kwon-2024}: \textbf{Grammar}, \textbf{Math}, and \textbf{Math (With Reasoning)}. Unlike the settings used in the main text, where detrimental samples were synthetically injected, these datasets contain semantically related instruction categories. In this context, a desirable behavior is that training instructions exert a stronger influence on test samples from the same semantic category. In this way, we assess how well influence scores \emph{recover meaningful relational structure} rather than merely detecting artificial noise. Such scenarios are highly relevant for practical applications, including knowledge probing, debugging data pipelines, and detecting harmful or misaligned training examples.

\textbf{New methods}. Additionally, we extend our experiments to incorporate more recent gradient- and activation-based influence estimation techniques. We report the following results for \textbf{Outlier Gradient}~\cite{Chhabra-2025}, Kronfluence (\textbf{EKFAC})~\cite{Grosse-2023}, and \textbf{RepSim}~\cite{Li-2025}. We detect gradient outliers by scoring training samples with OneClassSVM using an RBF kernel on a per-semantic-category basis. EKFAC scoring combines forward and backward signals to estimate influence. RepSim is based purely on hidden representations measured after each attention layer either on \emph{last} sample token or averaged across all tokens of the sample (\emph{mean}).

\textbf{Models}. The results are provided for \textbf{Qwen2.5-1.5B} (lr=3e-4) and \textbf{Mistral-7B-v0.3} (lr=1e-3).

\textbf{Metrics}. We use the evaluation metrics, \textbf{AUC} and \textbf{Recall}, from the original work~\cite{Kwon-2024}. Higher values indicate bettser alignment between the sample ordering by the influence technique and the actual semantic category of the test samples.

\textbf{Conclusions}. Figures~\ref{fig:q-ds-auc},~\ref{fig:q-ds-recall} present the AUC and Recall across layers for Qwen, and ~\ref{fig:m-ds-auc},~\ref{fig:m-ds-recall} - for Mistral.
We observe that RQ2 conclusion still holds for new datasets.

\textbf{First}, early layers consistently exhibit weaker performance in AUC and Recall for both models across methods. Similarly, the last attention layers show a drop in AUC-Recall in the majority of cases. Unfortunately, the best-performing layers vary with model-dataset-method, but generally, we observe the performance maximum for 20-24 layers. 

\textbf{Second}, for both Qwen and Mistral, the \textbf{LoRA B} module (attached to \textbf{value projections}) delivers the strongest performance across most configurations. Our \textbf{practical recommendation} from consistent observations across all evaluated models and datasets is to rely on influence scores computed on these modules. These weights yield the highest NDR, AUC, Recall, and downstream task performance. 

\begin{figure}[h!]
    \centering
    \includegraphics[width=0.88\textwidth]{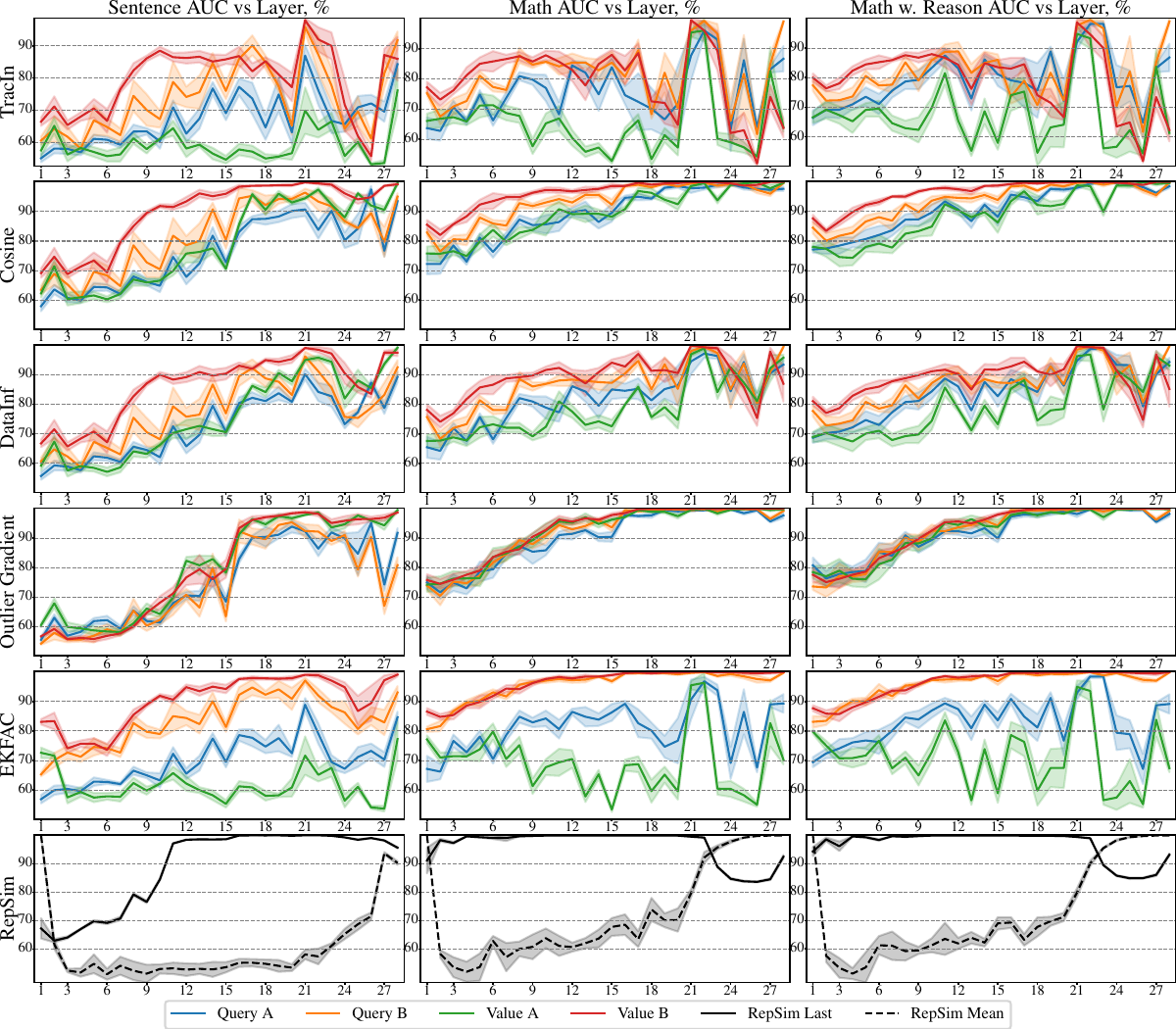}
    \caption{Qwen 2.5 1.5B AUC on influence datasets.}\vspace{-3mm}
    \label{fig:q-ds-auc}
\end{figure}
\begin{figure}[h!]
    \centering
    \includegraphics[width=0.88\textwidth]{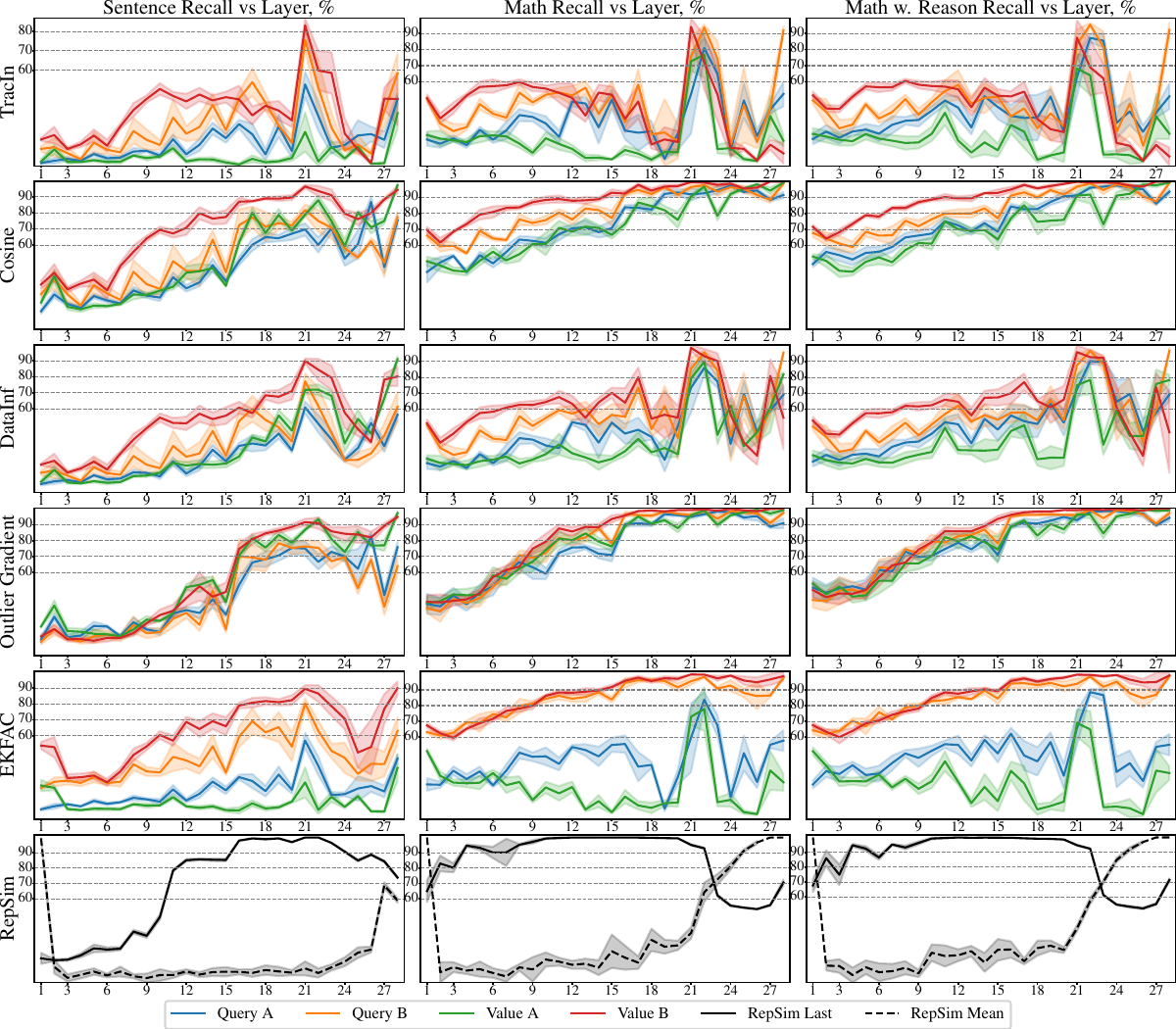}
    \caption{Qwen 2.5 1.5B Recall on influence datasets.}\vspace{-3mm}
    \label{fig:q-ds-recall}
\end{figure}

\begin{figure}[h!]
    \centering
    \includegraphics[width=0.88\textwidth]{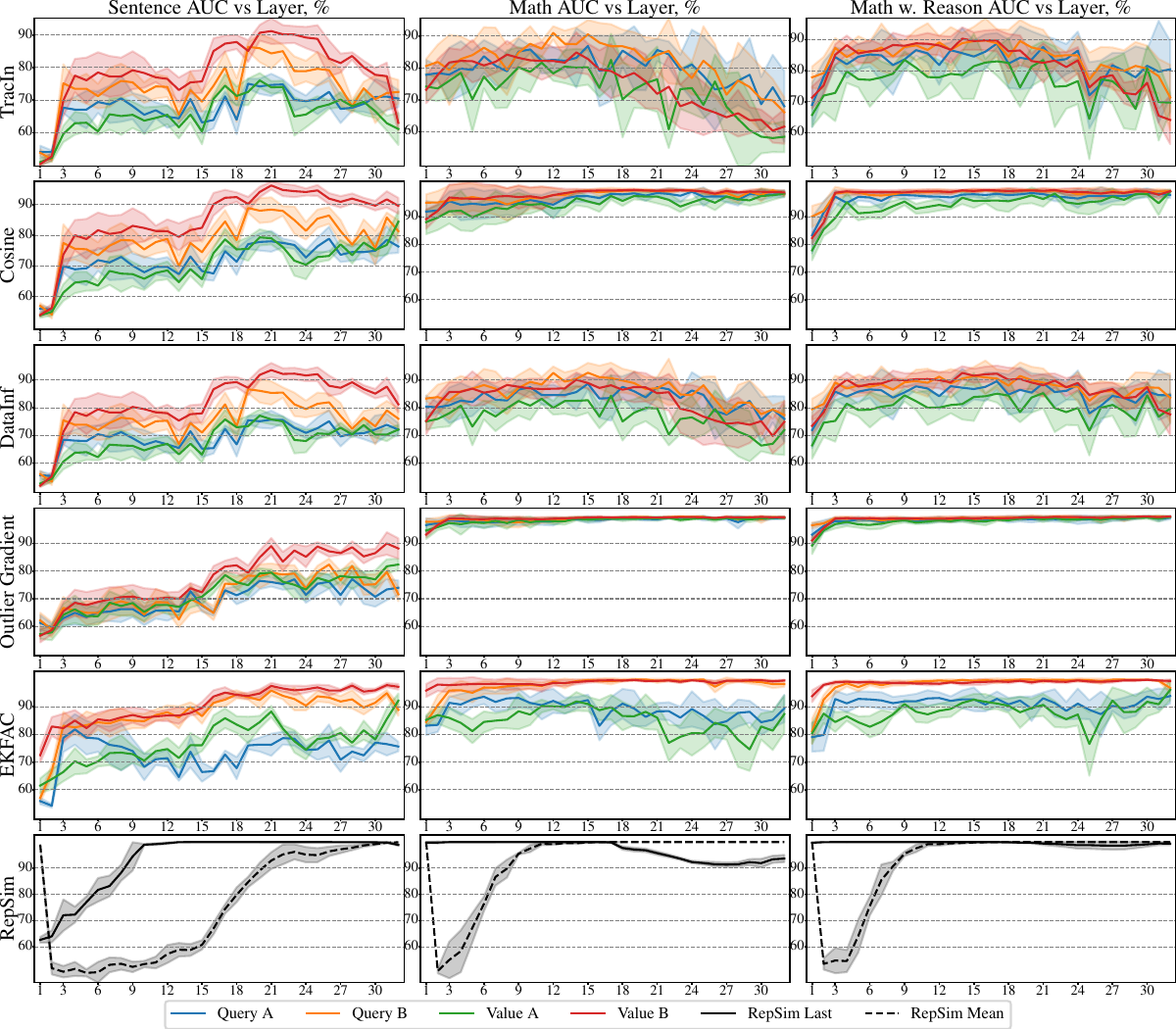}
    \caption{Mitral 7B AUC on influence datasets.}
    \label{fig:m-ds-auc}
\end{figure}
\begin{figure}[h!]
    \centering
    \includegraphics[width=0.88\textwidth]{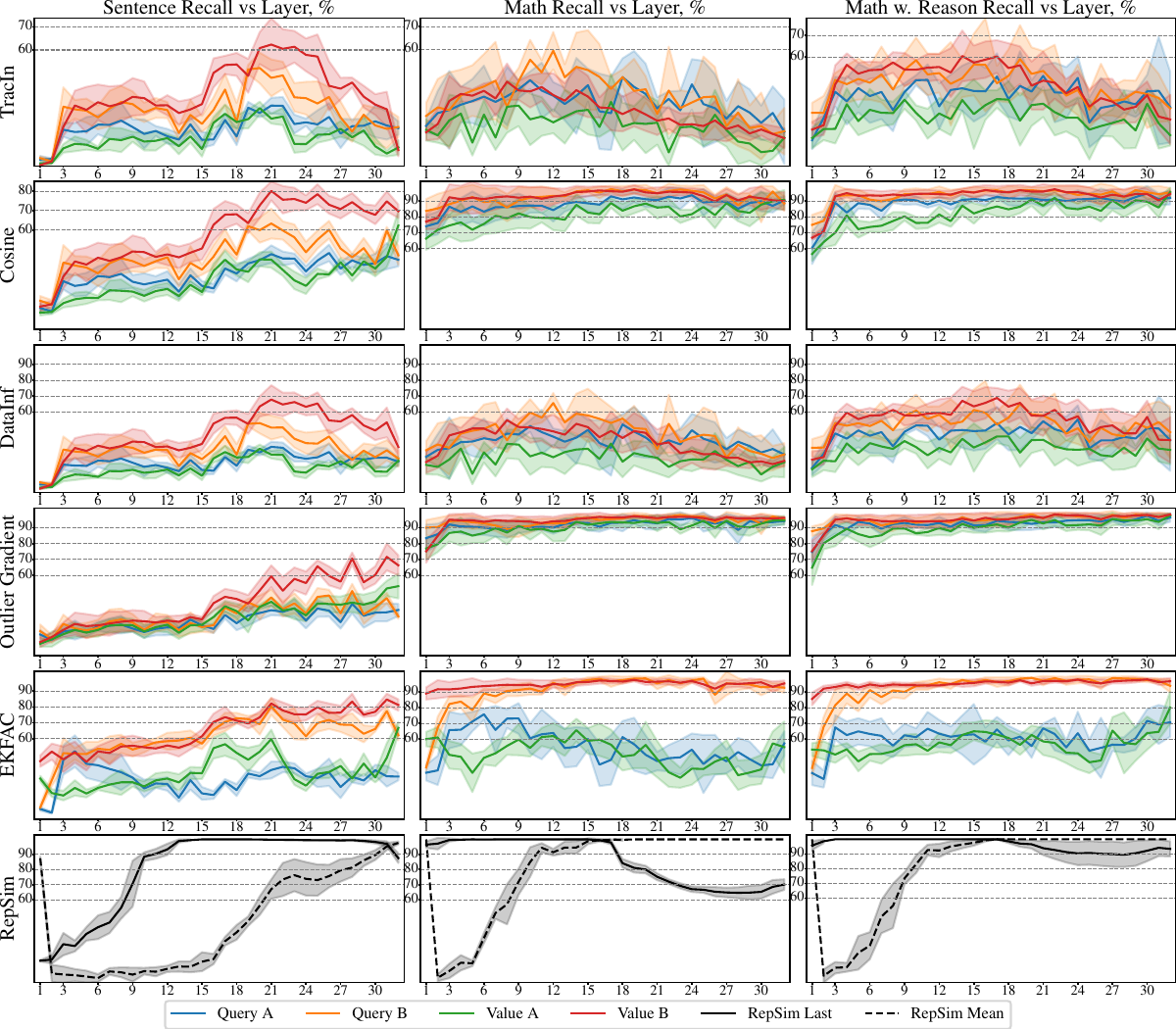}
    \caption{Mistral 7B Recall on influence datasets.}
    \label{fig:m-ds-recall}
\end{figure}

% \textbf{Third}, 

\clearpage

\section{Sample influence variation across model layers}
\label{sec:a-rank-var}

We provide the following visualization of the sample influence ranking trends 
% by the considered influence methods 
as additional interpretation of \emph{why middle layers have better performance}. Figures~\ref{fig:r-rank-var} and ~\ref{fig:q-rank-var} present these ranking, where highest rank means highest influence score, for Roberta-Large and Qwen-2.5 1.5B, respectively, and three ``kinds'' of samples: 
\begin{itemize}
    \item \textbf{Top Ranked Noise Sample} - the noisy sample with the highest influence across layers;
    \item \textbf{Avg Noise} - the averaged rank of all noisy samples per layer;
    \item \textbf{Avg Benign} - the averaged rank of ``clean'' samples per layer. 
\end{itemize}
  
\begin{figure}[h!]
    \centering
    \includegraphics[width=\textwidth]{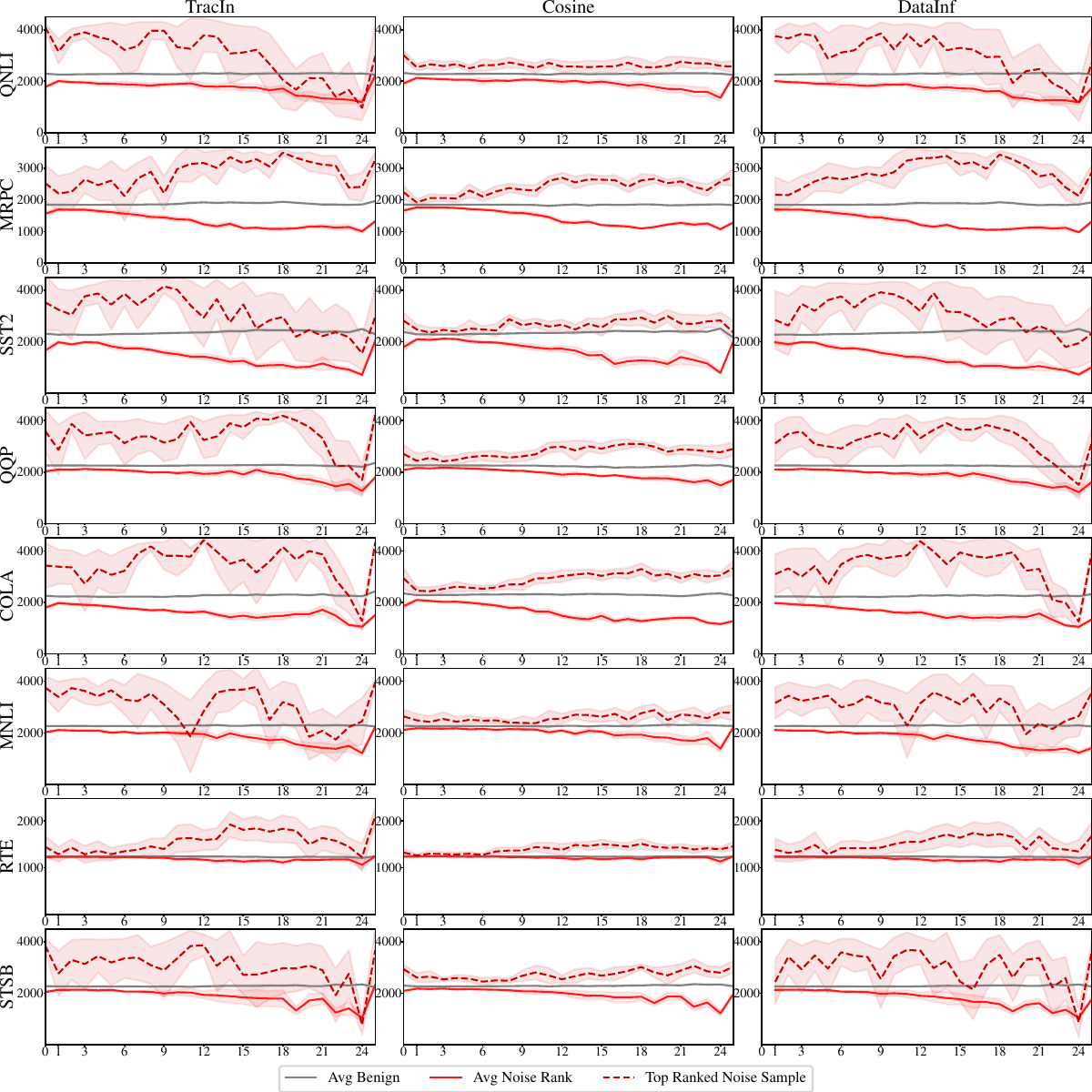}
    \caption{Roberta-Large sample influence rank variation with layer. Mean and 95\% confidence intervals are presented for 10 random seeds. }
    \label{fig:r-rank-var}
\end{figure}

\textbf{Conclusion}. We observe that the average noise influence rank drops with layer, i.e. noise becomes less influential. At the same time, the benign samples maintain the influence rank almost at same level. For Roberta-Large, the biggest difference between average ranks is observed at the last influence layers. This corresponds to the downstream performance maximum (Figure~\ref{fig:r-acc}), where the best accuracy is also observed on later attention layers. Similarly, for Qwen-2.5 1.5B, the noise average rank has minimum on layers 15-20, where the model has also the best downstream performance (Table~\ref{tab:first-vs-last}). 

At the same time, for Qwen, in many cases, the top ranked noisy sample maintain high values of influence when the average noise rank has minimum, signifying the presence of the noise outliers that may affect the performance. 

\begin{figure}[h!]
    \centering
    \includegraphics[width=\textwidth]{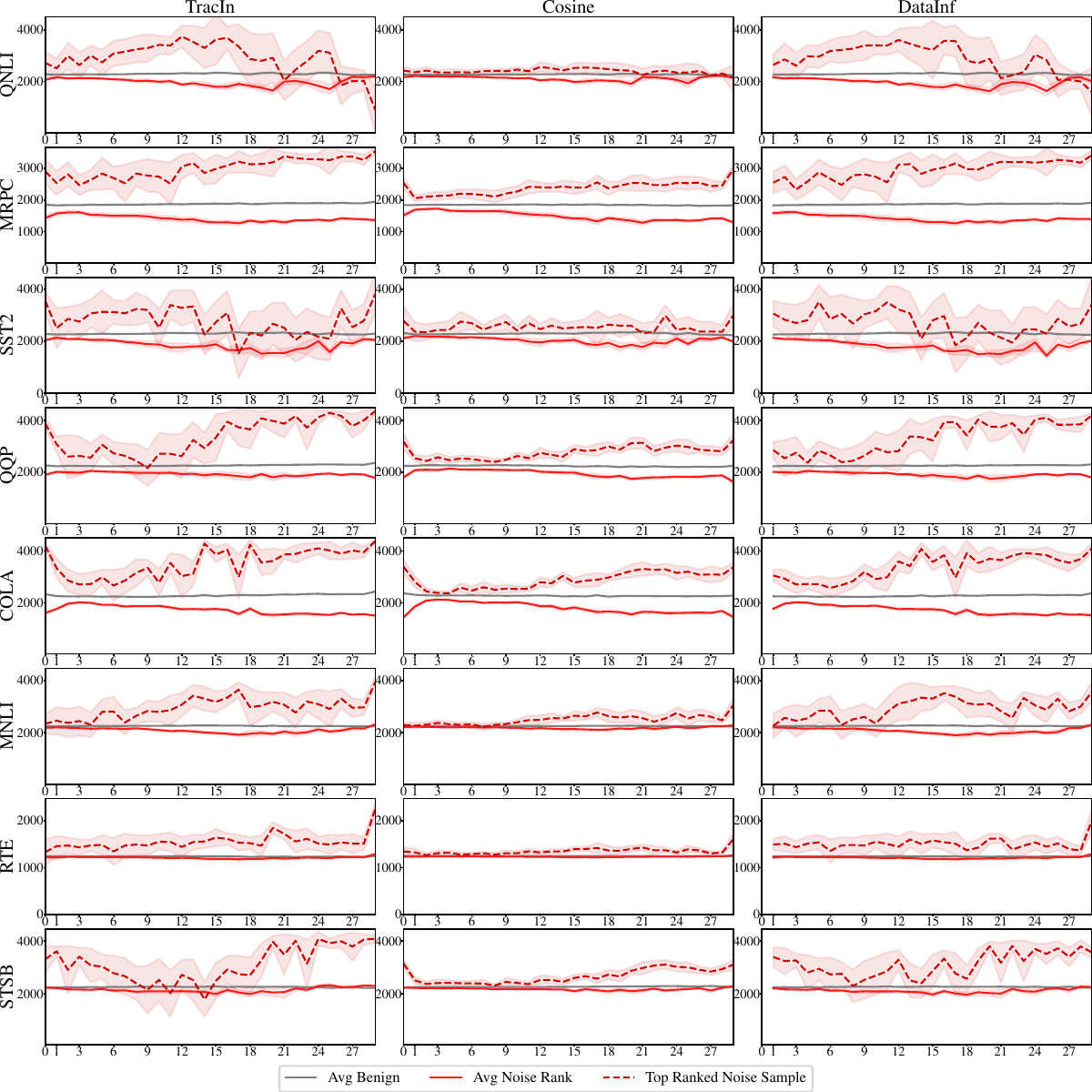}
    \caption{Qwen-2.5 1.5B sample influence rank variation with layer. Mean and 95\% confidence intervals are presented for 10 random seeds.}
    \label{fig:q-rank-var}
\end{figure}

\clearpage

\section{The effect of varying the number of votes in positional voting}
\label{sec:a-vote-k}

Formula~\ref{eq:vote-agg} contains the hyperparameter $k$, the number of votes that each test sample assigns to training samples. The reasonable question is \emph{how the noise detection rate changes with $k$} and \emph{what $k$ would correspond to the best performance}? The following figure~\ref{fig:q-vote-k} presents the NDR for $k$ in range $[10,100]$ with step $10$, for different layers and per NN module of Qwen-2.5 1.5B. The NDR value is averaged across datasets and random seeds. 

\begin{figure}[h!]
    \centering
    \includegraphics[width=0.9\textwidth]{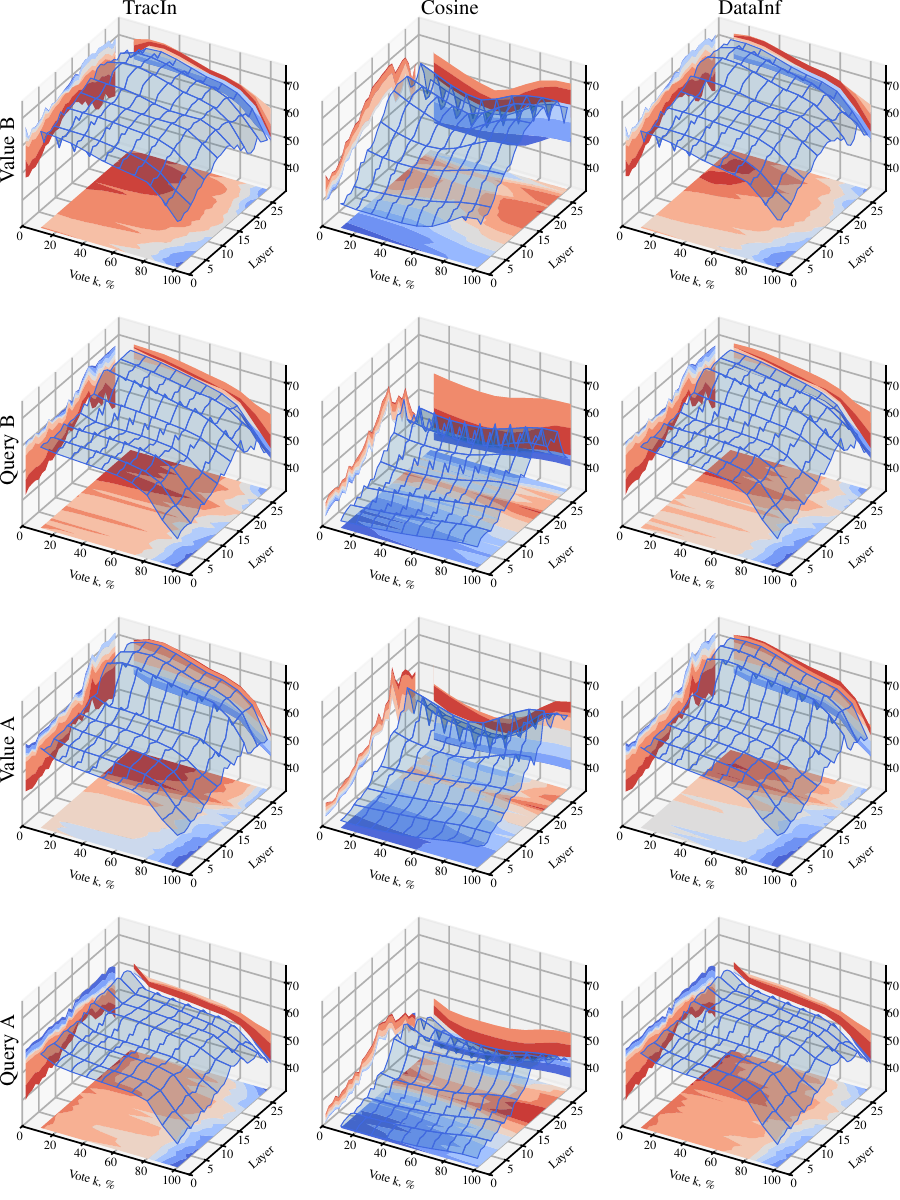}
    \caption{Qwen-2.5 1.5B NDR variation for different Vote $k$ and layer number.}
    \label{fig:q-vote-k}
\end{figure}

\textbf{Conclusion}. NDR values show a clear dependence on the choice of $k$. The range $k \in [10,50]$ yields the strongest performance for most settings. For DataInf and TracIn, NDR decreases monotonically beyond this range, suggesting reduced discriminative power at larger subset sizes. Cosine shows the opposite pattern, reaching its minimum at $k \approx 50$ and improving for larger $k$. We use $k=30$ (filtering threshold) in all main experiments. Determining how the optimal $k$ scales with the level of injected noise remains an open question.

\clearpage

\section{Llama-3.2 1B influence score performance discussion}
\label{sec:a-llama}

In Section~\ref{sec:layers-exp}, we compare the training data filtering by influence attribution with random baseline. We found, that particularly for LLaMA-3.2 1B, the influence functions fail to outperform random filtering. In this section, we 
% provide the discussion why this may happen and 
verify if the best by NDR module (Figure~\ref{fig:l-ndr-across-layers}), \textbf{LoRA B} attached to \textbf{value projection}, outperforms the baseline. 

The artifacts observed for LLaMA-3.2-1B likely arise from its reduced depth (\textbf{16} attention layers) relative to larger models (Roberta-Large - \textbf{24}, Qwen-2.5 1B - \textbf{28}, Mistral 7B - \textbf{32}). The variation in best-performing layers across architectures indicates that reliable influence attribution requires not only well-chosen probes but also effective layer localization. This layer dependence parallels findings in other domains where the success of model interventions hinges on identifying the most relevant layers.

The following table~\ref{tab:l-best} presents the accuracy on GLUE datasets after influence filtering by Value B score of \textbf{8th} and \textbf{9th} attention layers (best according to NDR). Results include \textbf{Mean}, \textbf{Rank}, and \textbf{Vote} aggregations. 

\begin{table}[h!]
\centering
\caption{Llama-3.2 1B downstream performance of 8th and 9th attention layer influence scores measured for LoRA modules B that are attached to value projections.}
\label{tab:l-best}
\resizebox{\linewidth}{!}{  
\begin{tabular}{llllllllllll}
\hline
 Method             & Agg   & Layer   &   Rank                        & QNLI                                     & MRPC                                     & SST2                                     & QQP                                      & COLA                                     & MNLI                                     & RTE                                      & STSB     \\
\hline
 Full     &       &         & 1.20 {\footnotesize $\pm$ 0.80}  & 1.0 {\footnotesize $\pm$ 0.0}           & 1.0 {\footnotesize $\pm$ 0.0}           & 1.6 {\footnotesize $\pm$ 1.9}           & 1.0 {\footnotesize $\pm$ 0.0}           & 1.0 {\footnotesize $\pm$ 0.0}           & 1.0 {\footnotesize $\pm$ 0.0}           & 2.1 {\footnotesize $\pm$ 1.0}           & 1.0 {\footnotesize $\pm$ 0.0}           \\\hline
 Cosine   & Vote  & v-b-9   & 6.80 {\footnotesize $\pm$ 4.10}  & 6.0 {\footnotesize $\pm$ 4.5}           & 8.9 {\footnotesize $\pm$ 4.1}           & 12.8 {\footnotesize $\pm$ 2.2}          & 5.1 {\footnotesize $\pm$ 2.3}           & \textbf{4.6} {\footnotesize $\pm$ 1.5 } & 7.3 {\footnotesize $\pm$ 3.5}           & 6.6 {\footnotesize $\pm$ 3.7}           & 3.5 {\footnotesize $\pm$ 1.3}           \\
 Cosine   & Vote  & v-b-8   & 6.90 {\footnotesize $\pm$ 4.40}  & \textbf{5.2} {\footnotesize $\pm$ 2.0 } & 11.2 {\footnotesize $\pm$ 5.1}          & 12.0 {\footnotesize $\pm$ 3.4}          & 5.5 {\footnotesize $\pm$ 2.2}           & 6.4 {\footnotesize $\pm$ 2.9}           & 5.2 {\footnotesize $\pm$ 3.3}           & 7.2 {\footnotesize $\pm$ 5.0}           & \textbf{2.8} {\footnotesize $\pm$ 1.0 } \\
 DataInf  & Vote  & v-b-8   & 9.20 {\footnotesize $\pm$ 4.50}  & 13.1 {\footnotesize $\pm$ 4.9}          & 7.1 {\footnotesize $\pm$ 5.1}           & 9.8 {\footnotesize $\pm$ 2.5}           & 8.0 {\footnotesize $\pm$ 4.3}           & 6.4 {\footnotesize $\pm$ 3.7}           & 11.6 {\footnotesize $\pm$ 3.4}          & 10.6 {\footnotesize $\pm$ 4.8}          & 7.0 {\footnotesize $\pm$ 2.6}           \\
 Random   &       &         & 9.40 {\footnotesize $\pm$ 6.00}  & 8.3 {\footnotesize $\pm$ 4.8}           & 10.6 {\footnotesize $\pm$ 7.1}          & 13.3 {\footnotesize $\pm$ 5.1}          & \textbf{3.8} {\footnotesize $\pm$ 2.1 } & 6.8 {\footnotesize $\pm$ 3.2}           & 14.5 {\footnotesize $\pm$ 2.2}          & 14.4 {\footnotesize $\pm$ 6.5}          & 3.6 {\footnotesize $\pm$ 1.4}           \\
 DataInf  & Vote  & v-b-9   & 9.80 {\footnotesize $\pm$ 5.20}  & 16.0 {\footnotesize $\pm$ 4.0}          & 5.5 {\footnotesize $\pm$ 5.0}           & 8.8 {\footnotesize $\pm$ 3.3}           & 7.6 {\footnotesize $\pm$ 4.8}           & 5.2 {\footnotesize $\pm$ 3.0}           & 15.0 {\footnotesize $\pm$ 3.3}          & 11.0 {\footnotesize $\pm$ 3.1}          & 9.2 {\footnotesize $\pm$ 3.3}           \\
 TracIn   & Vote  & v-b-8   & 9.90 {\footnotesize $\pm$ 4.80}  & 14.6 {\footnotesize $\pm$ 3.0}          & 7.6 {\footnotesize $\pm$ 6.6}           & 8.6 {\footnotesize $\pm$ 4.4}           & 9.4 {\footnotesize $\pm$ 4.0}           & 7.2 {\footnotesize $\pm$ 2.5}           & 11.8 {\footnotesize $\pm$ 4.3}          & 11.2 {\footnotesize $\pm$ 5.5}          & 8.9 {\footnotesize $\pm$ 3.3}           \\
 Cosine   & Rank  & v-b-8   & 10.40 {\footnotesize $\pm$ 5.10} & 6.6 {\footnotesize $\pm$ 3.7}           & 15.8 {\footnotesize $\pm$ 3.3}          & 8.1 {\footnotesize $\pm$ 5.4}           & 12.2 {\footnotesize $\pm$ 3.5}          & 14.4 {\footnotesize $\pm$ 2.5}          & \textbf{4.6} {\footnotesize $\pm$ 2.8 } & 9.2 {\footnotesize $\pm$ 5.3}           & 12.2 {\footnotesize $\pm$ 1.7}          \\
 Cosine   & Rank  & v-b-9   & 10.60 {\footnotesize $\pm$ 5.00} & 8.8 {\footnotesize $\pm$ 4.4}           & 12.7 {\footnotesize $\pm$ 5.4}          & 6.8 {\footnotesize $\pm$ 4.1}           & 12.0 {\footnotesize $\pm$ 3.4}          & 12.4 {\footnotesize $\pm$ 3.2}          & 7.6 {\footnotesize $\pm$ 4.1}           & 8.4 {\footnotesize $\pm$ 4.7}           & 16.4 {\footnotesize $\pm$ 3.9}          \\
 TracIn   & Vote  & v-b-9   & 10.60 {\footnotesize $\pm$ 5.60} & 17.4 {\footnotesize $\pm$ 3.2}          & \textbf{5.2} {\footnotesize $\pm$ 4.4 } & 9.4 {\footnotesize $\pm$ 4.2}           & 9.2 {\footnotesize $\pm$ 4.8}           & 7.0 {\footnotesize $\pm$ 3.0}           & 15.2 {\footnotesize $\pm$ 4.5}          & 12.0 {\footnotesize $\pm$ 4.9}          & 9.0 {\footnotesize $\pm$ 4.0}           \\
 Cosine   & Mean  & v-b-8   & 10.70 {\footnotesize $\pm$ 5.20} & 10.5 {\footnotesize $\pm$ 5.6}          & 15.0 {\footnotesize $\pm$ 4.6}          & 5.3 {\footnotesize $\pm$ 3.3}           & 13.4 {\footnotesize $\pm$ 3.3}          & 15.2 {\footnotesize $\pm$ 3.0}          & 9.5 {\footnotesize $\pm$ 4.2}           & 7.4 {\footnotesize $\pm$ 5.6}           & 9.2 {\footnotesize $\pm$ 3.1}           \\
 DataInf  & Mean  & v-b-8   & 11.10 {\footnotesize $\pm$ 5.60} & 10.4 {\footnotesize $\pm$ 4.3}          & 12.6 {\footnotesize $\pm$ 5.3}          & 13.6 {\footnotesize $\pm$ 6.1}          & 9.9 {\footnotesize $\pm$ 6.6}           & 8.8 {\footnotesize $\pm$ 5.2}           & 12.6 {\footnotesize $\pm$ 5.1}          & 9.3 {\footnotesize $\pm$ 6.7}           & 11.8 {\footnotesize $\pm$ 5.5}          \\
 Cosine   & Mean  & v-b-9   & 11.40 {\footnotesize $\pm$ 5.40} & 10.2 {\footnotesize $\pm$ 2.4}          & 13.4 {\footnotesize $\pm$ 4.2}          & \textbf{5.0} {\footnotesize $\pm$ 3.7 } & 13.6 {\footnotesize $\pm$ 3.2}          & 16.5 {\footnotesize $\pm$ 2.9}          & 12.5 {\footnotesize $\pm$ 5.5}          & \textbf{6.2} {\footnotesize $\pm$ 5.4 } & 13.6 {\footnotesize $\pm$ 4.7}          \\
 DataInf  & Rank  & v-b-8   & 11.70 {\footnotesize $\pm$ 4.60} & 8.6 {\footnotesize $\pm$ 4.4}           & 11.4 {\footnotesize $\pm$ 4.5}          & 10.6 {\footnotesize $\pm$ 4.3}          & 16.4 {\footnotesize $\pm$ 1.6}          & 16.2 {\footnotesize $\pm$ 2.5}          & 7.4 {\footnotesize $\pm$ 3.0}           & 13.0 {\footnotesize $\pm$ 3.8}          & 10.1 {\footnotesize $\pm$ 3.4}          \\
 TracIn   & Rank  & v-b-8   & 12.70 {\footnotesize $\pm$ 4.80} & 10.7 {\footnotesize $\pm$ 6.0}          & 12.2 {\footnotesize $\pm$ 3.1}          & 12.0 {\footnotesize $\pm$ 3.8}          & 17.0 {\footnotesize $\pm$ 2.3}          & 16.4 {\footnotesize $\pm$ 3.1}          & 7.0 {\footnotesize $\pm$ 4.0}           & 14.0 {\footnotesize $\pm$ 4.3}          & 12.6 {\footnotesize $\pm$ 3.1}          \\
 TracIn   & Mean  & v-b-8   & 12.80 {\footnotesize $\pm$ 5.70} & 13.2 {\footnotesize $\pm$ 4.8}          & 13.6 {\footnotesize $\pm$ 5.5}          & 16.6 {\footnotesize $\pm$ 3.4}          & 12.0 {\footnotesize $\pm$ 7.0}          & 8.0 {\footnotesize $\pm$ 5.2}           & 14.3 {\footnotesize $\pm$ 5.5}          & 10.4 {\footnotesize $\pm$ 6.0}          & 14.6 {\footnotesize $\pm$ 5.1}          \\
 DataInf  & Rank  & v-b-9   & 12.80 {\footnotesize $\pm$ 4.50} & 10.8 {\footnotesize $\pm$ 4.9}          & 11.7 {\footnotesize $\pm$ 4.1}          & 12.4 {\footnotesize $\pm$ 4.9}          & 15.1 {\footnotesize $\pm$ 1.7}          & 15.8 {\footnotesize $\pm$ 3.0}          & 8.4 {\footnotesize $\pm$ 4.5}           & 12.6 {\footnotesize $\pm$ 5.0}          & 15.2 {\footnotesize $\pm$ 3.0}          \\
 DataInf  & Mean  & v-b-9   & 13.10 {\footnotesize $\pm$ 5.60} & 11.6 {\footnotesize $\pm$ 6.1}          & 12.4 {\footnotesize $\pm$ 3.9}          & 16.0 {\footnotesize $\pm$ 7.3}          & 9.0 {\footnotesize $\pm$ 4.9}           & 10.6 {\footnotesize $\pm$ 4.9}          & 16.2 {\footnotesize $\pm$ 5.2}          & 13.2 {\footnotesize $\pm$ 4.7}          & 16.0 {\footnotesize $\pm$ 3.6}          \\
 TracIn   & Rank  & v-b-9   & 14.00 {\footnotesize $\pm$ 4.90} & 11.5 {\footnotesize $\pm$ 5.8}          & 11.1 {\footnotesize $\pm$ 3.3}          & 12.2 {\footnotesize $\pm$ 5.9}          & 17.2 {\footnotesize $\pm$ 3.3}          & 17.9 {\footnotesize $\pm$ 2.0}          & 10.5 {\footnotesize $\pm$ 4.3}          & 15.2 {\footnotesize $\pm$ 4.7}          & 16.5 {\footnotesize $\pm$ 2.7}          \\
 TracIn   & Mean  & v-b-9   & 14.80 {\footnotesize $\pm$ 5.30} & 15.6 {\footnotesize $\pm$ 5.7}          & 11.2 {\footnotesize $\pm$ 5.1}          & 15.2 {\footnotesize $\pm$ 7.1}          & 12.6 {\footnotesize $\pm$ 5.9}          & 13.0 {\footnotesize $\pm$ 4.3}          & 17.8 {\footnotesize $\pm$ 4.2}          & 16.0 {\footnotesize $\pm$ 3.1}          & 17.0 {\footnotesize $\pm$ 3.4}          \\
\hline
\end{tabular}}
\end{table}

We observe that Cosine scores (gradient similarities) outperform Random with Vote aggregation ($k=20$) on selected modules. The differences are statistically significant. Wilcoxon test p-value is 0.0072 for configuration (Cosine, Vote, v-b-9) and Random; p-value is 0.0085 for (Cosine, Vote, v-b-8) and Random. At the same time, (DataInf, Vote, v-b-8) and Random have the p-value 0.47, signifying that the null hypothesis of their similarity cannot be rejected. At the same time, Mean aggregation cannot outperform the baseline even on the best NDR layers. 

\textbf{Conclusion}. With the new Vote aggregation strategy, influence scoring outperforms the baselines on locations that perform best according to NDR. Our practical recommendation for LLaMA-3.2-1B, and consistently across the evaluated models, is to rely on influence scores computed on the LoRA B value projection modules. These weights yield strong NDR performance also on autoregressive datasets (Appendix~\ref{sec:a-autoreg-ds}).

% NOTE: we eventually will remove this - just for us
% \begin{sidewaystable}
% \centering
% \caption{Friedman and post-hoc Nemenyi tests for \textbf{best test set accuracy} after filtering on QNLI, MRPC, QQP dataset. We use only 5 runs (not enough for good stat result). Friedman p-value = 1.9e-7}
% \resizebox{\textwidth}{!}{  
% \input{tables/roberta/friedman/test_set_acc/best_accuracy_1-qnli-friedman}
% }

% \smallskip

% \resizebox{\textwidth}{!}{  
% \input{tables/roberta/friedman/test_set_acc/best_accuracy_1-mrpc-friedman}
% }

% \smallskip

% \resizebox{\textwidth}{!}{  
% \input{tables/roberta/friedman/test_set_acc/best_accuracy_1-qqp-friedman}
% }

% \end{sidewaystable}

%%%%%%%%%%%%%%%%%%%%%%%%%%%%%%%%%%%%%%%%%%%%%%%%%%%%%%%%%%%%

\end{document}